\theoremstyle{definition}
\newtheorem{definition}{Definition}[section]
\theoremstyle{remark}
\newtheorem*{remark}{Remark}
\newtheorem{theorem}{Theorem}
\newtheorem{lemma}{Lemma}
\newcommand{\displaycomments}
\newif\ifAddInBlue
\newif\ifShowDel
\newtheorem{corollary}[theorem]{Corollary}
\title{Dissipative Deep Neural Dynamical Systems}
\author{J\'an Drgo\v na, Aaron Tuor, Soumya Vasisht, Draguna Vrabie \\
    Pacific Northwest National Laboratory\\
	Richland, Washington USA\\
	\{first.last\}@pnnl.gov
}
\begin{document}

\maketitle
\begin{abstract}
In this paper, we provide sufficient conditions for dissipativity 
and local asymptotic stability of discrete-time dynamical systems parametrized by deep neural networks.
We leverage the representation of neural networks as pointwise affine maps, thus exposing their local linear operators and making them accessible to classical system analytic and design methods. This allows us to ``crack open the black box'' of the neural dynamical system's behavior by evaluating their dissipativity, and estimating their stationary points and state-space partitioning. We relate the norms of these local linear operators to the energy stored in the dissipative system with supply rates represented by their aggregate bias terms.
Empirically, we analyze the variance in dynamical behavior and eigenvalue spectra of these local linear operators with varying weight factorizations, activation functions, bias terms, and depths.
\end{abstract}

\maketitle

\section{Introduction}

In recent years, deep neural networks (DNN) have become ever more integrated into safety-critical and high-performance systems, including robotics, autonomous driving, and process control, where formal verification methods are desired to ensure safe operation.
Thus the need for guarantees on the behavior of DNNs fuels the recent research on safe learning-based systems. 
Dissipativity is an important property of non-linear dynamical systems that have been shown to provide connections with stability, reachability, and controllability properties, that have paramount importance in the control theory~\cite{Byrnes94,WILLEMS2007134}. 

In this work, we provide an analytical method that allows us to interpret discrete-time neural dynamical systems through the
dissipativity theory perspective.
In particular, we leverage the equivalence of DNNs with pointwise affine maps (PWA) that allows us to compute the operator norms and aggregate bias terms of deep neural dynamics with arbitrary activation functions and unlimited depth. We show that we can interpret the norms of the local linear operators as energy stored in the system with norms of the aggregate bias terms representing the supply rates.
In summary, we report  the following contributions:
\begin{itemize}
    \item Method for obtaining local linear operators of deep neural networks.
    \item Sufficient conditions for dissipativity of discrete-time deep neural dynamical system.
    \item A set of design methods for enforcing dissipativity of deep neural dynamical systems.
    \item Case study analyzing the influence of network depth, constituent linear maps, and activation functions on the dissipativity and overall dynamics of deep neural networks.
\end{itemize}

\section{Related Work}

In the last decades, dissipativity theory~\cite{WILLEMS2007134} has been extensively used in control theory to analyze the stability of a wide range of dynamical systems~\cite{HILL1980327} including
nonlinear feedback systems~\cite{Chellaboina2005}, 
stochastic dynamical systems~\cite{Rajpurohit2017},
and passive  systems~\cite{Kottenstette2010}.
Moreover, dissipativity plays a crucial role in proving the stability of economic model predictive control~\cite{Rawlings2012,Diehl2011,Muller2015}.
Most recently, authors in~\cite{Sosanya2022} in the deep learning community proposed a dissipative neural network architecture based on the Hamiltonian dynamics. 
In this work, we provide a connection between dissipativity theory and discrete-time dynamical systems parametrized by fully connected deep neural networks.

An alternative path to tractable analysis of nonlinear systems is to cast them in a linear formulation. 
Prior to DNNs coming of age,
linear parameter-varying systems (LPV)~\cite{shamma1990LPV} and linear differential inclusions (LDI)~\cite{boyd1994LDI} have been used frequently in analysis of nonlinear systems. Stability analysis of LDI  was previously applied to neural networks for system and control design problems ~\cite{tanaka1996ldi, matusik2020ldiNN3, he2014ldiNN,limanond1998ldiNN2}.
Linear analysis of neural networks has principally focused on networks with ReLU activations. \cite{arora2016ReLuDNN} prove the equivalence of deep ReLU networks with piecewise affine (PWA) maps.
\cite{hanin2019complexity, hanin2019deep} show how to compute the number of linear regions of DNNs with piecewise linear activation functions. 
\cite{Wang2016,gehr2018ai2} interpret ReLU networks as pointwise linearizations, enabling the exploration of their spectral properties.
\cite{Robinson2020} present an algorithm for computing PWA forms of deep ReLU networks.
In contrast, our presented work provides exact pointwise affine forms of DNNs for a general class of activation functions opening linear analysis methods to a wider range of DNN architectures.

% \paragraph{Stability and Spectral Analysis of Neural Networks:}
A host of works have begun to view neural networks from a dynamical systems perspective leading to new regularizations \cite{ludwig2014eigenvalue,Schmidt2021}, architectures~\cite{IMEXnet2019,NAISnet2018,HamiltonianDNNs2019,cranmer2020lagrangian,Rusch2021}, analysis methods \cite{engelken2020lyapunov, vogt2020lyapunov, Wang2016, gler2019robust}, and stability guarantees for some architectures \cite{haber2017stable}. 
% \cite{engelken2020lyapunov} calculate the Lyapunov spectrum of recurrent neural networks (RNNs), using linearization based on the Jacobian of network outputs with respect to inputs. Employing the same algorithm, \cite{vogt2020lyapunov} empirically demonstrate that quantities such as mean and maximum Lyaponov exponents can correlate with learning success.
\cite{NIPS2019_9292} propose jointly learning a non-linear dynamics model and Lyapunov function that guarantees non-expansiveness of the dynamics.
Authors in~\cite{Wang2016,gehr2018ai2} interpret ReLU networks as pointwise linearizations allowing them to explore the spectral properties. 
Other works perform eigenvalue analysis of neural network's Hessian \cite{ghorbani2019investigation, le1991eigenvalues} and Gram \cite{goel2017eigenvalue} matrices, providing insight into the optimization dynamics which they use to develop more efficient learning algorithms.

Stability of DNNs has been studied for several years in the context of neuro-controllers  \cite{vrabie2009neural, vamvoudakis2010neuroCntrl, vamvoudakis2015NNfbCntrl}. 
More recently, authors in~\cite{Pennington46342,louart2017random,liao2018dynamics} study the eigenvalues of the data covariance matrix propagation through a single layer neural network from a perspective of random matrix theory.
 \cite{Kozachkov2020} provide a
control-theory perspective by applying contraction analysis to the Jacobians of recurrent neural networks (RNN) for studying emergence of stable dynamics in neural circuits.
 \cite{Revay2019, Revay2021} use contraction analysis to
design an implicit model structure that allows for a convex parametrization of stable RNN models.
\cite{Pauli9319198} propose a method for training deep feedforward neural network with bounded Lipschitz constants.
\cite{Fazlyab2019} 
pose the Lipschitz constant estimation problem for deep neural networks as a semidefinite program (SDP).
While~\cite{erichson2021lipschitz} shows how to train continuous-time RNN with constrained Lipschitz constants to guarantee stability.
The stability and attractors of RNNs have been studied in continuous time~\cite{Zhang2014}
as well as from a neuroscience perspective~\cite{laje_robust_2013}.
\cite{bonassi2020lstm} analyze Input-to-State (ISS)  stability of LSTM networks by recasting them in the state space form.

% Most notable works rely on the Lyapunov stability theorem to guarantee network weights satisfy a pre-defined function. Eigenvalues, eigenvectors and singular values prominent in linear analysis are also widely used to build stable DNN architectures.

Various parametrizations and auxilliary loss terms have been proposed to restrict the eigenvalues of a neural network's  layer weights, $\mathbf{W}$. 
Some authors use regularization to minimize eigenvalues of $\mathbf{W}\mathbf{W^{\intercal}}$, e.g. \cite{ludwig2014eigenvalue}. Other works bound the singular values of layer weights directly via orthogonal \cite{mhammedi2017efficient}, spectral \cite{zhang2018stabilizing}, Perron-Frobenius~\cite{tuor2020constrained}, or symplectic (antisymmetric)~\cite{haber2017stable,chang2019antisymmetricrnn} parametrizations. 
 \cite{rajan2006random}  present a family of matrices with eigenvalues constrained within a circle with prescribed radius. 
\cite{lechner2020gershgorin} introduce a Gershgorin disc based regularization to ensure negative eigenvalues on the weights and prove that this regularization ensures stability.

\section{Methods} 

Our primary objective is to design provably stable yet expressive
discrete-time dynamical systems parametrized by deep neural networks.
We show that representing nonlinear activation functions as  state-dependent diagonal matrices allows us to decompose the neural network into a composition of pointwise affine maps (PWA).
We leverage this equivalence for computing the operator norms of DNNs that allows us to analyze and constrain the dissipativity leading to fixed point stability of the proposed discrete-time neural dynamics.

\subsection{Deep Neural Networks as Pointwise Affine Maps} \label{sec:dnn_lpvm}
In this section we give a formulation of a deep neural network as a pointwise affine (PWA) map. 
Consider a deep neural network
 $\mathbf{f}_{\mathbf{\theta}}: \mathbb{R}^m \rightarrow \mathbb{R}^n$ parametrized by $\theta = \{\mathbf{W}_0, \mathbf{b}_0, \ldots, \mathbf{W}_{L}, \mathbf{b}_L\}$ with hidden layers $1\leq l\leq L$  given by: 
 \label{def:dnn}
\begin{subequations}
    \label{eq:dnn}
    \begin{align}
    \mathbf{f}_{\theta}(\mathbf{x}) & =  \mathbf{W}_{L} \mathbf{h}_L + \mathbf{b}_L \\
    \mathbf{h}_{l} &= \boldsymbol\sigma(\mathbf{W}_{l-1} \mathbf{h}_{l-1} + \mathbf{b}_{l-1})  \\
    \mathbf{h}_0 &= \mathbf{x}
 \end{align}
\end{subequations}
where $\boldsymbol\sigma: \mathbb{R}^n \rightarrow \mathbb{R}^n$ represents an elementwise application of a univariate scalar activation function $\sigma: \mathbb{R} \rightarrow \mathbb{R}$ to vector elements such that $\mathbf{\boldsymbol\sigma}(\mathbf{z}) = \begin{bmatrix}\sigma(z_1)\hdots \sigma(z_n)\end{bmatrix}^{\intercal}$, where $\mathbf{z}_l = \mathbf{W}_{l} \mathbf{h}_{l} + \mathbf{b}_{l}$.

\begin{lemma}~\cite{drgona2021stochastic}
Let $\mathbf{f}_{\mathbf{\theta}}$~\eqref{eq:dnn} be a deep neural network with activation function $\boldsymbol\sigma$, then there exists a pointwise affine map $\mathbf{A}^{\star}(\mathbf{x})\mathbf{x} +  \mathbf{b}^{\star}(\mathbf{x})$ parametrized by $\mathbf{x}$ which satisfies the following:
\label{lem:lpv}
\begin{equation}
\label{eq:lpv_dnn}
        \mathbf{f}_{\theta}(\mathbf{x}) := \mathbf{A}^{\star}(\mathbf{x})\mathbf{x} +  \mathbf{b}^{\star}(\mathbf{x})
\end{equation}
where $\mathbf{A}^{\star}(\mathbf{x})$ is a state-dependent  matrix given as:
\begin{equation}
  \label{eq:dnn_LPV}
    \mathbf{A}^{\star}(\mathbf{x})\mathbf{x}= \mathbf{W}_L \boldsymbol\Lambda_{\mathbf{z}_{L-1}} \mathbf{W}_{L-1} \ldots \boldsymbol \Lambda_{\mathbf{z}_{0}} \mathbf{W}_{0} \mathbf{x}
 \end{equation}
and $\mathbf{b}^{\star}(\mathbf{x})$ is a state-dependent  vector given as:
  \begin{eqnarray}
  \label{eq:dnn_bias_recurence}
  \mathbf{b}^{\star}(\mathbf{x}) =  \mathbf{b}^{\star}_{L}, \ \
  \mathbf{b}^{\star}_{l} := \mathbf{W}_i \boldsymbol\Lambda_{\mathbf{z}_{l-1}} \mathbf{b}^{\star}_{l-1}  + \\ \mathbf{W}_i\boldsymbol\sigma_{l-1}(\mathbf{0}) +\mathbf{b}_{l}, \ \ l \in \mathbb{N}_1^{L}
 \end{eqnarray}
  with $ \mathbf{b}^{\star}_{0} = \mathbf{b}_{0}$, and  $i$ representing index of the network layer.
 Here $\boldsymbol \Lambda_{\mathbf{z}_{l}}$ represents a diagonal matrix of activation patterns dependent on a hidden states $\mathbf{z}_l$ at $l$-th layer defined as:
   \begin{subequations}
\label{eq:lambda_matrix}
\begin{align}
\boldsymbol\sigma(\mathbf{z})  & =   \boldsymbol \Lambda_{\mathbf{z}}  \mathbf{z} + \boldsymbol\sigma(\mathbf{0})  \\ 
\boldsymbol\sigma(\mathbf{z}) & =   \begin{bmatrix}
   \frac{\sigma(z_1) -\sigma(0)}{z_1} &  & \\
    & \ddots & \\
     &  &  \frac{\sigma(z_n)-\sigma(0)}{z_n} 
  \end{bmatrix}\mathbf{z} + \begin{bmatrix}\sigma(0)\\\vdots \\\sigma(0)\end{bmatrix} 
  \end{align}
   \end{subequations}
\end{lemma}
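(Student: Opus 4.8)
The plan is to reduce the whole network to a single state-dependent affine recurrence: first I handle one activation layer via \eqref{eq:lambda_matrix}, then I propagate the resulting affine form through all $L$ layers and unroll it to read off the factors in \eqref{eq:dnn_LPV} and the accumulation in \eqref{eq:dnn_bias_recurence}. To set up the building block I would verify the elementwise decomposition \eqref{eq:lambda_matrix} coordinatewise: setting the $i$-th diagonal entry of $\boldsymbol\Lambda_{\mathbf{z}}$ to $(\sigma(z_i)-\sigma(0))/z_i$ and multiplying by $z_i$ returns $\sigma(z_i)-\sigma(0)$, so adding the constant vector $\boldsymbol\sigma(\mathbf{0})$ recovers $\boldsymbol\sigma(\mathbf{z})$ exactly. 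The one point requiring care is the removable singularity at $z_i = 0$: there I would define the diagonal entry by its limit (equal to $\sigma'(0)$ when $\sigma$ is differentiable, or any finite value otherwise, as for ReLU). The choice is immaterial, since that entry is multiplied by $z_i = 0$, so the product $\boldsymbol\Lambda_{\mathbf{z}}\mathbf{z}$ vanishes in that coordinate while $\sigma(0)$ is supplied by the constant term regardless.

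Next I would substitute \eqref{eq:lambda_matrix} into the hidden-layer update in \eqref{eq:dnn}. Writing $\mathbf{z}_{l-1} = \mathbf{W}_{l-1}\mathbf{h}_{l-1} + \mathbf{b}_{l-1}$, the update becomes $\mathbf{h}_l = \boldsymbol\Lambda_{\mathbf{z}_{l-1}}\mathbf{W}_{l-1}\mathbf{h}_{l-1} + (\boldsymbol\Lambda_{\mathbf{z}_{l-1}}\mathbf{b}_{l-1} + \boldsymbol\sigma(\mathbf{0}))$, which is affine in $\mathbf{h}_{l-1}$ with state-dependent linear part $\boldsymbol\Lambda_{\mathbf{z}_{l-1}}\mathbf{W}_{l-1}$. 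I would then prove by induction on $l$ that $\mathbf{h}_l$ is affine in $\mathbf{h}_0 = \mathbf{x}$, with linear part equal to the ordered product $\boldsymbol\Lambda_{\mathbf{z}_{l-1}}\mathbf{W}_{l-1}\cdots\boldsymbol\Lambda_{\mathbf{z}_0}\mathbf{W}_0$ and constant part equal to the accumulated bias $\mathbf{b}^\star_l$ of \eqref{eq:dnn_bias_recurence}; the inductive step is precisely the affine update just derived. Finally, applying the output map $\mathbf{f}_\theta(\mathbf{x}) = \mathbf{W}_L\mathbf{h}_L + \mathbf{b}_L$ and collecting terms yields $\mathbf{A}^\star(\mathbf{x})\mathbf{x} = \mathbf{W}_L\boldsymbol\Lambda_{\mathbf{z}_{L-1}}\cdots\boldsymbol\Lambda_{\mathbf{z}_0}\mathbf{W}_0\mathbf{x}$ and $\mathbf{b}^\star(\mathbf{x}) = \mathbf{b}^\star_L$, matching \eqref{eq:lpv_dnn}.

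The routine part is the bias bookkeeping: one must track how each layer bias $\mathbf{b}_{l-1}$ and the activation offset $\boldsymbol\sigma(\mathbf{0})$ are pushed forward through the remaining linear maps and activation matrices, which is exactly what the recurrence \eqref{eq:dnn_bias_recurence} encodes (the index there should read $\mathbf{W}_l$ throughout). The only genuine obstacle is the well-definedness of $\boldsymbol\Lambda_{\mathbf{z}}$ at coordinates where $z_i = 0$; once that is settled by the limiting argument above, the identity holds for \emph{every} activation $\sigma$ with no differentiability assumption, which is precisely the generality this lemma claims over the ReLU-specific PWA results.
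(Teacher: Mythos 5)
Your proposal is correct, and there is no competing argument in the paper to set it against: the paper states this lemma without proof, importing it from the cited reference~\cite{drgona2021stochastic}, and the recurrence \eqref{eq:dnn_bias_recurence} is itself just the record of the affine unrolling you carry out, so your write-up supplies exactly the formalization the paper omits. Two remarks. First, your handling of the removable singularity at $z_i = 0$ (assign the diagonal entry of $\boldsymbol\Lambda_{\mathbf{z}}$ any finite value there, since it multiplies $z_i = 0$ and the offset $\sigma(0)$ is supplied by the constant vector) is more careful than the paper's definition \eqref{eq:lambda_matrix}, which silently divides by $z_i$; your observation that consequently no differentiability of $\sigma$ is required is precisely what justifies the claimed generality beyond ReLU-type activations. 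Second, a small bookkeeping slip: in your induction you identify $\mathbf{b}^{\star}_l$ with the constant part of $\mathbf{h}_l$, but the recurrence \eqref{eq:dnn_bias_recurence} (with the index read as $\mathbf{W}_l$, as you correctly flag) multiplies by $\mathbf{W}_l$ and adds $\mathbf{b}_l$, so $\mathbf{b}^{\star}_l$ is the constant part of the pre-activation $\mathbf{z}_l = \mathbf{W}_l \mathbf{h}_l + \mathbf{b}_l$, whereas the constant part of $\mathbf{h}_l$ is $\boldsymbol\Lambda_{\mathbf{z}_{l-1}}\mathbf{b}^{\star}_{l-1} + \boldsymbol\sigma(\mathbf{0})$. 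Since the output layer gives $\mathbf{f}_{\theta}(\mathbf{x}) = \mathbf{W}_L \mathbf{h}_L + \mathbf{b}_L = \mathbf{z}_L$, your final identities $\mathbf{A}^{\star}(\mathbf{x})\mathbf{x} = \mathbf{W}_L\boldsymbol\Lambda_{\mathbf{z}_{L-1}}\cdots\boldsymbol\Lambda_{\mathbf{z}_0}\mathbf{W}_0\mathbf{x}$ and $\mathbf{b}^{\star}(\mathbf{x}) = \mathbf{b}^{\star}_L$ remain exactly right; only the intermediate labeling needs this one-step shift.
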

% For the proof of Lemma~\ref{lem:lpv} see the Appendix~\ref{sec:appendix_b}

\subsection{Dissipative Deep Neural Dynamical Systems }
Consider the following discrete-time autonomous deep neural dynamical system:
  \begin{equation}
  \label{eq:neural_dynamics}
        \mathbf{x}_{t+1} = \mathbf{f}_{\theta}(\mathbf{x}_t)  
 \end{equation}
where $\mathbf{x}_t \in \mathbf{R}^{n_x}$ are system states,
and $\mathbf{f}_{\theta}: \mathbb{R}^{n_x} \to  \mathbb{R}^{n_x}$ is a deep neural network~\eqref{eq:dnn}.
For the simplicity of the analysis throughout this paper we assume  fully observable dynamics.

Next, we leverage the pointwise affine (PWA) reformulation~\eqref{eq:lpv_dnn}  for stability analysis of the neural dynamics~\eqref{eq:neural_dynamics} through the perspective of local linear operators and  dissipative systems theory.
In particular, using the PWA form~\eqref{eq:lpv_dnn} leads to a square state-dependent matrix $\mathbf{A}^{\star}(\mathbf{x}) \in \mathbb{R}^{n_x \times n_x}$ and bias term $\mathbf{b}^{\star}(\mathbf{x}) \in \mathbb{R}^{n_x}$ that are amenable to local linear dynamics analysis of the nonlinear dynamical system~\eqref{eq:neural_dynamics}. 

% \add{add equivalence with control affine systems with constant input then define dissipative systems for control affine case}

\begin{remark} 
Note, that the hidden layers of the network $\mathbf{f}_{\theta}(\mathbf{x}) \in  \mathbb{R}^{n_x \times n_x}$ may still be constructed using non-square weights $\mathbf{W}_l \in  \mathbb{R}^{n_x \times m}$, $n_x \ne m$, thus allowing for increased expressivity of the map $\mathbf{f}_{\theta}(\mathbf{x}_t)$. \end{remark}

\begin{definition} \textit{Dissipative Discrete-time Dynamical System~\cite{Byrnes94}.}
A discrete-time dynamical system~\eqref{eq:neural_dynamics} is said to be \textit{dissipative} if the following condition holds:
\begin{equation}
\label{eq:dissipativity}
  \mathbf{V}(\mathbf{x}_{t+1}) -  \mathbf{V}(\mathbf{x}_t) \le 
   \mathbf{s}(\mathbf{x}_t), \ \forall t \in \{0, 1, 2, \ldots \}
\end{equation}
Where $ \mathbf{V}(\mathbf{x}_t): \mathbb{R}^{n_x} \to  \mathbb{R}$
such that $ \mathbf{V}(0) = 0$, and
$ \mathbf{V}(\mathbf{x}_t) \ge 0 $ represents a non-negative storage function quantifying the energy stored internally in the system, and $\mathbf{s}(\mathbf{x}_t) : \mathbb{R}^{n_x} \to  \mathbb{R}$
is the so-called supply rate representing energy supplied to the system from the external environment.
\end{definition}

\begin{remark} 
The discrete-time dissipativity condition~\eqref{eq:dissipativity} is an extension of the discrete-time Lyapunov stability condition:
\begin{equation}
\label{eq:lyapunov_discrete}
     \mathbf{V}(\mathbf{x}_{t+1}) -  \mathbf{V}(\mathbf{x}_t) \le 0
\end{equation}
 defined for closed systems, i.e., with zero supply rate.
\end{remark}

Now we formulate the main results of the paper.
\begin{theorem}
\label{thm:dissipative_dnn}
\textit{Dissipative Deep Neural Dynamical Systems:}
 the neural dynamical system~\eqref{eq:neural_dynamics}  is dissipative over a state-space region $\mathcal{X} \subseteq \mathbb{R}^{n_x}$ with respect to the supply rate $\mathbf{s}(\mathbf{x}_t)  =  ||\mathbf{b}^{\star}(\mathbf{x}_t) ||_2  $ if 
 the local linear dynamics $  \|\mathbf{A}^{\star}(\mathbf{x}) \|_2$ of the equivalent PWA form~\eqref{eq:lpv_dnn} is a contractive map over the entire region $\mathcal{X} $. Or more formally the following must hold:
\begin{equation}
        \label{eq:dissipatitvity_condition_dnn}
        \|\mathbf{A}^{\star}(\mathbf{x}) \|_2 < 1, \  \
      \forall \mathbf{x} \in \mathcal{X} \subseteq \mathbb{R}^{n_x}.
    \end{equation} 
\end{theorem}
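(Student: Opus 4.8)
The plan is to verify the dissipativity inequality~\eqref{eq:dissipativity} directly by choosing an explicit storage function and then exploiting the pointwise affine decomposition supplied by Lemma~\ref{lem:lpv}. Because the prescribed supply rate is $\mathbf{s}(\mathbf{x}_t) = \|\mathbf{b}^{\star}(\mathbf{x}_t)\|_2$ — the norm of the aggregate bias itself, not its square — the natural candidate is the storage function $\mathbf{V}(\mathbf{x}) = \|\mathbf{x}\|_2$. This trivially satisfies the two requirements of the definition, $\mathbf{V}(0) = 0$ and $\mathbf{V}(\mathbf{x}) \ge 0$ for all $\mathbf{x}$, so it is an admissible storage function.

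First I would substitute the PWA form $\mathbf{x}_{t+1} = \mathbf{A}^{\star}(\mathbf{x}_t)\mathbf{x}_t + \mathbf{b}^{\star}(\mathbf{x}_t)$ from~\eqref{eq:lpv_dnn} into $\mathbf{V}(\mathbf{x}_{t+1})$ and apply the triangle inequality followed by submultiplicativity of the operator $2$-norm:
\[
\mathbf{V}(\mathbf{x}_{t+1}) = \|\mathbf{A}^{\star}(\mathbf{x}_t)\mathbf{x}_t + \mathbf{b}^{\star}(\mathbf{x}_t)\|_2 \le \|\mathbf{A}^{\star}(\mathbf{x}_t)\|_2\,\|\mathbf{x}_t\|_2 + \|\mathbf{b}^{\star}(\mathbf{x}_t)\|_2.
\]
Next I would subtract $\mathbf{V}(\mathbf{x}_t) = \|\mathbf{x}_t\|_2$ from both sides and regroup, obtaining
\[
\mathbf{V}(\mathbf{x}_{t+1}) - \mathbf{V}(\mathbf{x}_t) \le \big(\|\mathbf{A}^{\star}(\mathbf{x}_t)\|_2 - 1\big)\|\mathbf{x}_t\|_2 + \|\mathbf{b}^{\star}(\mathbf{x}_t)\|_2.
\]
The contraction hypothesis~\eqref{eq:dissipatitvity_condition_dnn} makes the coefficient $\|\mathbf{A}^{\star}(\mathbf{x}_t)\|_2 - 1$ strictly negative on $\mathcal{X}$, and since $\|\mathbf{x}_t\|_2 \ge 0$ the first term is non-positive; discarding it yields exactly $\mathbf{V}(\mathbf{x}_{t+1}) - \mathbf{V}(\mathbf{x}_t) \le \|\mathbf{b}^{\star}(\mathbf{x}_t)\|_2 = \mathbf{s}(\mathbf{x}_t)$, which is the claimed dissipativity condition.

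The argument is essentially a chain of norm inequalities, so there is no single heavy computational step; the real care lies in two places. The first is matching the storage function to the given supply rate: a quadratic choice $\|\mathbf{x}\|_2^2$ would produce a cross term $2(\mathbf{b}^{\star})^{\intercal}\mathbf{A}^{\star}(\mathbf{x}_t)\mathbf{x}_t$ that is not controlled by $\|\mathbf{b}^{\star}\|_2$ alone, so the linear norm $\|\mathbf{x}\|_2$ is the correct companion to this supply rate. The second, which I regard as the genuine subtlety rather than a hard obstacle, is the \emph{locality} of the statement: the one-step bound only uses $\|\mathbf{A}^{\star}(\mathbf{x}_t)\|_2 < 1$, which is guaranteed solely while $\mathbf{x}_t \in \mathcal{X}$. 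The dissipativity inequality for all $t$ should therefore be read as holding along any trajectory that remains in $\mathcal{X}$; I would flag explicitly that forward invariance of $\mathcal{X}$ is a separate question not asserted by the theorem as stated.
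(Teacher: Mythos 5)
Your proposal is correct and follows essentially the same route as the paper's proof: the same storage function $\mathbf{V}(\mathbf{x}) = \|\mathbf{x}\|_2$, the same PWA substitution, and the same chain of triangle-inequality and operator-norm bounds, with your version merely arranged in the cleaner forward direction (hypothesis $\Rightarrow$ dissipativity) rather than the paper's substitution style. Your closing remark on forward invariance of $\mathcal{X}$ is a valid caveat the paper leaves implicit, but it does not alter the argument.
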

\begin{proof}
Consider the dissipativity condition~\eqref{eq:dissipativity} with a choosen storage function $\mathbf{V}(\mathbf{x}) = \sqrt{\mathbf{x}^T \mathbf{x}}$ and supply rate $\mathbf{s}(\mathbf{x}_t)  =  ||\mathbf{b}^{\star}(\mathbf{x}_t) ||_2  $
we will prove the following dissipativity condition:
\begin{equation}
\label{eq:dnn_dissipativity_quadratic_bias_supply}
||\mathbf{x}_{t+1}||_2 - ||\mathbf{x}_t||_2  \le   ||\mathbf{b}^{\star}(\mathbf{x}_t) ||_2 
\end{equation}
Leveraging the equivalence of DNN with PWA~\eqref{eq:lpv_dnn} we get:
\begin{equation}
\label{eq:PWA_dnn_bias}
      \mathbf{x}_{t+1}  =   \mathbf{A}^{\star}(\mathbf{x}_t)\mathbf{x}_t +  \mathbf{b}^{\star}(\mathbf{x}_t) 
\end{equation}
Applying $2$-norms to~\eqref{eq:PWA_dnn_bias} we get:
\begin{equation}
\label{eq:PWA_dnn_bias_norm}
       ||\mathbf{x}_{t+1}||_2   =   ||\mathbf{A}^{\star}(\mathbf{x}_t)\mathbf{x}_t +  \mathbf{b}^{\star}(\mathbf{x}_t) ||_2
\end{equation}
Then applying norm subadditivity~\eqref{eq:operator_norm_subadd} and  submultiplicativity~\eqref{eq:operator_norm_submultiplicative} of the norms we have:
\begin{equation}
\label{eq:PWA_dnn_bias_norm_2}
      ||\mathbf{x}_{t+1}||_2   \le   ||\mathbf{A}^{\star}(\mathbf{x}_t)||_2 ||\mathbf{x}_t||_2 +  ||\mathbf{b}^{\star}(\mathbf{x}_t) ||_2 
\end{equation}
We can substitute~\eqref{eq:PWA_dnn_bias_norm_2} into the 
dissipativity condition~\eqref{eq:dnn_dissipativity_quadratic_bias_supply}:
\begin{equation}
\label{eq:dnn_dissipativity_quadratic_bias_supply_1}
||\mathbf{A}^{\star}(\mathbf{x}_t)||_2 ||\mathbf{x}_t||_2 +  ||\mathbf{b}^{\star}(\mathbf{x}_t) ||_2  - ||\mathbf{x}_t||_2  \le   ||\mathbf{b}^{\star}(\mathbf{x}_t) ||_2 
\end{equation}
Leading to:
\begin{equation}
\label{eq:dnn_dissipativity_quadratic_bias_supply_2}
||\mathbf{A}^{\star}(\mathbf{x}_t)||_2 ||\mathbf{x}_t||_2   - ||\mathbf{x}_t||_2  \le   0
\end{equation}
Now its clear that the
condition~\eqref{eq:dissipatitvity_condition_dnn} must hold $\forall \mathbf{x}_t \in \mathcal{X}$ to satisfy the  dissipativity~\eqref{eq:dnn_dissipativity_quadratic_bias_supply_2}  locally over the set $\mathcal{X}$.
% \qed
\end{proof}

\begin{corollary}
\label{theorem:glbl_stable}
\textit{Local Asymptotic Stability of Deep Neural Dynamics:}
 system~\eqref{eq:neural_dynamics} parametrized by deep neural networks is  
 locally asymptotically stable towards the origin $\bar{\mathbf{x}} = \mathbf{0}$ belonging to the interior of $\mathcal{X}$, if the equivalent PWA form~\eqref{eq:lpv_dnn} of the system~\eqref{eq:neural_dynamics} is strictly contractive over the region $\mathcal{X}$. Or more formally the following must hold:
\begin{equation}
        \label{eq:dissipatitvity_condition_dnn_2}
        \|\mathbf{A}^{\star}(\mathbf{x}) \|_2 < 1 - \frac{ \|\mathbf{b}^{\star}(\mathbf{x}) \|_2}{\|\mathbf{x}\|_2} , \  \
      \forall \mathbf{x}  \in \mathcal{X} \backslash  \{0\}, \ \bar{\mathbf{x}}  \in \mathcal{X}
    \end{equation} 
\end{corollary}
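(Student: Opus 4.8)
The plan is to treat this as a strengthening of Theorem~\ref{thm:dissipative_dnn}, recycling its pointwise-affine bound but upgrading the non-strict dissipativity inequality into a \emph{strict} Lyapunov decrease. Local asymptotic stability has two ingredients, namely Lyapunov (marginal) stability and local attraction to the equilibrium, so I would use the standard discrete-time Lyapunov argument with the same storage function $\mathbf{V}(\mathbf{x}) = \sqrt{\mathbf{x}^T\mathbf{x}} = \|\mathbf{x}\|_2$ that already appears in the proof of Theorem~\ref{thm:dissipative_dnn}. This $\mathbf{V}$ is positive definite ($\mathbf{V}(\mathbf{0})=0$ and $\mathbf{V}(\mathbf{x})>0$ for $\mathbf{x}\neq\mathbf{0}$) and continuous, hence a legitimate Lyapunov candidate for the region of attraction.

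First I would record that the origin is genuinely an equilibrium under hypothesis~\eqref{eq:dissipatitvity_condition_dnn_2}. Since $\|\mathbf{A}^\star(\mathbf{x})\|_2\geq 0$, the condition forces $\|\mathbf{b}^\star(\mathbf{x})\|_2 < \|\mathbf{x}\|_2$ on $\mathcal{X}\setminus\{\mathbf{0}\}$; letting $\mathbf{x}\to\mathbf{0}$ and using continuity of $\mathbf{b}^\star$ yields $\mathbf{b}^\star(\mathbf{0})=\mathbf{0}$, so that $\mathbf{f}_\theta(\mathbf{0})=\mathbf{A}^\star(\mathbf{0})\mathbf{0}+\mathbf{b}^\star(\mathbf{0})=\mathbf{0}$ and $\bar{\mathbf{x}}=\mathbf{0}$ is indeed a fixed point of~\eqref{eq:neural_dynamics}.

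Next I would compute the one-step increment of $\mathbf{V}$ along trajectories. Reusing the subadditive and submultiplicative bound~\eqref{eq:PWA_dnn_bias_norm_2} for the PWA form~\eqref{eq:PWA_dnn_bias}, for any $\mathbf{x}_t\in\mathcal{X}\setminus\{\mathbf{0}\}$ one obtains
\[
\mathbf{V}(\mathbf{x}_{t+1})-\mathbf{V}(\mathbf{x}_t)\le \big(\|\mathbf{A}^\star(\mathbf{x}_t)\|_2-1\big)\|\mathbf{x}_t\|_2+\|\mathbf{b}^\star(\mathbf{x}_t)\|_2 .
\]
Dividing through by $\|\mathbf{x}_t\|_2>0$ shows that the right-hand side is strictly negative precisely when~\eqref{eq:dissipatitvity_condition_dnn_2} holds, i.e. $\mathbf{V}(\mathbf{x}_{t+1})<\mathbf{V}(\mathbf{x}_t)$ for every $\mathbf{x}_t\in\mathcal{X}\setminus\{\mathbf{0}\}$. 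This already delivers Lyapunov stability: taking a sublevel set $\Omega_c=\{\mathbf{x}:\mathbf{V}(\mathbf{x})\le c\}\subseteq\mathcal{X}$, the strict decrease makes $\Omega_c$ forward invariant, so trajectories launched in $\Omega_c$ stay near the origin.

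Finally, for attraction I would argue that the monotone, lower-bounded sequence $\mathbf{V}(\mathbf{x}_t)$ converges and that its limit must be $0$. The delicate step, and the main obstacle, is ruling out convergence to a positive value. Here I would invoke continuity of $\mathbf{A}^\star$ and $\mathbf{b}^\star$ (inherited from continuity of $\sigma$, noting that the diagonal entries in~\eqref{eq:lambda_matrix} extend continuously at $z=0$ to the value $\sigma'(0)$) together with compactness: on any closed annulus $\{\,\epsilon\le\|\mathbf{x}\|_2\le c\,\}\subset\mathcal{X}$ the strictly negative decrement attains a uniform bound $-\delta<0$, so the trajectory cannot stall away from the origin and must enter $\{\|\mathbf{x}\|_2<\epsilon\}$ in finitely many steps; since $\epsilon$ is arbitrary, $\mathbf{x}_t\to\mathbf{0}$. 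This is essentially a discrete-time LaSalle argument, and the points requiring genuine care are the forward invariance of a sublevel set contained in $\mathcal{X}$ and the continuity/compactness needed to promote pointwise strict decrease to uniform strict decrease.
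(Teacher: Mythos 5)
Your proposal is correct, and its core is the same as the paper's: the same storage/Lyapunov function $\mathbf{V}(\mathbf{x})=\|\mathbf{x}\|_2$, the same pointwise-affine norm bound~\eqref{eq:PWA_dnn_bias_norm_2}, and the same rearrangement showing that condition~\eqref{eq:dissipatitvity_condition_dnn_2} forces $\|\mathbf{x}_{t+1}\|_2<\|\mathbf{x}_t\|_2$ on $\mathcal{X}\setminus\{\mathbf{0}\}$. Where you genuinely differ is in how this strict one-step decrease is promoted to asymptotic convergence. The paper divides the bound by $\|\mathbf{x}_t\|_2$, names the resulting ratio $c$, and appeals to the cited equivalence between the contraction condition~\eqref{eq:dnn_contraction_1} with $c\in[0,1)$ and the discrete-time Lyapunov condition; this implicitly treats a pointwise ratio bound as a uniform contraction constant, which is a gap, since the ratio may approach $1$ along a trajectory and pointwise strict decrease alone does not rule out convergence to a positive level. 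Your proof supplies exactly the missing pieces: verification that the origin is actually a fixed point ($\mathbf{b}^\star(\mathbf{0})=\mathbf{0}$, which the paper only notes in a remark), forward invariance of a sublevel set contained in $\mathcal{X}$ (using the interiority of the origin), and the compactness-on-annuli (LaSalle-type) argument that upgrades pointwise strict decrease to a uniform decrement $-\delta<0$, so that $\mathbf{x}_t\to\mathbf{0}$ follows legitimately. One simplification is available in that last step: you do not need continuity of $\mathbf{A}^\star$ and $\mathbf{b}^\star$ (which can in fact fail at points where some preactivation $z_i=0$ for non-differentiable activations such as \texttt{ReLU}); it suffices that the decrement $\Delta(\mathbf{x})=\|\mathbf{f}_\theta(\mathbf{x})\|_2-\|\mathbf{x}\|_2$ is continuous, which follows from continuity of $\mathbf{f}_\theta$ itself, with the hypothesis guaranteeing $\Delta(\mathbf{x})<0$ pointwise on the annulus.
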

\begin{proof}
For dynamics~\eqref{eq:neural_dynamics} to be
asymptotically stable, the state must converge to a fixed-point steady state:
\begin{equation}
\label{eq:equilibrium}
  \bar{\mathbf{x}} = \mathbf{f}_{\theta}(\bar{\mathbf{x}}) =  \lim_{t \to \infty} \mathbf{f}_{\theta}({\mathbf{x}}_t)
\end{equation} 
To guarantee asymptotic stability towards the origin $\bar{\mathbf{x}} = \mathbf{0} \in \mathcal{X}$  we can consider the following contraction condition:
\begin{equation}
\label{eq:dnn_contraction_1}
  ||\mathbf{x}_{t+1}||_2  \le c ||\mathbf{x}_t||_2 
\end{equation}
With the contraction constant $c \in [0, 1)$. As shown in~\cite{BFb0109870} it is straightforward to see that the contraction condition~\eqref{eq:dnn_contraction_1} for $c < 1$
is equivalent to a discrete time Lyapunov condition~\eqref{eq:lyapunov_discrete} with  Lyapunov function $\mathbf{V}(\mathbf{x}) = \sqrt{\mathbf{x}^T \mathbf{x}}$ leading to:
\begin{equation}
\label{eq:dnn_contraction_2}
||\mathbf{x}_{t+1}||_2 - ||\mathbf{x}_t||_2  < 0
\end{equation}

% Now assuming the specific case of deep neural networks $\mathbf{f}_{\theta}(\mathbf{x}_t) $ without bias terms and  activation functions with trivial null space we obtain the dynamics in the following equivalent PWA form~\eqref{eq:lpv_dnn}:
% \begin{equation}
% \label{eq:pwa_no_bias}
%       \mathbf{x}_{t+1}  =   \mathbf{A}^{\star}(\mathbf{x}_t)\mathbf{x}_t 
% \end{equation}
% Thus leading to:
% \begin{equation}
% \label{eq:pwa_norm_no_bias}
%       ||\mathbf{x}_{t+1}||_2   =   ||\mathbf{A}^{\star}(\mathbf{x}_t)\mathbf{x}_t  ||_2
% \end{equation}
% By applying submultiplicativity~\eqref{eq:operator_norm_submultiplicative} of the norms we get:
% \begin{equation}
% \label{eq:dnn_contraction_no_bias}
%       ||\mathbf{x}_{t+1}||_2   \le   ||\mathbf{A}^{\star}(\mathbf{x}_t)||_2 ||\mathbf{x}_t||_2 
% \end{equation}
% Now it is clear that to satisfy the contraction~\eqref{eq:dnn_contraction_2} the following must hold:
% \begin{equation}
% \label{eq:A_pwa_no_bias}
%   c  = \frac{||\mathbf{x}_{t+1}||_2}{||\mathbf{x}_t||_2 }   \le  ||\mathbf{A}^{\star}(\mathbf{x}_t)||_2  <  1
% \end{equation}
% The sufficiency of the contraction condition~\eqref{eq:A_pwa_no_bias} for the fixed-point stability follows directly from satisfying the Lyapunov condition~\eqref{eq:lyapunov_discrete}. 
% Thus proving the case for zero bias neural networks  $\mathbf{f}_{\theta}(\mathbf{x}_t) $ with $\mathbf{b}^{\star}(\mathbf{x}_t) = \mathbf{0}$.

To satisfy the contraction condition~\eqref{eq:dnn_contraction_2} for a neural dynamical system~\eqref{eq:neural_dynamics}, lets take the form~\eqref{eq:PWA_dnn_bias_norm_2} and divide the expression
by $||\mathbf{x}_t||_2$ leading to:
\begin{equation}
\label{eq:PWA_dnn_bias_norm_3}
   c = \frac{||\mathbf{x}_{t+1}||_2}{||\mathbf{x}_t||_2}  \le   ||\mathbf{A}^{\star}(\mathbf{x}_t)||_2  +   \frac{||\mathbf{b}^{\star}(\mathbf{x}_t) ||_2 }{||\mathbf{x}_t||_2}
\end{equation}
Now it is clear that to  to satisfy the contraction~\eqref{eq:dnn_contraction_2} with the steady state $\bar{\mathbf{x}} = \mathbf{0} \in \mathcal{X}$
 the following must hold: 
\begin{equation}
\label{eq:PWA_dnn_bias_norm_4}
   c = \frac{||\mathbf{x}_{t+1}||_2}{||\mathbf{x}_t||_2}  \le   ||\mathbf{A}^{\star}(\mathbf{x}_t)||_2  +   \frac{||\mathbf{b}^{\star}(\mathbf{x}_t) ||_2 }{||\mathbf{x}_t||_2} <1, \  \forall \mathbf{x}  \in \mathcal{X} \backslash \{ 0 \}.
\end{equation}
% \qed
\end{proof}

\begin{remark}
The condition~\eqref{eq:PWA_dnn_bias_norm_4} implies that for asymptotically stable deep neural dynamics~\eqref{eq:neural_dynamics} 
with steady state at the origin $\lim_{t \to \infty} \mathbf{f}_{\theta}({\mathbf{x}}_t) = \bar{\mathbf{x}} = \mathbf{0}$ the energy of the bias term  needs to vanish to zero,
i.e., $\lim_{t \to \infty} ||\mathbf{b}^{\star}(\mathbf{x}_t) ||_2 = 0$.
\end{remark}

% \add{ tread bias/supply rate as a coordinate transform for the steady state}
% \add{TODO: add remark, that with zero bias and simple coordiante shift on outputs you can obtain any steady state}

\begin{remark}
Without loss of generality, the condition~\eqref{eq:dissipatitvity_condition_dnn_2} assumes steady state at the origin $\bar{\mathbf{x}} = \mathbf{0} \in \mathcal{X}$.
One can apply simple coordinate transformation
$\mathbf{y} = \mathbf{x} + \mathbf{x}_{\text{s}}$
to model deep neural dynamics~\eqref{eq:neural_dynamics} with arbitrary valued fixed points $\mathbf{x}_{\text{s}} \neq 0  \subseteq \mathbb{R}$.
\end{remark}

\begin{corollary}
\label{thm:DNN_eq_bounded}
Assume the conditions of Theorem~\ref{thm:dissipative_dnn} with bounded supply rate and deep neural dynamics~\eqref{eq:neural_dynamics} that converge to equilibrium $\bar{\mathbf{x}}$~\eqref{eq:equilibrium}. 
Then given the system~\eqref{eq:neural_dynamics}, there exists an equilibrium point $\mathbf{x}_{\text{lb}} \le || \bar{\mathbf{x}} ||_p \le \mathbf{x}_{\text{ub}}$ with the bounds:
\begin{equation}
\label{eq:stable_eq}
  \mathbf{x}_{\text{lb}} = \frac{ \|\mathbf{b}^{\star}(\mathbf{x})\|_p }{||\mathbf{I} - \mathbf{A}^{\star}(\mathbf{x})||_p }, \ \  \mathbf{x}_{\text{ub}} = \frac{  \|\mathbf{b}^{\star}(\mathbf{x})\|_p }{1- \|\mathbf{A}^{\star}(\mathbf{x}) \|_p}.
\end{equation}
% if the condition~\eqref{eq:sufficient_1}
% is satisfied.
\end{corollary}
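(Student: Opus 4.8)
The plan is to reduce the whole statement to the algebraic fixed-point identity satisfied by the equilibrium and then apply exactly the norm inequalities already used in the proof of Theorem~\ref{thm:dissipative_dnn}. First I would combine the convergence hypothesis with the PWA representation of Lemma~\ref{lem:lpv}: evaluating~\eqref{eq:PWA_dnn_bias} at the equilibrium $\bar{\mathbf{x}} = \mathbf{f}_{\theta}(\bar{\mathbf{x}})$ gives the relation
\[
\bar{\mathbf{x}} = \mathbf{A}^{\star}(\bar{\mathbf{x}})\bar{\mathbf{x}} + \mathbf{b}^{\star}(\bar{\mathbf{x}}),
\]
equivalently $(\mathbf{I} - \mathbf{A}^{\star}(\bar{\mathbf{x}}))\bar{\mathbf{x}} = \mathbf{b}^{\star}(\bar{\mathbf{x}})$. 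Both bounds in~\eqref{eq:stable_eq} will then follow by taking $p$-norms of one of these two equivalent forms.

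For the upper bound I would take $p$-norms of the first form and apply subadditivity and submultiplicativity, as in the step leading to~\eqref{eq:PWA_dnn_bias_norm_2}, obtaining $\|\bar{\mathbf{x}}\|_p \le \|\mathbf{A}^{\star}(\bar{\mathbf{x}})\|_p \|\bar{\mathbf{x}}\|_p + \|\mathbf{b}^{\star}(\bar{\mathbf{x}})\|_p$. Collecting the $\|\bar{\mathbf{x}}\|_p$ terms yields $(1 - \|\mathbf{A}^{\star}(\bar{\mathbf{x}})\|_p)\|\bar{\mathbf{x}}\|_p \le \|\mathbf{b}^{\star}(\bar{\mathbf{x}})\|_p$; since the contraction hypothesis inherited from Theorem~\ref{thm:dissipative_dnn} makes the coefficient $1 - \|\mathbf{A}^{\star}(\bar{\mathbf{x}})\|_p$ strictly positive, dividing gives $\|\bar{\mathbf{x}}\|_p \le \mathbf{x}_{\text{ub}}$. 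For the lower bound I would instead take $p$-norms of $(\mathbf{I} - \mathbf{A}^{\star}(\bar{\mathbf{x}}))\bar{\mathbf{x}} = \mathbf{b}^{\star}(\bar{\mathbf{x}})$ and apply submultiplicativity in the opposite direction, $\|\mathbf{b}^{\star}(\bar{\mathbf{x}})\|_p \le \|\mathbf{I} - \mathbf{A}^{\star}(\bar{\mathbf{x}})\|_p \|\bar{\mathbf{x}}\|_p$, which rearranges to $\|\bar{\mathbf{x}}\|_p \ge \mathbf{x}_{\text{lb}}$. Chaining the two inequalities produces the claimed sandwich $\mathbf{x}_{\text{lb}} \le \|\bar{\mathbf{x}}\|_p \le \mathbf{x}_{\text{ub}}$.

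The main obstacle here is one of care rather than of difficulty: reconciling the spectral norm used in the hypothesis with the general $p$-norm that appears in the statement. Theorem~\ref{thm:dissipative_dnn} certifies $\|\mathbf{A}^{\star}(\bar{\mathbf{x}})\|_2 < 1$, so to legitimize the division in the upper bound I would either specialize to $p = 2$ or explicitly assume $\|\mathbf{A}^{\star}(\bar{\mathbf{x}})\|_p < 1$ in the chosen norm; this same condition guarantees, through the Neumann series, that $\mathbf{I} - \mathbf{A}^{\star}(\bar{\mathbf{x}})$ is invertible, which is what makes the equilibrium well defined and keeps both denominators in~\eqref{eq:stable_eq} nonzero. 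A secondary point I would state cleanly is that $\mathbf{A}^{\star}$ and $\mathbf{b}^{\star}$ in~\eqref{eq:stable_eq} are to be read as evaluated at the equilibrium $\bar{\mathbf{x}}$ (i.e.\ at its fixed activation pattern), since these operators are state dependent; under that reading the two bounds are exact consequences of the fixed-point equation.
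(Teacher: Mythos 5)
Your proposal is correct and follows essentially the same route as the paper's own proof: both evaluate the PWA form at the fixed point to get $\bar{\mathbf{x}} = \mathbf{A}^{\star}(\bar{\mathbf{x}})\bar{\mathbf{x}} + \mathbf{b}^{\star}(\bar{\mathbf{x}})$, derive the upper bound via the triangle inequality and the operator-norm bound followed by division by $1 - \|\mathbf{A}^{\star}\|_p$, and derive the lower bound by taking norms of the rearranged identity $(\mathbf{I} - \mathbf{A}^{\star}(\bar{\mathbf{x}}))\bar{\mathbf{x}} = \mathbf{b}^{\star}(\bar{\mathbf{x}})$. Your two added caveats---that the operators must be read as evaluated at $\bar{\mathbf{x}}$, and that the division requires $\|\mathbf{A}^{\star}\|_p < 1$ in the chosen norm (Theorem~\ref{thm:dissipative_dnn} only certifies the $2$-norm case)---are left implicit in the paper, so your write-up is if anything slightly more careful.
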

% For the proof of Corollary~\ref{thm:DNN_eq_bounded} see the Appendix~\ref{sec:eq_bounds_proof}
% 
% The stability of the deep neural dynamics~\eqref{eq:neural_dynamics} depends on both, the choice of activation functions as well as the matrix norms of the layer weights.
% The following sections elaborate on these two aspects. 

\begin{proof}
Now by substitution of the PWA form~\eqref{eq:lpv_dnn}
of the DNN~\eqref{eq:dnn} into~\eqref{eq:equilibrium}
for $\mathbf{x} = \bar{\mathbf{x}}$ we get:
\begin{equation}
\label{eq:DMM_equilibrium_pwa}
  {\mathbf{x}} = \mathbf{A}^{\star}(\mathbf{x}) {\mathbf{x}} + \mathbf{b}^{\star}(\mathbf{x})
\end{equation}
Where the matrix  $\mathbf{A}^{\star}(\mathbf{x})$ and bias vector $\mathbf{b}^{\star}(\mathbf{x})$ uniquely define the affine equilibrium dynamics.

To obtain the upper bound of the equilibrium,
we apply operator norm
to equation~\eqref{eq:DMM_equilibrium_pwa} leading to:
\begin{equation}
\label{eq:DMM_eq_norm}
 || {\mathbf{x}} ||_p = ||  \mathbf{A}({\mathbf{x}}) {\mathbf{x}} + \mathbf{b}({\mathbf{x}}) ||_p
\end{equation}
Then applying triangle inequality and operator upper bound 
$ ||\mathbf{A} \mathbf{x}||_p \le  ||\mathbf{A}||_p ||\mathbf{x}||_p $
we get:
\begin{equation}
\label{eq:DMM_eq_norm_subadd}
 || {\mathbf{x}} ||_p \le ||  \mathbf{A}^{\star}(\mathbf{x}) ||_p  ||{\mathbf{x}} ||_p + || \mathbf{b}^{\star}(\mathbf{x}) ||_p
\end{equation}
By applying straightforward algebra we have:
\begin{equation}
\label{eq:DMM_eq_norm_subadd_2}
 (1-||  \mathbf{A}^{\star}(\mathbf{x})||_p ) || {\mathbf{x}} ||_p \le   || \mathbf{b}^{\star}(\mathbf{x}) ||_p
\end{equation}
With resulting equilibrium upper bound given as:
\begin{equation}
\label{eq:DMM_eq_upper}
    ||{\mathbf{x}} ||_p \le \frac{||\mathbf{b}^{\star}(\mathbf{x}) ||_p}{ 1- || \mathbf{A}^{\star}(\mathbf{x}) ||_p  }
\end{equation}

For deriving the lower bound, we start  
with straightforward algebraic operations on~\eqref{eq:DMM_equilibrium_pwa} to obtain:
\begin{equation}
\label{eq:DMM_equilibrium_dyn}
 (\mathbf{I} -\mathbf{A}^{\star}(\mathbf{x})) {\mathbf{x}} =  \mathbf{b}^{\star}(\mathbf{x})) 
\end{equation}
For two equivalent vectors their  norms must be equal:
\begin{equation}
\label{eq:DMM_equ_norm}
  ||(\mathbf{I} - \mathbf{A}^{\star}(\mathbf{x})) {\mathbf{x}} ||_p  = ||\mathbf{b}^{\star}(\mathbf{x}) ||_p
\end{equation}
Now applying operator norm upper bound inequality 
$ ||\mathbf{A} \mathbf{x}||_p \le  ||\mathbf{A}||_p ||\mathbf{x}||_p $ to~\eqref{eq:DMM_equ_norm}
we have:
\begin{align}
\label{eq:DMM_eq_lower}
      ||(\mathbf{I} - \mathbf{A}^{\star}(\mathbf{x})) ||_p ||{\mathbf{x}} ||_p  \ge ||\mathbf{b}^{\star}(\mathbf{x}) ||_p \\
      ||{\mathbf{x}} ||_p \ge \frac{||\mathbf{b}^{\star}(\mathbf{x}) ||_p}{  ||\mathbf{I} - \mathbf{A}^{\star}(\mathbf{x})||_p  }
\end{align}
% Then from triangle inequality $||\mathbf{I} - \mathbf{A}^{\star}(\mathbf{x}) ||_p  \le ||\mathbf{I}|| + ||\mathbf{A}^{\star}(\mathbf{x}) ||_p$
% we obtain a lower bound on the equilibrium norm as follows:
% \begin{equation}
% \label{eq:DMM_eq_lower}
%     ||{\mathbf{x}} ||_p \ge \frac{||\mathbf{b}^{\star}(\mathbf{x}) ||_p}{ 1+ ||\mathbf{A}^{\star}(\mathbf{x}) ||_p  }
% \end{equation}
If the conditions of Corollary~\ref{thm:DNN_eq_bounded} are satisfied then
the  conditions~\eqref{eq:DMM_eq_upper} and~\eqref{eq:DMM_eq_lower}  hold.
% \qed
\end{proof}

\begin{corollary} 
\label{theorem:dnn_stable}
Neural neural dynamics~\eqref{eq:neural_dynamics} satisfies the dissipativity condition~\eqref{eq:dissipatitvity_condition_dnn}  if the norms of all the weights $\mathbf{W}_i$ and activation  matrices $\boldsymbol\Lambda_{\mathbf{z}_j}$~\eqref{eq:lambda_matrix} of $\mathbf{f}_{\theta}(\mathbf{x})$ are contractive:
           \begin{equation}
        \label{eq:sufficient_2}
        \|\mathbf{W}_i \|_2 < 1, \ i \in \mathbb{N}_0^L, \ \|\boldsymbol\Lambda_{\mathbf{z}_j}\|_2 \le 1, \ \forall j \in \mathbb{N}_1^L
    \end{equation}
\end{corollary}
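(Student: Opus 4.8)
The plan is to exploit the explicit factorization of the local linear operator supplied by Lemma~\ref{lem:lpv}. From~\eqref{eq:dnn_LPV} the state-dependent matrix reads
\begin{equation}
\mathbf{A}^{\star}(\mathbf{x}) = \mathbf{W}_L \boldsymbol\Lambda_{\mathbf{z}_{L-1}} \mathbf{W}_{L-1} \cdots \boldsymbol\Lambda_{\mathbf{z}_0} \mathbf{W}_0,
\end{equation}
a product of the $L+1$ weight matrices $\mathbf{W}_0,\dots,\mathbf{W}_L$ interleaved with the $L$ diagonal activation matrices $\boldsymbol\Lambda_{\mathbf{z}_0},\dots,\boldsymbol\Lambda_{\mathbf{z}_{L-1}}$. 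The entire argument reduces to one application of submultiplicativity of the operator $2$-norm to this product, so there is essentially no nontrivial computation to carry out.

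First I would apply submultiplicativity factor by factor (each conformable pair obeys $\|\mathbf{M}\mathbf{N}\|_2 \le \|\mathbf{M}\|_2 \|\mathbf{N}\|_2$, valid for rectangular matrices since the spectral norm is the largest singular value) to obtain the uniform bound
\begin{equation}
\|\mathbf{A}^{\star}(\mathbf{x})\|_2 \le \prod_{i=0}^{L} \|\mathbf{W}_i\|_2 \prod_{j=0}^{L-1} \|\boldsymbol\Lambda_{\mathbf{z}_j}\|_2 .
\end{equation}
Next I would substitute the hypotheses~\eqref{eq:sufficient_2}: every weight factor contributes a number strictly below one and every activation factor a number at most one. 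Since all factors are nonnegative, and at least the factor $\|\mathbf{W}_0\|_2$ is strictly less than one while all remaining factors are at most one, the right-hand side is bounded above by $\|\mathbf{W}_0\|_2 < 1$. This gives $\|\mathbf{A}^{\star}(\mathbf{x})\|_2 < 1$.

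Finally I would note that the weight bounds in~\eqref{eq:sufficient_2} do not depend on $\mathbf{x}$, and the activation bound $\|\boldsymbol\Lambda_{\mathbf{z}_j}\|_2 \le 1$ is assumed to hold for every admissible activation pattern; the estimate is therefore uniform over $\mathcal{X}$, which is precisely the contractivity condition~\eqref{eq:dissipatitvity_condition_dnn} invoked by Theorem~\ref{thm:dissipative_dnn}. I expect the only genuine subtlety to be preserving the \emph{strict} inequality: submultiplicativity alone yields only $\le$, so the conclusion $<1$ rests on having at least one strictly contractive factor, which the hypothesis $\|\mathbf{W}_i\|_2 < 1$ guarantees. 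A secondary point worth stating explicitly is that submultiplicativity is being applied across non-square intermediate factors, a step that is legitimate for the spectral norm but deserves a remark given that the hidden layers may use rectangular weights.
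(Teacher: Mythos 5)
Your proposal is correct and follows essentially the same route as the paper: both apply submultiplicativity of the induced $2$-norm to the factored form~\eqref{eq:dnn_LPV} of $\mathbf{A}^{\star}(\mathbf{x})$, use the hypotheses~\eqref{eq:sufficient_2} to bound the product strictly below one, and conclude that the contraction condition~\eqref{eq:dissipatitvity_condition_dnn} holds uniformly. Your two explicit remarks---that strictness of the inequality comes from the strictly contractive weight factors, and that submultiplicativity remains valid for the rectangular hidden-layer weights---are exactly the points the paper handles (the latter is the reason its proof opens with the general case $\mathbf{W}_i \in \mathbb{R}^{n_i \times m_i}$), just stated more carefully.
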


\begin{proof}
To prove the general case with  weights $\mathbf{W}_i \in \mathbf{R}^{n_i \times m_i}$ we apply the  submultiplicativity of the induced $2$-norms~\eqref{eq:operator_norm_submultiplicative} 
to upper bound the norm of products of $m$ matrices $\mathbf{A}_i$ given as:
\begin{equation}
\label{eq:Gelfand_norm}
     \|\mathbf{A}^{\star}(\mathbf{x})\|_2 = \|\mathbf{A}_1  \ldots \mathbf{A}_m  \|_2 \le
     \| \mathbf{A}_1  \|_2  \ldots \| \mathbf{A}_m  \|_2
\end{equation}
 By applying~\eqref{eq:Gelfand_norm}  
 to~\eqref{eq:dnn_LPV} with 
 $ \|\mathbf{W}_i\|_2 < 1, \ \forall i \in \mathbb{N}_0^L$, $\|\boldsymbol\Lambda_{\mathbf{z}_j}\|_2  \le 1,  \ \forall j \in \mathbb{N}_1^L$, it yields $\|\mathbf{A}^{\star}(\mathbf{x})\|_2 < 1$, $\forall \mathbf{x} \in \mathbb{R}^{n_x}$.
Now for arbitrary point $\mathbf{x}$, the local linear operator $\mathbf{A}^{\star}(\mathbf{x})$ is a contractive map and satisfies the sufficient condition~\eqref{eq:dissipatitvity_condition_dnn}. 
% \qed
\end{proof}

\begin{remark}
The norm upper bound~\eqref{eq:Gelfand_norm} implies the sufficiency of a relaxed condition~\eqref{eq:sufficient_2}
where at least one of the matrix norms is strictly below one $\| \mathbf{A}_i  \|_2 <1$, while the others are less or equal to one $\| \mathbf{A}_j  \|_2 \le 1, \forall j \in \mathbb{N}_1^L \setminus i$.
\end{remark}

% \add{TODO: add non autonomous dynamics formulation and corollary}

\subsection{Practical Design of Dissipative Deep Neural Dynamics} \label{sec:maps}

As given in Corollary~\ref{theorem:dnn_stable}, if the product of weight and activation scaling matrices is a contraction, the global stability of deep neural dynamics~\eqref{eq:neural_dynamics} is guaranteed. The following discussion summarizes some practical methods for enforcing these conditions in deep neural networks.

\paragraph{Lipschitz Continuous Activation Functions}
As part of the sufficient stability conditions in Theorem~\ref{theorem:dnn_stable}, the scaling matrices $\boldsymbol\Lambda_{\mathbf{z}}$ generated by activation functions
must yield non-expanding maps $||\boldsymbol\Lambda_{\mathbf{z}}||_2 \le 1$ for any $\mathbf{z}$. 
Activation function $\mathbf{\boldsymbol\sigma}(\mathbf{z})$
in the network $\mathbf{f}_{\theta}(\mathbf{x})$  is   stable over the entire domain $\text{dom}(\mathbf{\boldsymbol\sigma}(\mathbf{z}))$, 
if it is Lipschitz continuous with constant $K \le 1$.
Lipschitz continuity~\eqref{eq:Lipschitz} with $K \le 1$ implies 
 contraction which  implies asymptotic stability~\eqref{eq:asymptotic} via Banach fixed point Theorem~\ref{thm:banch}.
 Observe that these conditions force all diagonal entries of
 the activation scaling matrix $\boldsymbol\Lambda_{\mathbf{z}}$~\eqref{eq:lambda_matrix} to satisfy $|\frac{ \sigma(z_i)}{z_i}| < 1$.  And because  $\boldsymbol\Lambda_{\mathbf{z}}$ is a diagonal matrix, its diagonal entries represent its real eigenvalues with bounded spectral norm $||\boldsymbol\Lambda_{\mathbf{z}}||_2 <1$.
     Fig.~\ref{fig:activations}
plots common activation functions with guaranteed  stability\footnote{Stable activations: \texttt{SoftExponential}, \texttt{BLU}, \texttt{PReLU}, \texttt{ReLU}, \texttt{GELU}, \texttt{RReLU},  \texttt{Hardtanh}, \texttt{ReLU6}, \texttt{Tanh}, \texttt{ELU}, \texttt{CELU}, \texttt{Hardshrink}, \texttt{LeakyReLU}, \texttt{Softshrink}, \texttt{Softsign}, \texttt{Tanhshrink}},  and activations with unstable regions\footnote{Activations with unstable regions: \texttt{APLU}, \texttt{PELU}, \texttt{Hardswish}, \texttt{SELU}, \texttt{LogSigmoid}, \texttt{Softplus}, \texttt{Hardswish}}. Despite having locally unstable regions, some activations\footnote{Unstable activations with stable regions of attraction: \texttt{Sigmoid}, \texttt{Hardsigmoid}} are clamping the edges of the domain, thus generating contractive maps towards regions the central region of attraction with non-zero volume. 
    \begin{figure*}[htbp!]
        \centering
        \includegraphics[width=0.21\textwidth, trim=60 10 80 0, clip]{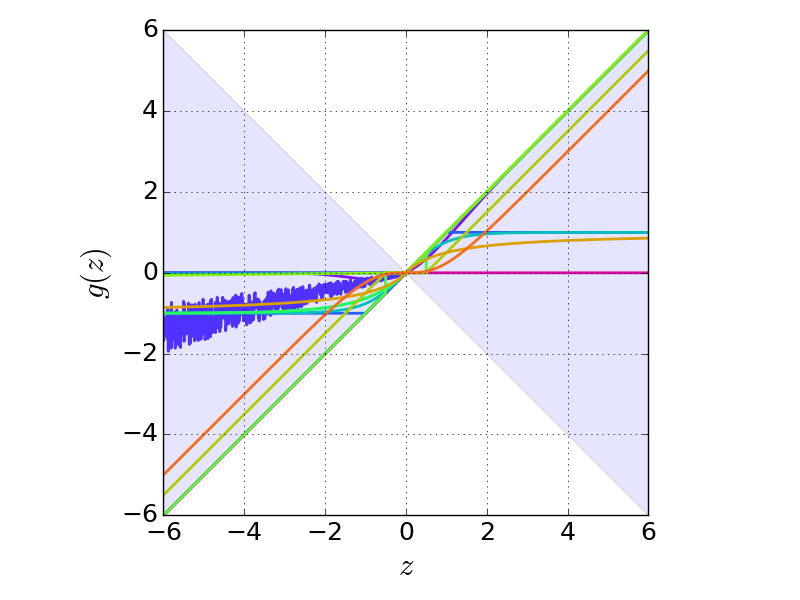}
        \includegraphics[width=0.21\textwidth, trim=60 10 80 0, clip]{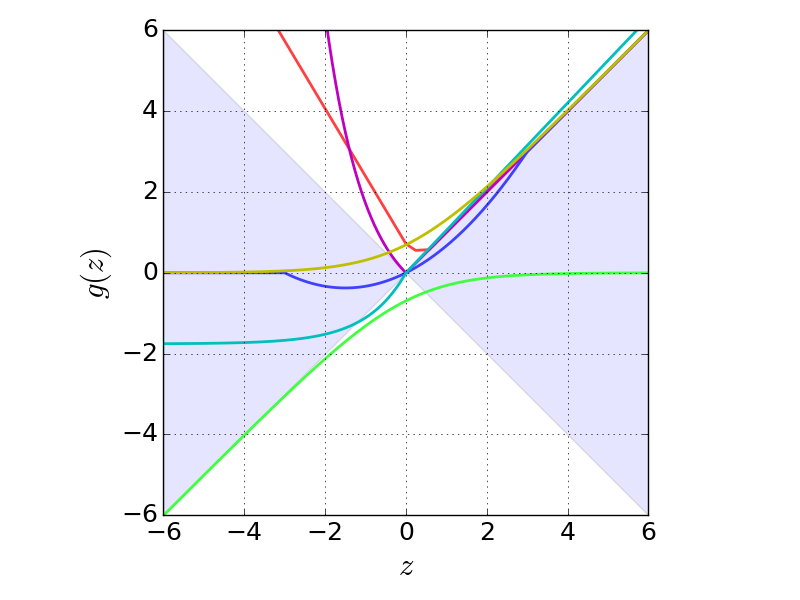}
        \includegraphics[width=0.21\textwidth, trim=60 10 80 0, clip]{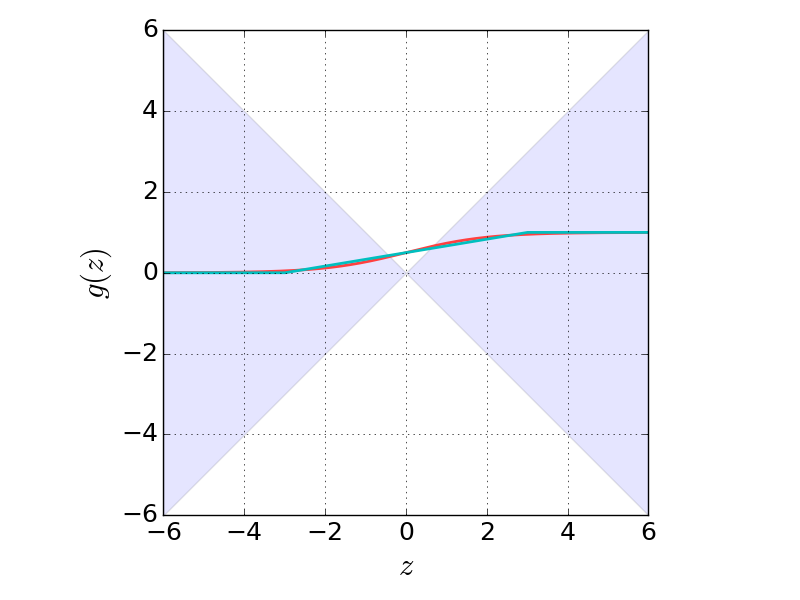}
        \caption{Activation functions with stability guarantees (left), with unstable regions (middle), and with unstable central regions but contractive elsewhere (right), respectively. Blue areas represent functions with trivial null space and Lipschitz constant $K \le 1$.}
        \label{fig:activations}
        % \vspace{-10pt}
    \end{figure*}

\paragraph{Normed weights}
Common regularizations linked with improving numerical stability of neural networks are norm penalties of layer weights. Examples include
$L_0$~\cite{WangLH15}, $L_1$~\cite{Ng_norms2004,Duchi2008,L1_dnns2018},
 $L_2$~\cite{Ng_norms2004,NeyshaburTS15},  $L_{\infty}$~\cite{hoffer2019norm}, or spectral norm~\cite{specNorm2018,farnia2018generalizable}. Minimizing weight norms results in tightening the upper bound of the norm of the overall function composition~\eqref{eq:lpv_dnn}, which has a stabilizing effect on the resulting deep neural dynamics.
  However, all of these standard norm regularizations effectively drive the upper bound of the norm towards zero, which  might eventually result in the vanishing gradients problem for deeper networks~\cite{Pascanu2012}.
 Some of the following parametrizations might alleviate this problem by constraining the lower bounds of the operator norms.
 
%  
% Thus as a straightforward modification of the standard norm regularizations,
% we propose a new family of bounded weight norm regularizers:
% \begin{equation}
% \label{eq:normed_weight}
% \mathcal{L}_{\text{reg}} = \max(0, \lambda_{\text{min}} - ||\mathbf{{A}}||) + \max(0,  ||\mathbf{{A}}|| - \lambda_{\text{max}})
% \end{equation}
% Where $|| \cdot ||$ represents arbitrary matrix norm.
% Notice that a large penalty on  $\mathcal{L}_{\text{reg}}$~\eqref{eq:normed_weight} 
% results in bounded operator norm 
% $\lambda_{\text{min}} \le || \mathbf{{A}}|| \le \lambda_{\text{max}}$ without vanishing towards zero.
% Now based on Lemma~\ref{lemma:non_square_stability} the case with $\lambda_{\text{max}} \le 1$ satisfies the sufficient stability conditions on weights as given in Theorem~\ref{theorem:dnn_stable}.

\paragraph{Perron-Frobenius weights~\cite{tuor2020constrained}} This approach 
 uses the Perron-Frobenius theorem for imposing bounds on the dominant eigenvalue of a square nonnegative matrix $\mathbf{W}$ given as:
\begin{subequations}
\label{eq:pf}
\begin{align}
\mathbf{M} &= \lambda_{\text{max}} - (\lambda_{\text{max}} - \lambda_{\text{min}})  g(\mathbf{M'}) \\
\mathbf{W}_{i,j} &= \frac{\text{exp}(\mathbf{A'}_{ij})}{\sum_{k=1}^{n_x} \text{exp}(\mathbf{A'}_{ik})}\mathbf{M}_{i,j}
\end{align}
\end{subequations}
where matrix $\mathbf{M}$ represents damping parameterized by the matrix $\mathbf{M'} \in \mathbb{R}^{n_x \times n_x}$.
We apply a row-wise softmax to another parameter matrix $\mathbf{A'} \in \mathbb{R}^{n_x \times n_x}$, 
then elementwise multiply by $\mathbf{M}$ to obtain the stable weight $\mathbf{W}$ with eigenvalues lower and upper bounds $\lambda_{\text{min}}$ and $\lambda_{\text{max}}$.

\paragraph{Spectral weights~\cite{zhang2018stabilizing,skomski2021constrained}}
This method parametrizes a weight matrix as a factorization via singular value decomposition (SVD). The weight is defined as two unitary matrices $\mathbf{U}$ and $\mathbf{V}$ initialized as orthogonal matrices, and singular values $\Sigma$ initialized randomly. 
The advantage of the SVD factorization is that it supports non-square matrices.
This regularization enforces boundary constraints on the singular values $\Sigma$ by:
\begin{subequations}
\begin{align}
    \mathbf{{\Sigma}} &= \text{diag}(\lambda_{\text{max}} - (\lambda_{\text{max}} - \lambda_{\text{min}}) \cdot \sigma(\Sigma))\\
   \mathbf{W} &= \mathbf{U{\Sigma}V} \label{eq:SVD}
\end{align}
\end{subequations}
where $\lambda_{\text{min}}$ and $\lambda_{\text{max}}$ are the lower and upper singular value bounds, respectively.
For enforcing orthogonal structure,~\cite{zhang2018stabilizing} used Householder reflectors  to represent unitary matrices $\mathbf{U}$ and $\mathbf{V}$, an alternative approach is to use soft constraint penalties on unitary matrices~\cite{skomski2021constrained}.

\paragraph{Gershgorin discs weights~\cite{lechner2020gershgorin}}
This factorization based on \textit{Gershgorin Circle Theorem}~\cite{Varga_Gersgorin2004} confines the
 eigenvalues $\lambda_i$ of a square weight $\mathbf{W}$ within a circle with center $\lambda$ and radius $r$ and is given as:
\begin{equation}
\label{eq:Gershgorin}
 \mathbf{W} = \texttt{diag}\begin{pmatrix}\frac{r}{s_1}, ..., \frac{r}{s_n}\end{pmatrix}\mathbf{M} +  \texttt{diag}\begin{pmatrix}\lambda, ..., \lambda\end{pmatrix}
\end{equation}
Here the parameters of a matrix $\mathbf{M} \in  \mathbb{R}^{n\times n}$ belongs to $m_{i,j} \sim \mathcal{U}(0,1)$, with $m_{i, i} = 0$.
Then each row's elements are divided by it's sum $s_j = \sum_{i\neq j} m_{i,j}$ and multiplied by a radius $r$, finally adding a diagonal matrix of values where the matrix eigenvalues should be centered.

\paragraph{Dissipativity penalties}
An alternative approach to learning dissipative neural dynamics is to penalize the dissipativity conditions~\eqref{eq:dissipatitvity_condition_dnn} in the loss function via regularization term:
\begin{equation}
\label{eq:reg:stable}
    \mathcal{L}_{\texttt{s}} = 
 \max\big(1, \|\mathbf{A}^{\star}(\mathbf{x})\|_2\big)
\end{equation}
The disadvantage of~\eqref{eq:reg:stable}
is that the penalties do not guarantee dissipativity by design. On the other hand, they may be more expressive compared to
the design methods discussed above.

\section{Numerical Case Studies}
\label{sec:case_studies}

\subsection{Stability Analysis of Autonomous Neural Dynamics}

This section presents empirical analysis of the effect of different components of deep neural dynamics on their operator norm bounds, state space trajectories, and stability.
Investigated components include weight factorizations from Section~\ref{sec:maps}, types of activations, bias terms, and network depth.
For the sake of intuitive visualizations we perform experiments on neural dynamical system~\eqref{eq:neural_dynamics} with two states $n_x = 2$.
For each configuration, we compute PWA form~\eqref{eq:lpv_dnn} of the DNN over a 2D grid ranging [-6, 6] in both dimensions to compute their operator norms,   state space regions, and state space trajectories. Additional details on the configurations used can be found in Appendix~\ref{sec:appendix_f}.

\paragraph{Dynamical Effects of Weights}
To empirically verify implications of Corollary~\ref{theorem:dnn_stable}, we construct 2D $8$-layer neural models with layer-wise  eigenvalues constrained between zero and one, close to one, and greater than one.
To do so we randomly generate a set of constrained weight matrices using methods from Section~\ref{sec:maps}.
Fig.~\ref{fig:weights} demonstrates dynamical properties and associated eigenvalue spectra of DNNs with  stable\footnote{Spectral norm $||\mathbf{A}^{\star}(\mathbf{x})||_2 \le 1$, \texttt{Tanh} activation, Gershgorin disc factorized weight.}, marginally stable\footnote{Spectral norm $||\mathbf{A}^{\star}(\mathbf{x})||_2 \approx 1$, \texttt{Tanh} activation, Spectral SVD factorized weight.}, and unstable\footnote{Spectral norm $||\mathbf{A}^{\star}(\mathbf{x})||_2 \ge 1$, \texttt{Softplus} activation, Gershgorin disc factorized weight.} dynamics, respectively.
 The first row visualizes state space trajectories, while second row plots the complex plane with eigenvalues of the DNN's PWA forms~\eqref{eq:lpv_dnn}. 
The left hand side example on Fig.~\ref{fig:weights}  demonstrates the effect of the contraction condition of Corollary~\ref{theorem:glbl_stable} leading to asymptotic stability.
\begin{figure*}[h]
    \centering
    \includegraphics[width=0.20\textwidth, trim=70 10 80 40, clip]{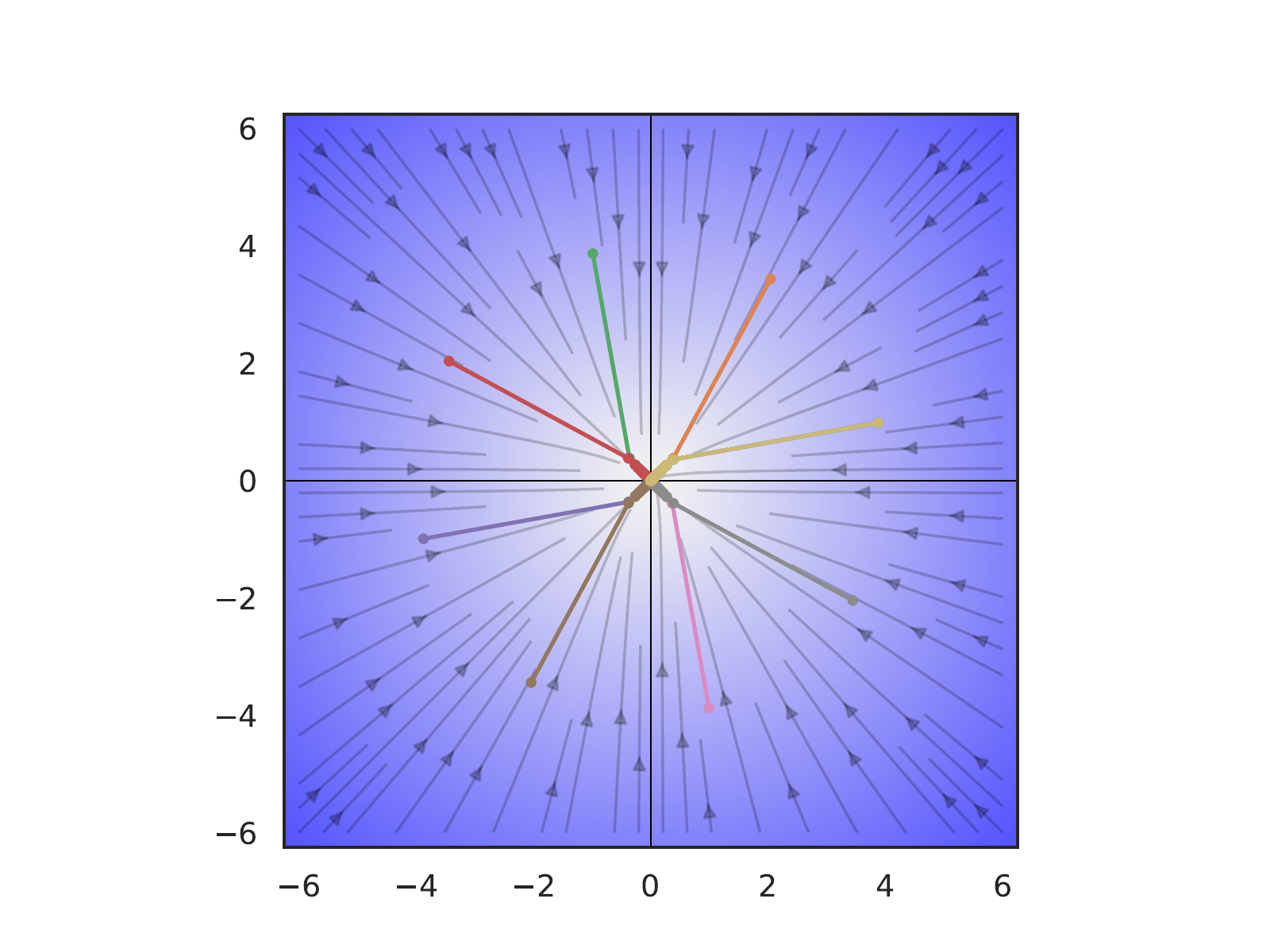}
    \includegraphics[width=0.20\textwidth, trim=70 10 80 40, clip]{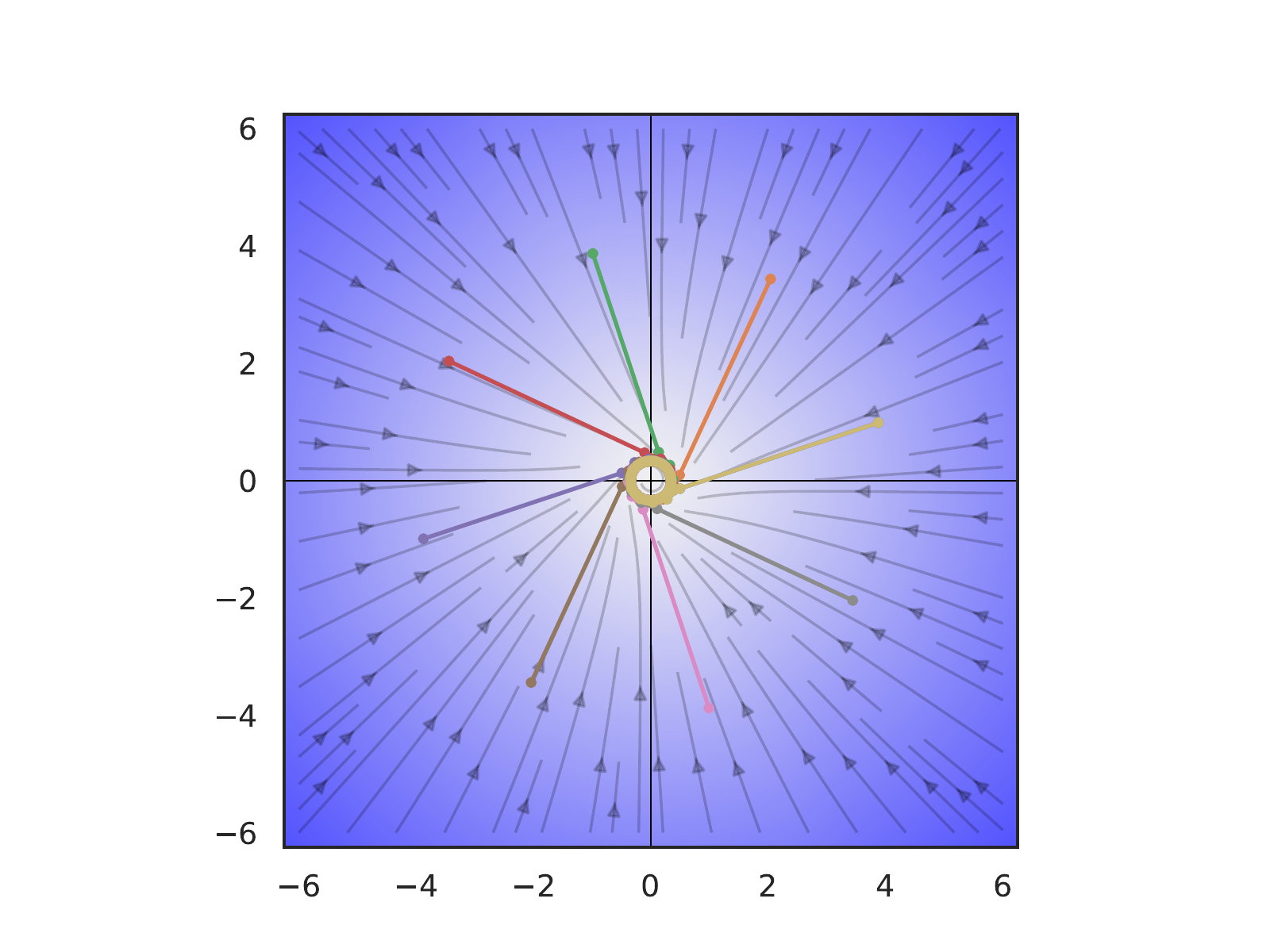}
        \includegraphics[width=0.20\textwidth, trim=70 10 80 40, clip]{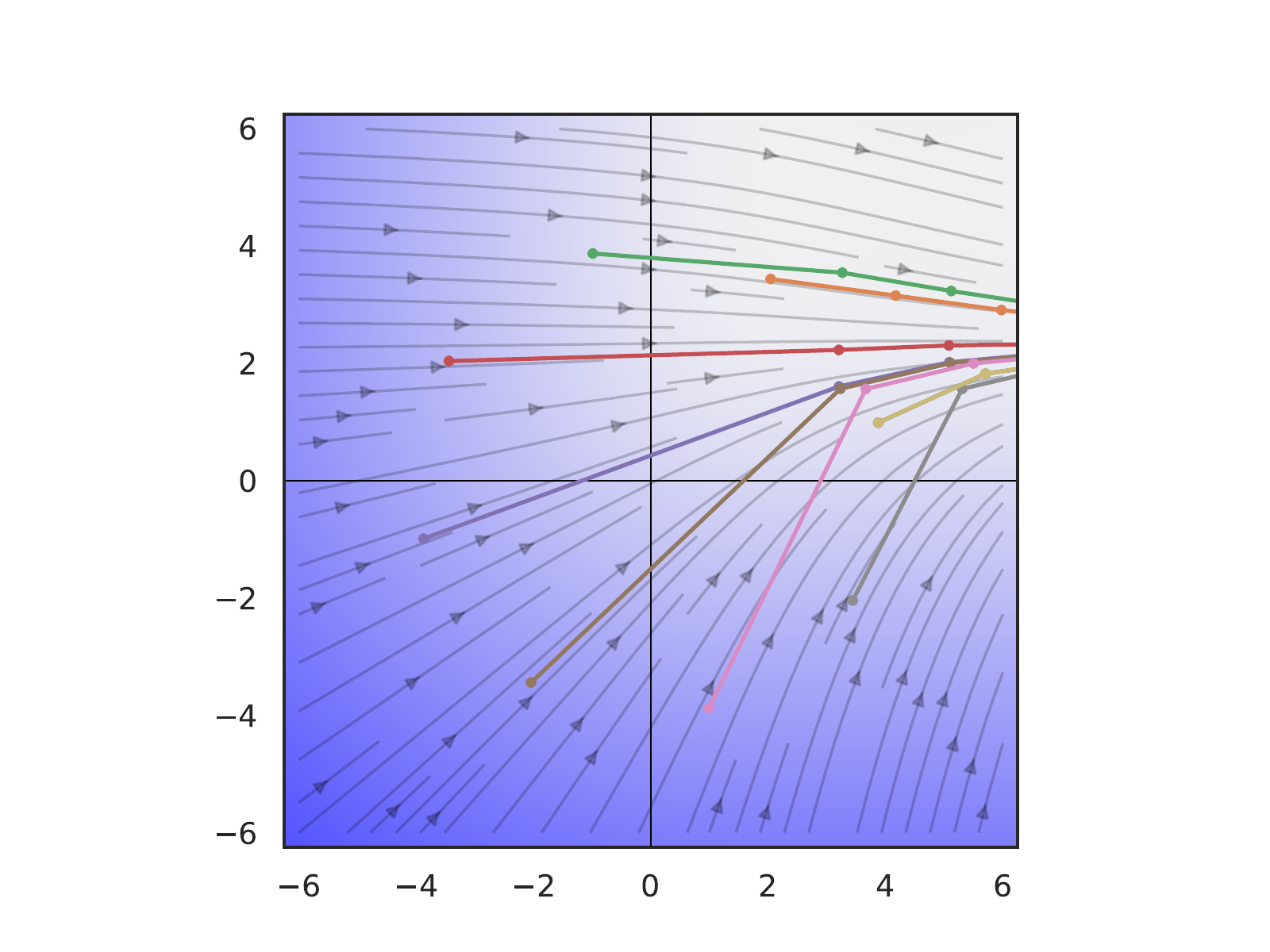}  \\
        \includegraphics[width=0.20\textwidth, trim=45 10 70 0, clip]{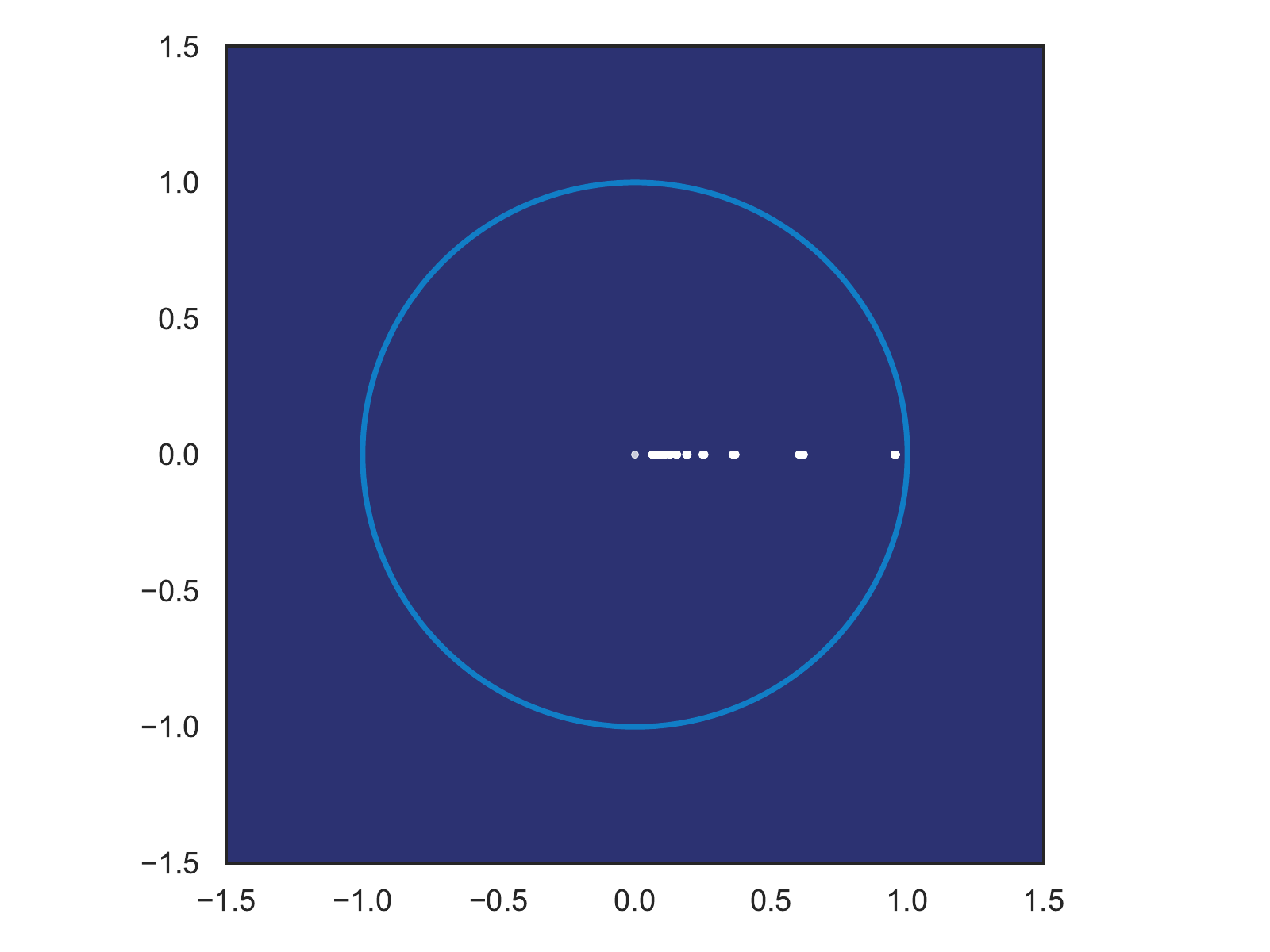}
    \includegraphics[width=0.20\textwidth, trim=45 10 70 0, clip]{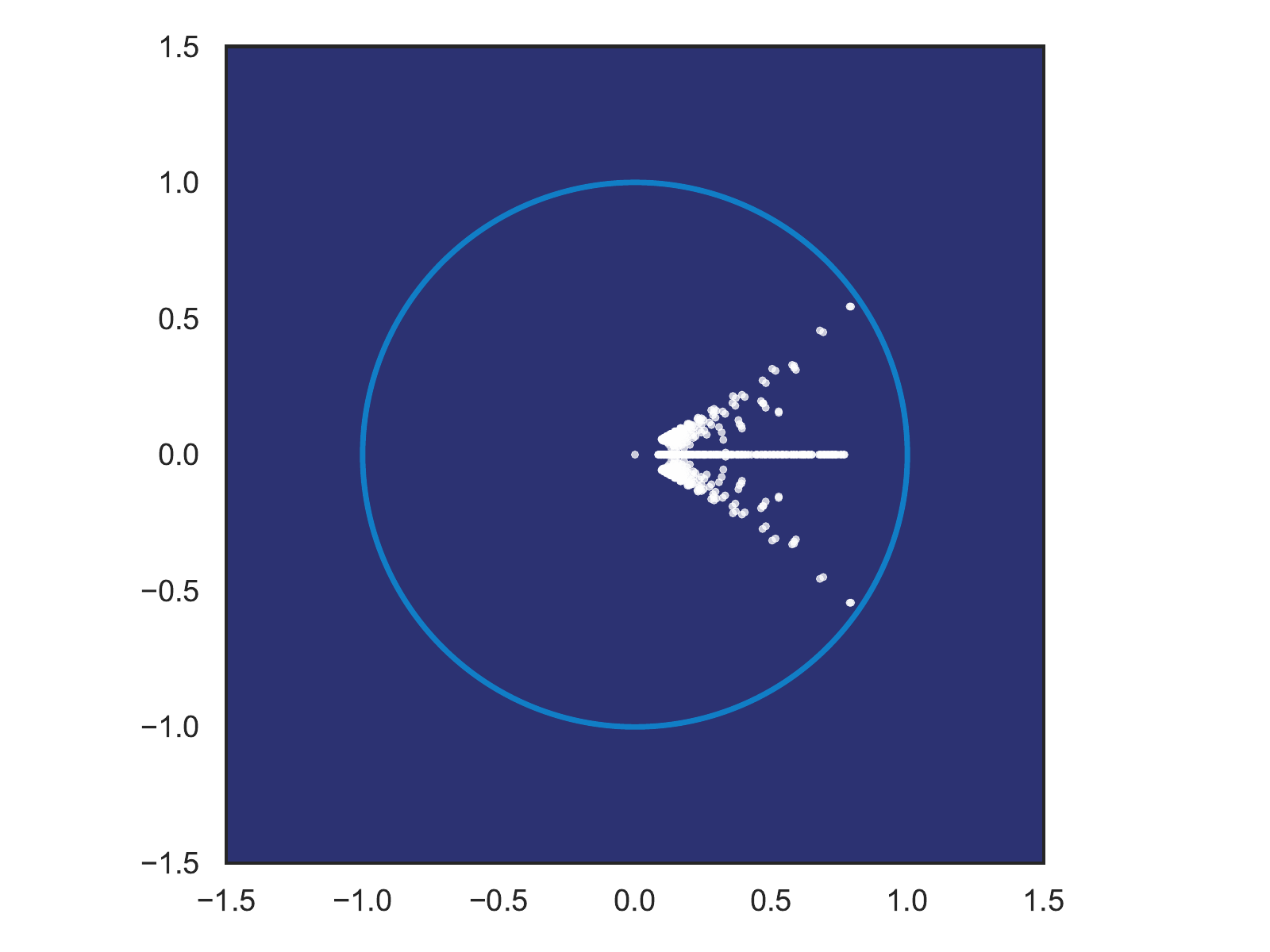}
        \includegraphics[width=0.20\textwidth, trim=45 10 70 0, clip]{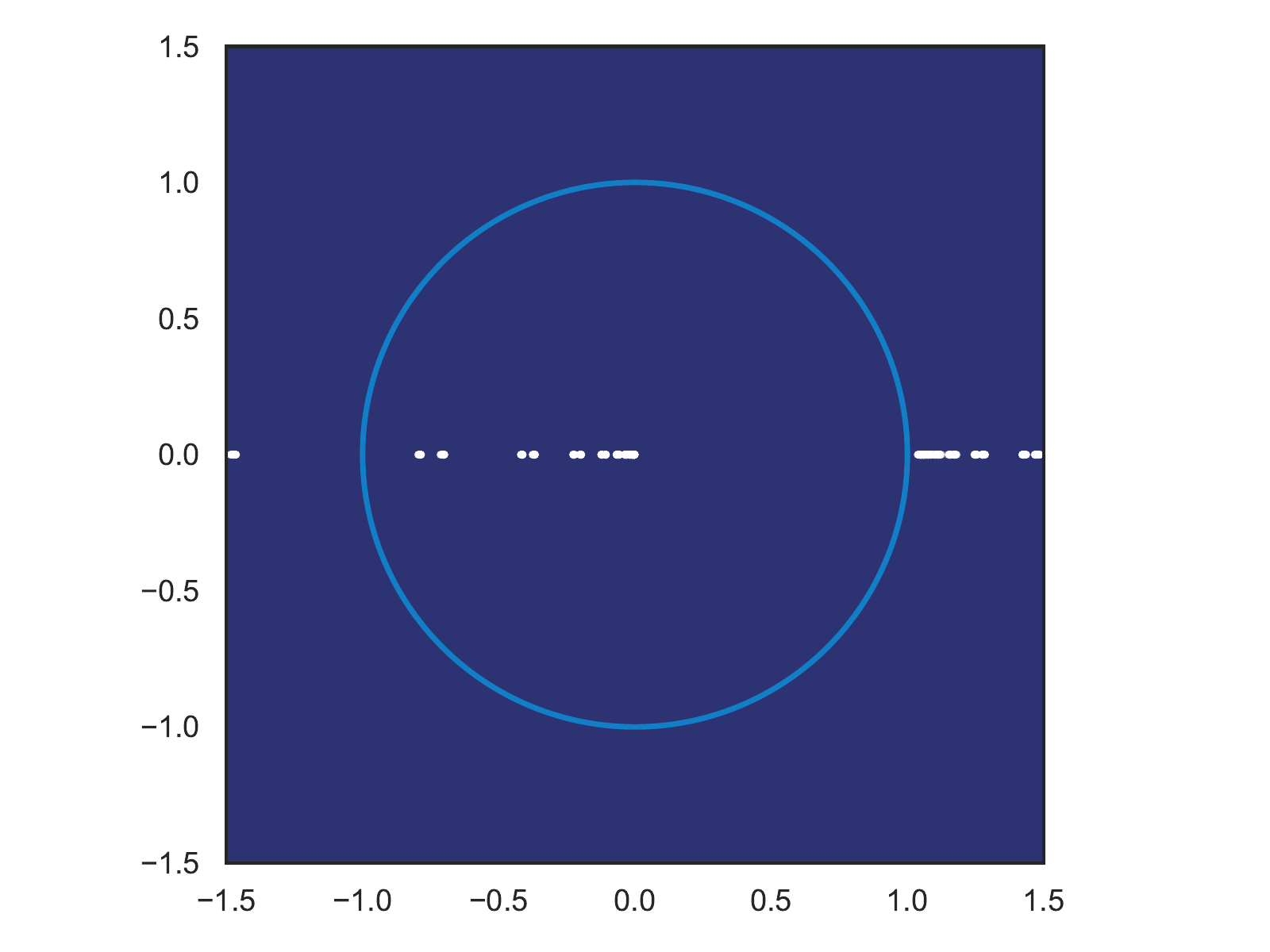} 
    \caption{Top: DNNs with  stable $||\mathbf{A}^{\star}(\mathbf{x})||_2 \le 1$ (left),   marginally stable $||\mathbf{A}^{\star}(\mathbf{x})||_2 \approx 1$ (center), and unstable dynamics $||\mathbf{A}^{\star}(\mathbf{x})||_2 \ge 1$ (right). Bottom: Eigenvalues of $\mathbf{A}^{\star}(\mathbf{x})$.}
    \label{fig:weights}
\end{figure*}

\paragraph{Dynamical Effects of Activation Functions}
% Here we leverage the PWA form~\eqref{eq:lpv_dnn} to
% expand the complexity analysis of linear regions of deep ReLU networks~\cite{hanin2019complexity} to networks with arbitrary activation functions.
% To identify different regions in the network's phase space 
Here we leverage the PWA form~\eqref{eq:lpv_dnn}
to study the dynamical properties of DNNs with different activation functions. 
% one can apply various metrics on $\mathbf{A}^{\star}$ such as  matrix norms, spectral density, or spectral radius.
Fig.~\ref{fig:spectral_radii} displays state space regions associated with spectral radii of $4$-layer DNNs with four activations: \texttt{ReLU},  \texttt{Tanh},   \texttt{SELU}, and \texttt{Sigmoid}. All weights are randomly generated to have stable eigenvalues, i.e. $||\mathbf{W}_i||_2 \le 1, \ \forall i \in \mathbb{N}_1^L$.
As expected, \texttt{ReLU} networks generate linear regions, while the exponential part of \texttt{SELU} networks make the resulting pattern of linear regions more complex. 
On the other hand, smooth activations \texttt{Tanh}, and \texttt{Sigmoid} generate continuous gradient fields.
Thanks to the  contractivity of \texttt{ReLU} and \texttt{Tanh} as given by their Lipschitz constant $K \le 1$
the whole state space is guaranteed to be dissipative with $||\mathbf{A}^{\star}(\mathbf{x})||_p \le 1$.  
Because both \texttt{SELU} and \texttt{Sigmoid} activations 
have  Lipschitz constant $K \ge 1$
 the dissipativity of their state space is not guaranteed by design.
Even though in this case, \texttt{SELU} network generated  stable dynamics. 
On the other hand, the \texttt{Sigmoid} network generated a large unstable region surrounded by stable regions. This is   caused by its nontrivial null space and clamped tails. 
% Despite this, such a state space will still have stable region of attraction as the unstable region will be surrounded by stable regions on an the expanded domain.
\begin{figure*}[h]
    \centering
    \includegraphics[width=0.23\textwidth, trim=60 20 20 38, clip]{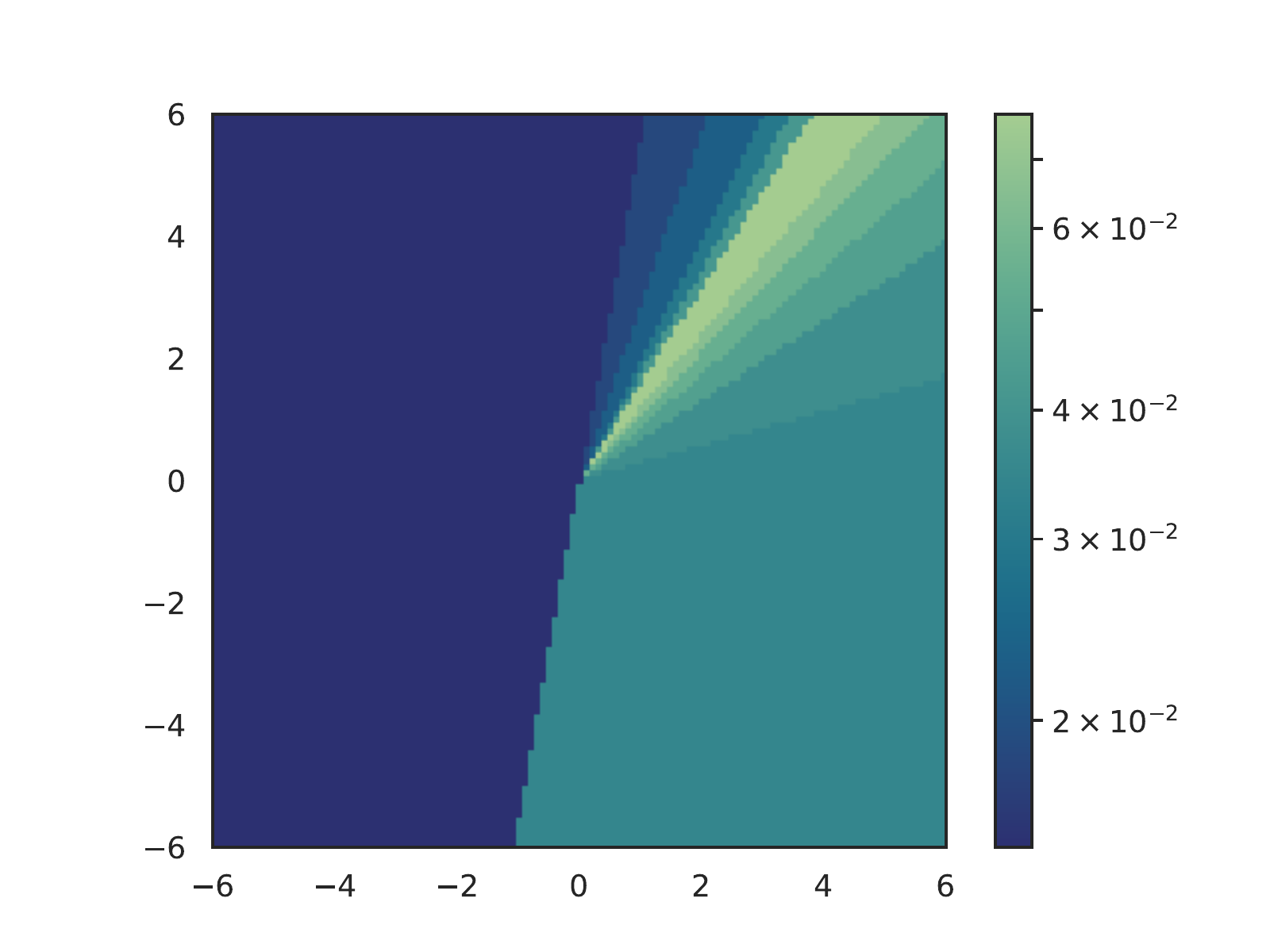}
    \includegraphics[width=0.23\textwidth, trim=60 20 20 38, clip]{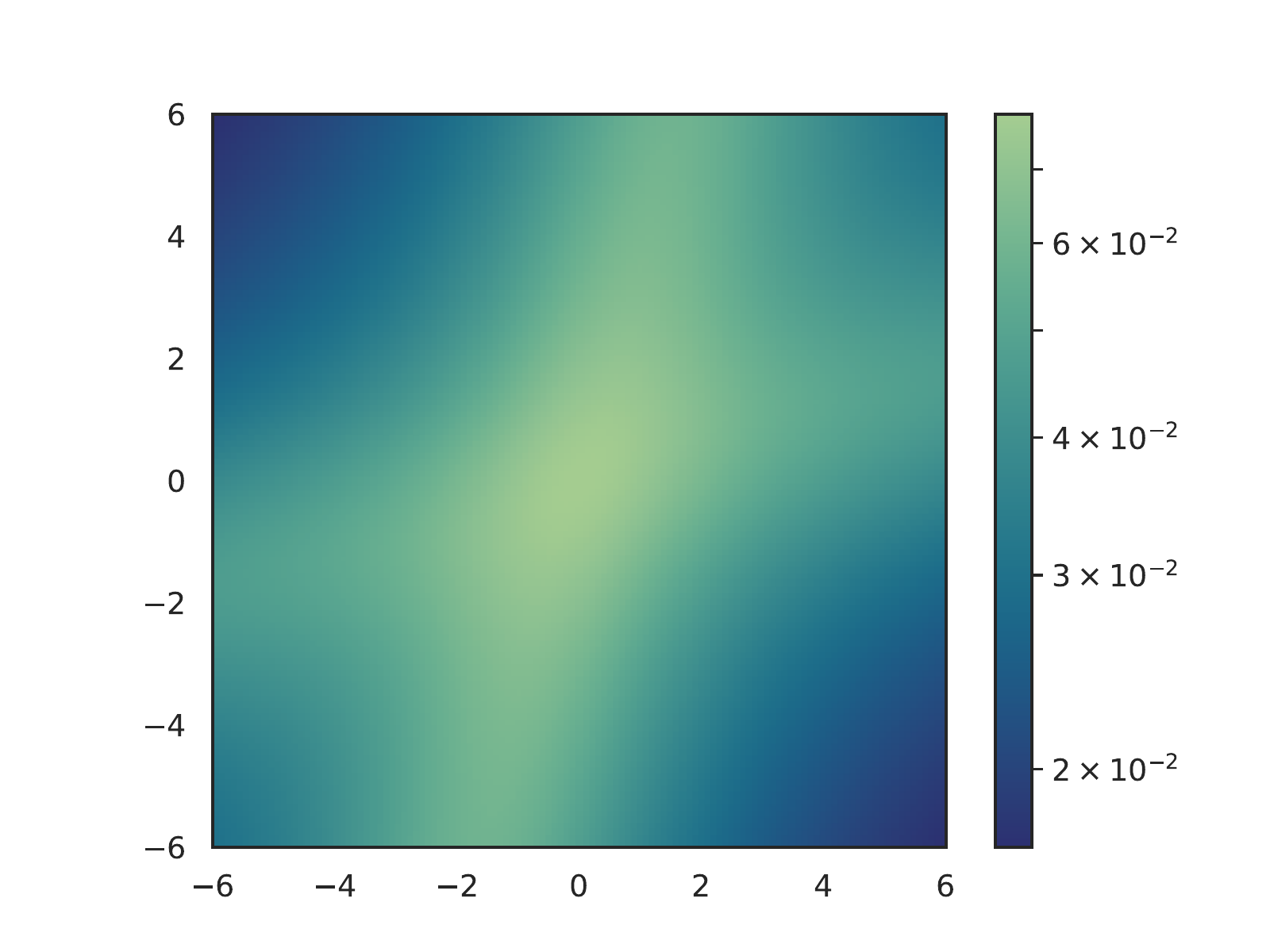}
    \includegraphics[width=0.23\textwidth, trim=60 20 20 38, clip]{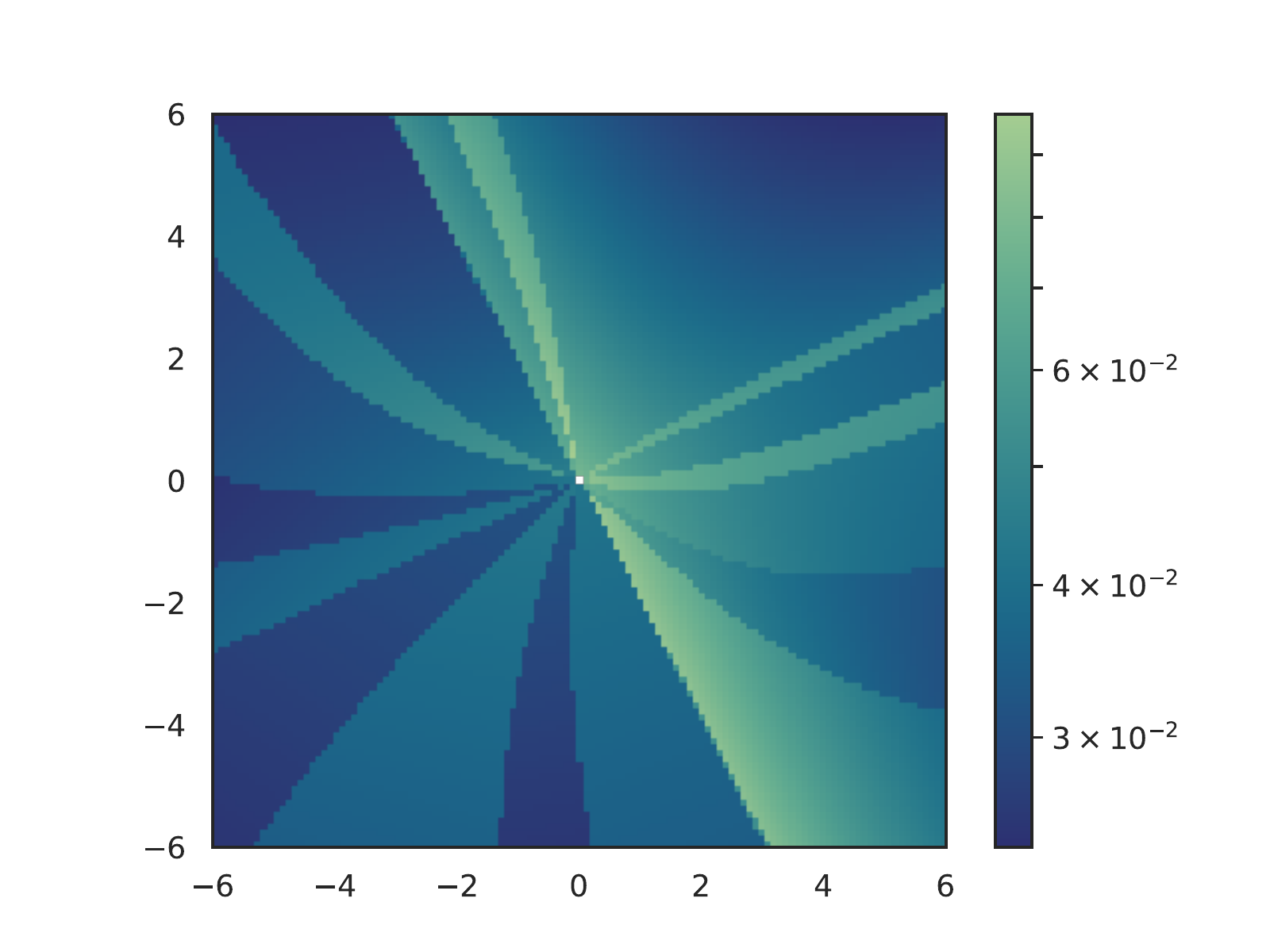} 
    \includegraphics[width=0.23\textwidth, trim=60 20 20 38, clip]{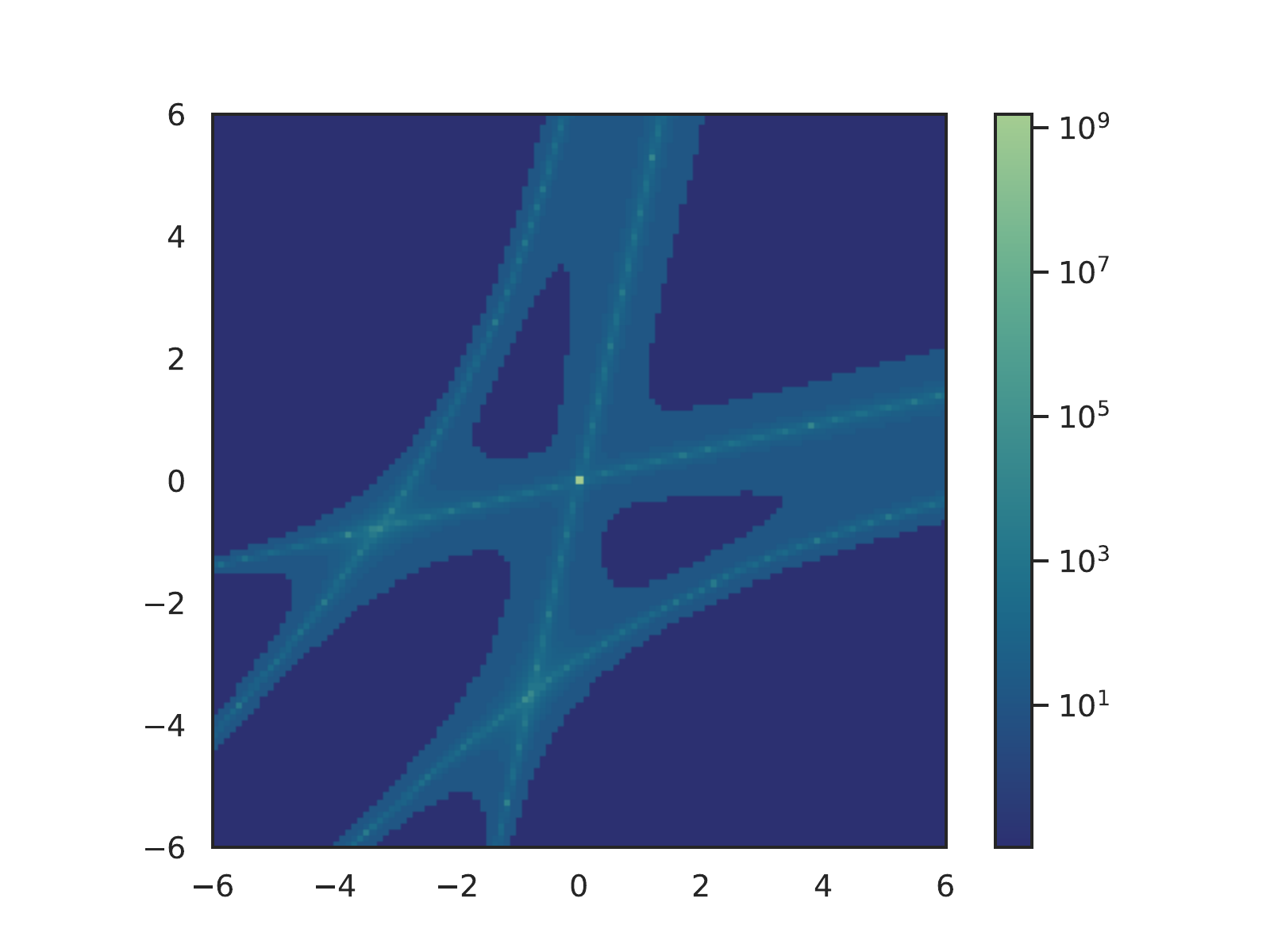} 
    \caption{State space regions associated with norms $\|\mathbf{A}^{\star}\|_2$ of two dimensional neural dynamics~\eqref{eq:neural_dynamics} parametried with $4$-layer DNN with different activation functions initialized with stable weights. From left to right \texttt{ReLU},  \texttt{Tanh},  \texttt{SELU}, and  \texttt{Sigmoid}.}
    \label{fig:spectral_radii}
\end{figure*}

\paragraph{Dynamical effects of bias terms}
% As a result of Lemma~\ref{lem:lpv}, a deep neural network without bias is equivalent with a pointwise linear map~\eqref{eq:dnn_LPV}.
If the conditions of Corollary~\ref{theorem:glbl_stable} are satisfied, the neural dynamics~\eqref{eq:neural_dynamics} is  asymptotically stable~\eqref{eq:asymptotic} with the equilibrium at the origin $\bar{\mathbf{x}} = \mathbf{0}$.
In the more general case of non-zero biases at the steady state, the equilibrium points of the dissipative neural dynamics~\eqref{eq:dissipatitvity_condition_dnn}  are shifted from the origin by the aggregate bias terms~\eqref{eq:dnn_bias_recurence}. 
Corollary~\ref{thm:DNN_eq_bounded} provides the lower and upper bound of the equilibrium norms for stable deep neural dynamics~\eqref{eq:neural_dynamics}.  
Here we demonstrate the effect of the bias term on equilibrium points $\bar{\mathbf{x}}$ using stable single-layer ReLU network as shown in Fig.~\ref{fig:bias}.
As given by Corollary~\ref{thm:DNN_eq_bounded}, for dissipative systems, the non-zero steady state bias terms shift the centroid of attractors confined in the compact subspace of the state space. 
This analysis reveals the significant role of the bias terms in learning correct steady states of dynamical systems.
This observation can inspire new research in the design of  more sophisticated parametrizations for learning dynamical systems, e.g., with non-trivial equilibria such as multi-point attractors. 
    \begin{figure*}[htb!]
        \centering
        \includegraphics[width=0.20\textwidth, trim=70 20 80 40, clip]{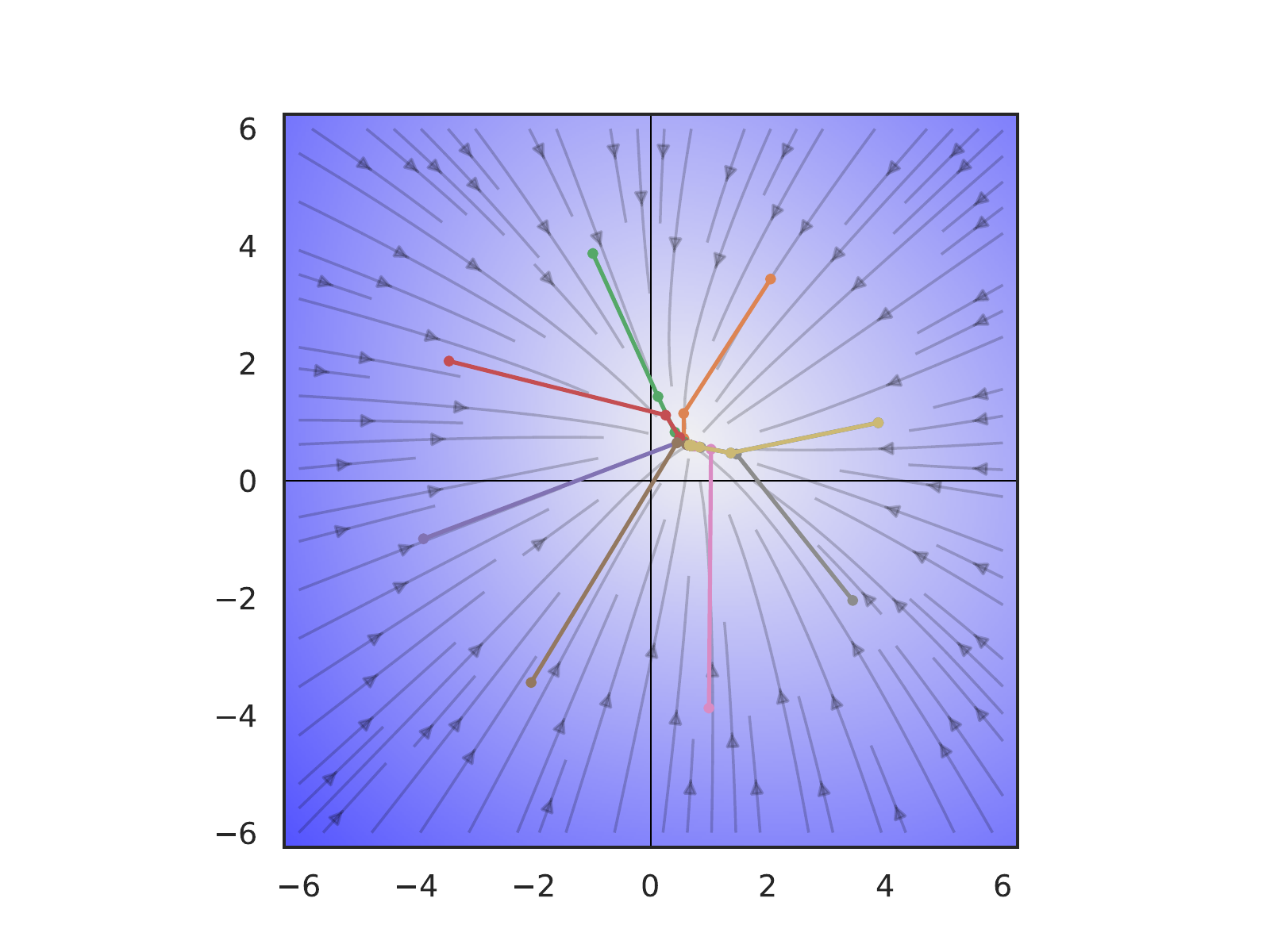}\hspace{16pt}
        \includegraphics[width=0.20\textwidth, trim=70 20 80 40, clip]{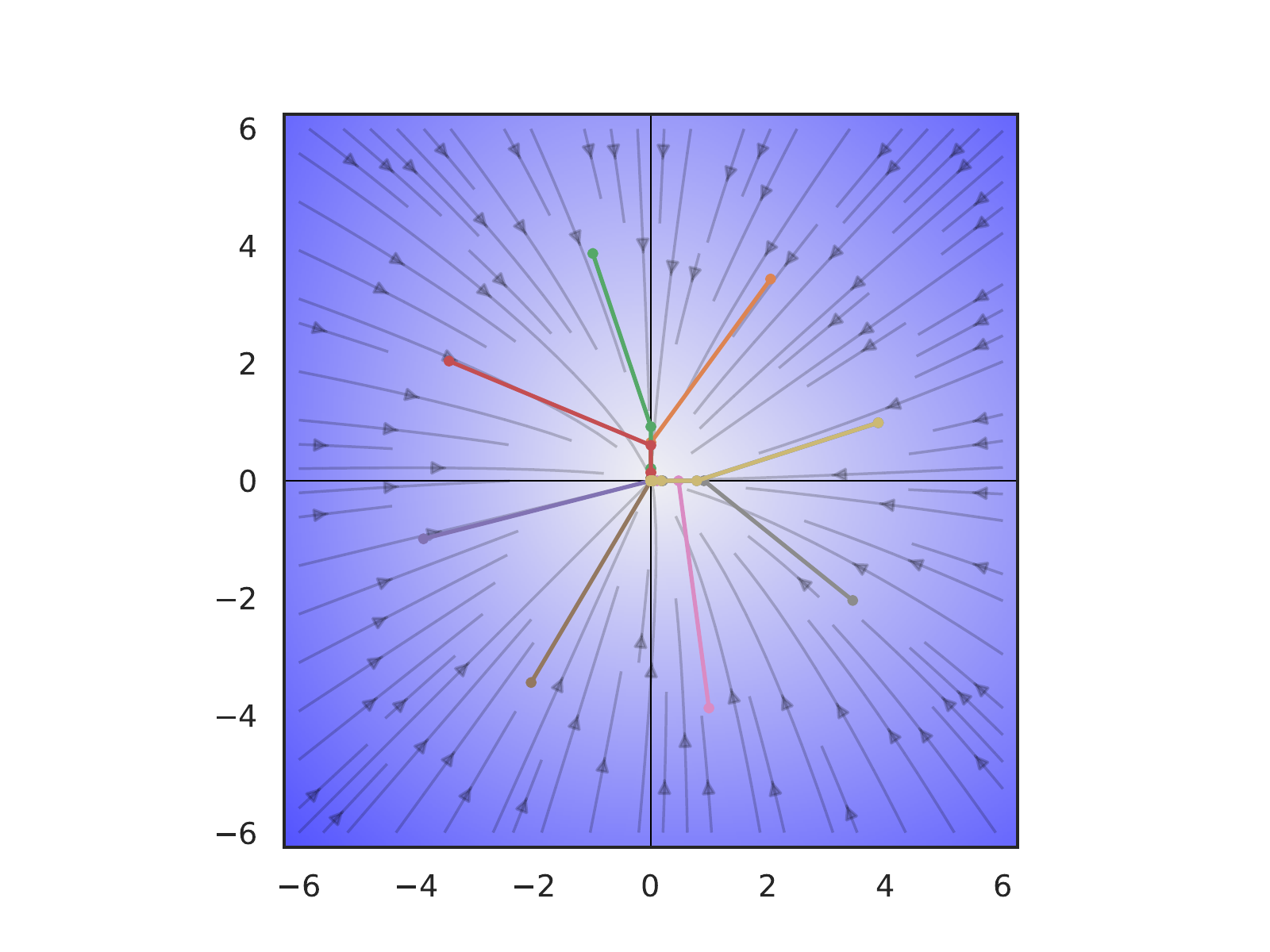} 
        \caption{Dynamics of stable two dimensional neural dynamical system~\eqref{eq:neural_dynamics} with single \texttt{ReLU} layer with bias $\bar{\mathbf{x}} \neq \mathbf{0}$ (left), and without bias $\bar{\mathbf{x}} = \mathbf{0}$ (right).}
        \label{fig:bias}
        % \vspace{-10pt}
    \end{figure*}

\paragraph{Dynamical effects of network depth}
Very deep recurrent neural networks (RNNs) are notoriously difficult to train due to the vanishing and exploding gradient problems~\cite{Pascanu2012,Kolen5264952}. 
\cite{Pascanu2012,haber2017stable} linked these problems with the spectral properties of RNNs.
In the same spirit we leverage Lemma~\ref{lem:lpv} and Corollary~\ref{theorem:dnn_stable} to analyze the spectral norms and eigenvalue distribution of the forward propagation of RNNs with varying depth.
Fig.~\ref{fig:depth} displays a visualization of the spectral plots with increasing depth in deep neural network.
As given by Lemma~\ref{lem:lpv}, we can equivalently cast DNNs as a  product of pointwise affine representations of its layers. RNNs are trained by unrolling them to $L$-layer deep feedforward DNNs, where all layers share the weights and activations and hence by definition also their spectral properties.
Hence by having shared layer weights $\mathbf{A}_i = \mathbf{A}_j, \forall (i, j) \in \mathbb{N}_1^L$ it is clear that applying operator norm bounds~\eqref{eq:Gelfand_norm} yields norm bounded matrix power series.
Therefore increasing the depth of RNNs with stable layers necessarily shrinks the spectral norm with each additional layer, thus increasing the dissipativity of the system.
For unstable layers the opposite is true as the spectral norm expands exponentially.
% Therefore, theorem~\ref{theorem:dnn_stable} confirms the  conclusions of \cite{haber2017stable} saying that well-posed learning problems for very deep networks requires spectral radius, in our case spectral norm\footnote{We use spectral norm as an upper bound for the spectral radius $\rho(\mathbf{A}) \le ||\mathbf{A}||_2 \approx 1$.}, of DNNs to be close to $1$, or more formally $||\mathbf{A}^{\star}(\mathbf{x})||_2 \approx 1$.
% 
 \begin{figure*}[htb!]
    \centering
    \includegraphics[width=0.27\textwidth]{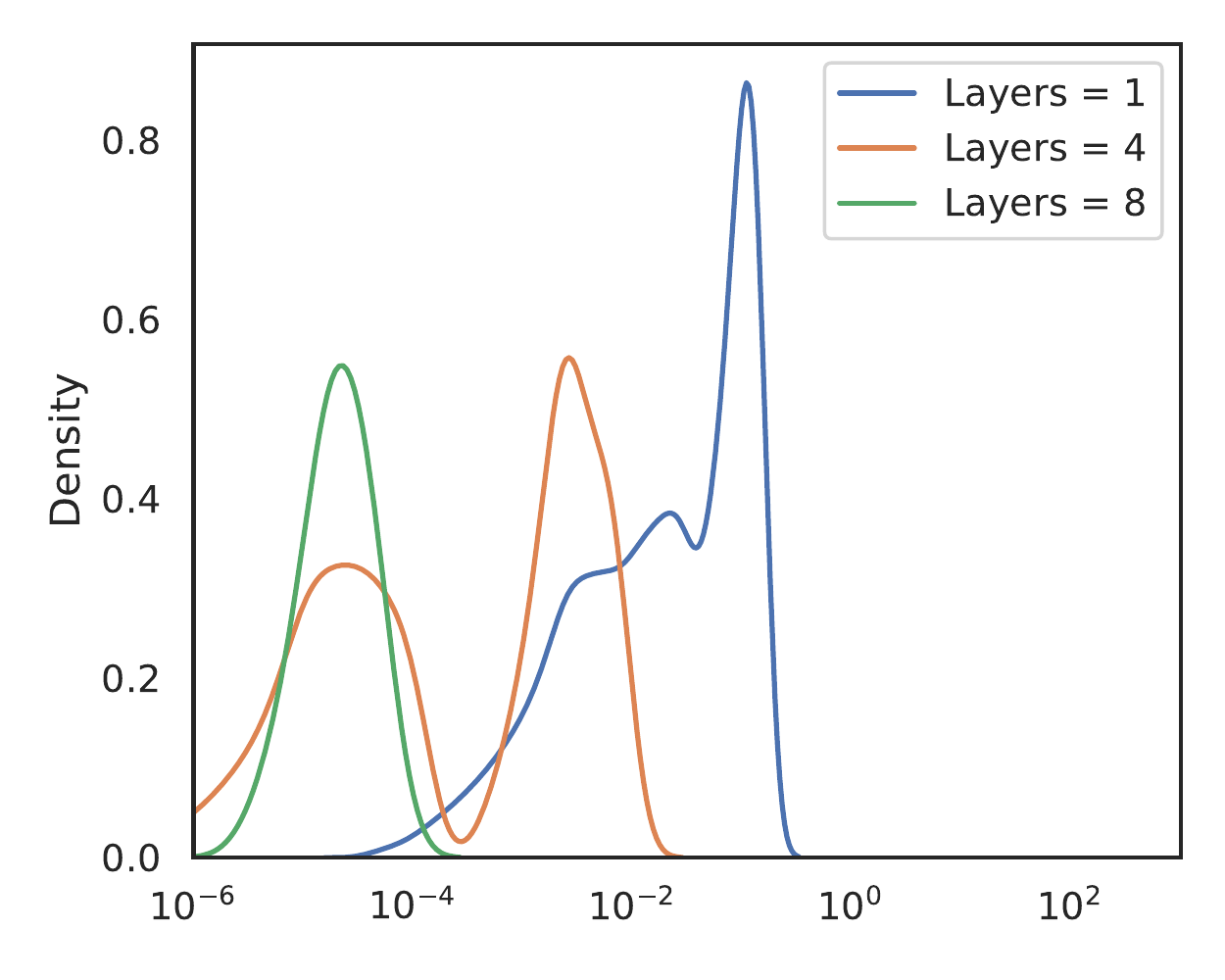}
    \includegraphics[width=0.27\textwidth]{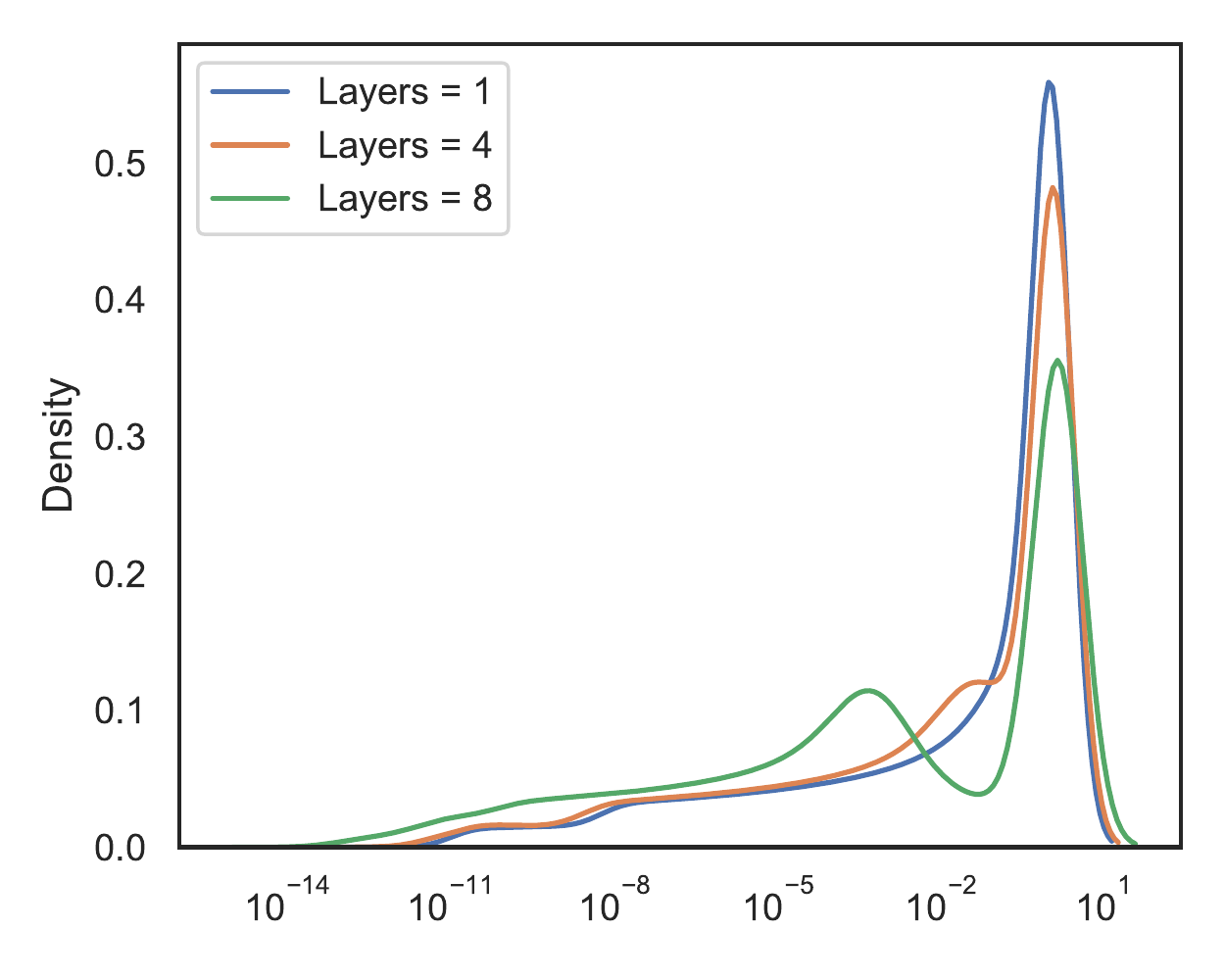}
    \includegraphics[width=0.27\textwidth]{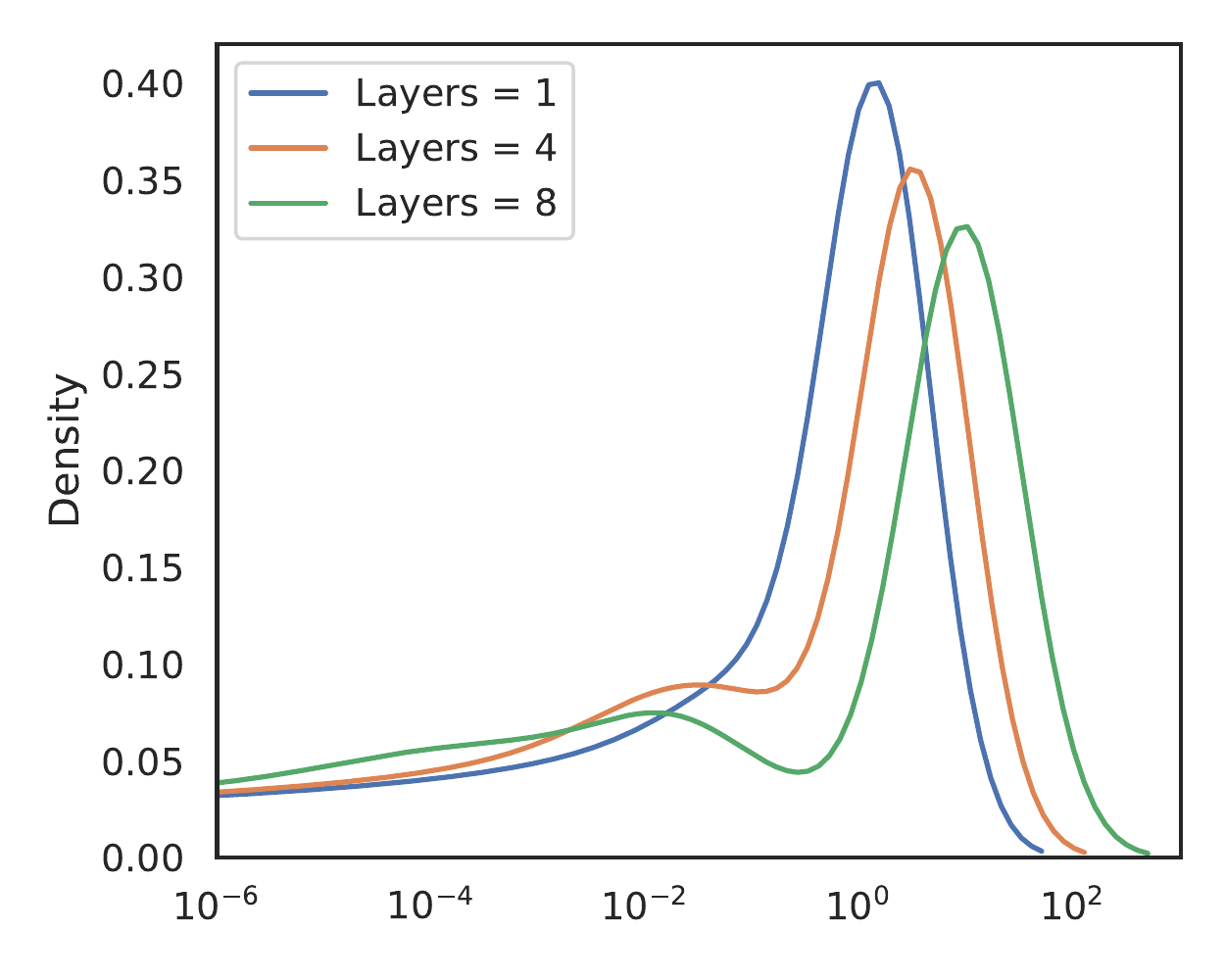}
    \caption{Empirical eigenvalue density distributions of neural networks with varying depth using \texttt{GELU} layers with  stable (first column), on the edge of stability (second column), and unstable dynamics (third column), respectively. $x$ axis represents magnitude of the eigenvalues.}
    \label{fig:depth}
 \end{figure*}

\subsection{Stability Analysis of Neural State Space Models}
Here we explore the practical application of the proposed  dissipativity analysis to neural dynamics trained to model two non-autonomous dynamical systems from process control.

The Continuous Stirred Tank Reactor (CSTR) model is a common simplified mathematical representation a chemical reactor that is equipped with a mixing device to provide efficient mixing of materials. 
The model is described as:
\begin{subequations}
\label{eq:CSTR}
\begin{align}
    & r = k_0  e^{-\frac{E}{R\mathbf{x}_2}}  \mathbf{x}_1 \\
    & \dot{\mathbf{x}}_1 = \frac{q}{V}  (C_{af} - \mathbf{x}_1) - r \\
    &  \dot{\mathbf{x}}_2 = \frac{q}{V}  (T_f - \mathbf{x}_2)  
   + \frac{H}{\rho  c_p}  rA + \frac{A}{V \rho c_p  (\mathbf{u} - \mathbf{x}_2)} 
\end{align}
\end{subequations}
where the measured system states $\mathbf{x}_1$, and $\mathbf{x}_2$ are the concentration of the product and temperature, respectively. The input $\mathbf{u}$ is the temperature of a cooling jacket. The system is inherently non-dissipative and exhibits nonlinear behavior during exothermic reactions with unstable, oscillatory modes that makes the identification process non-trivial.
We assume that the structure of the differential equations and the parameters $\{k_0, E, R, q, V, H, \rho, c_p, A, T_f, C_{af}\}$ are unknown. 

Another system we consider is the Two Tank system (2TS). It consists of two water tanks in series that are connected by a valve. The two inputs include a pump that controls the liquid inflow to the first tank and a valve opening controls the flow between the tanks and can either be fully open or fully closed. The system can be described by the ordinary differential equations:
\begin{subequations}
\label{eq:2tank}
\begin{align}
    & \dot{\mathbf{x}}_1 = 
    \begin{cases}
   (1 - \mathbf{u}_1) c_1 \mathbf{u}_2  - c_2 \sqrt{\mathbf{x}_1},  & \text{if } \mathbf{x}_1 \leq 1\\ 
    0 &  \text{otherwise}
    \end{cases} \\
    &  \dot{\mathbf{x}}_2 = 
            \begin{cases}
     c_1  \mathbf{u}_1  \mathbf{u}_2 + c_2  \sqrt{\mathbf{x}_1} - c_2 \sqrt{\mathbf{x}_2} ,  & \text{if } \mathbf{x}_2 \leq 1\\ 
    0 &  \text{otherwise}
    \end{cases} 
\end{align}
\end{subequations}
where the measured system states $\mathbf{x}_1$, and $\mathbf{x}_2$ denote the liquid levels in first and second tank, respectively.

To identify the unknown system dynamics from data, we consider the following non-autonomous block structured neural state space model:
\begin{equation}
  \label{eq:ssm}
      \mathbf{x}_{t+1} = \mathbf{f}(\mathbf{x}_t) + \mathbf{g}(\mathbf{u}_t)
 \end{equation}
With nonlinear maps $\mathbf{f}(\mathbf{x}_t): \mathbb{R}^{n_x} \to \mathbb{R}^{n_x} $, $\mathbf{g}(\mathbf{u}_t): \mathbb{R}^{n_u} \to \mathbb{R}^{n_x} $ parametrized by deep neural networks~\ref{eq:dnn}.  Here $\mathbf{u}_t$ is an exogenous control signal at time step $t$. In both cases we use GELU activation functions and standard unconstrained weights.
To train the model~\eqref{eq:ssm} for each system we generated 3000 measurements via simulation of the governing differential equations using the Scipy ODEint solver.
We use 1000 time steps for training, model selection, and testing.
To demonstrate generalization we show the traces of the learned neural dynamics given only an initial condition and the sequence of control inputs compared to the ground truth simulations (right of Fig.~\ref{fig:system_1}, and Fig.~\ref{fig:system_2}). 

After training we perform the dissipativity analysis  on the trained neural networks composing the model~\eqref{eq:ssm} by evaluating their local linear operator norms
$||\mathbf{A}^{\star}(\mathbf{x})||_2$ obtained via Lemma~\ref{lem:lpv}.  
Left hand sides of Fig.~\ref{fig:system_1} and Fig.~\ref{fig:system_2} visualize the state space regions 
of the state transition dynamics $\mathbf{f}(\mathbf{x}_t)$ associated with its spectral norm. 
Middle plots of Fig.~\ref{fig:system_1} and Fig.~\ref{fig:system_2}
show the spectral norms of $\mathbf{g}(\mathbf{u}_t)$ as a function of the input space showing the nonlinearity of the input dynamics.
Here the values below $1$ represent dissipative regions of the neural networks.
For the CSTR system (Fig.~\ref{fig:system_1}), the analysis reveals state transition maps with unstable regions $||\mathbf{A}^{\star}_{f}(\mathbf{x})||_2>1$ for states in the lower triangular region of the state space. This is in accordance to the unstable dynamics of the exothermic CSTR system, whose dynamics is stabilized by the inputs representing cooling.
In the two tank case (Fig.~\ref{fig:system_2} ), the best performing model learned globally dissipative state transition dynamic maps $||\mathbf{A}^{\star}_{f}(\mathbf{x})||_2<1$, which is in line with the stable nature of the underlying physical system.
Therefore as demonstrated the proposed method can be used to analyze the dissipativity of data-driven neural models~\eqref{eq:ssm}. 
\begin{figure*}[htb!]
    \centering
    \includegraphics[width=0.25\textwidth, trim=50 20 20 38, clip]{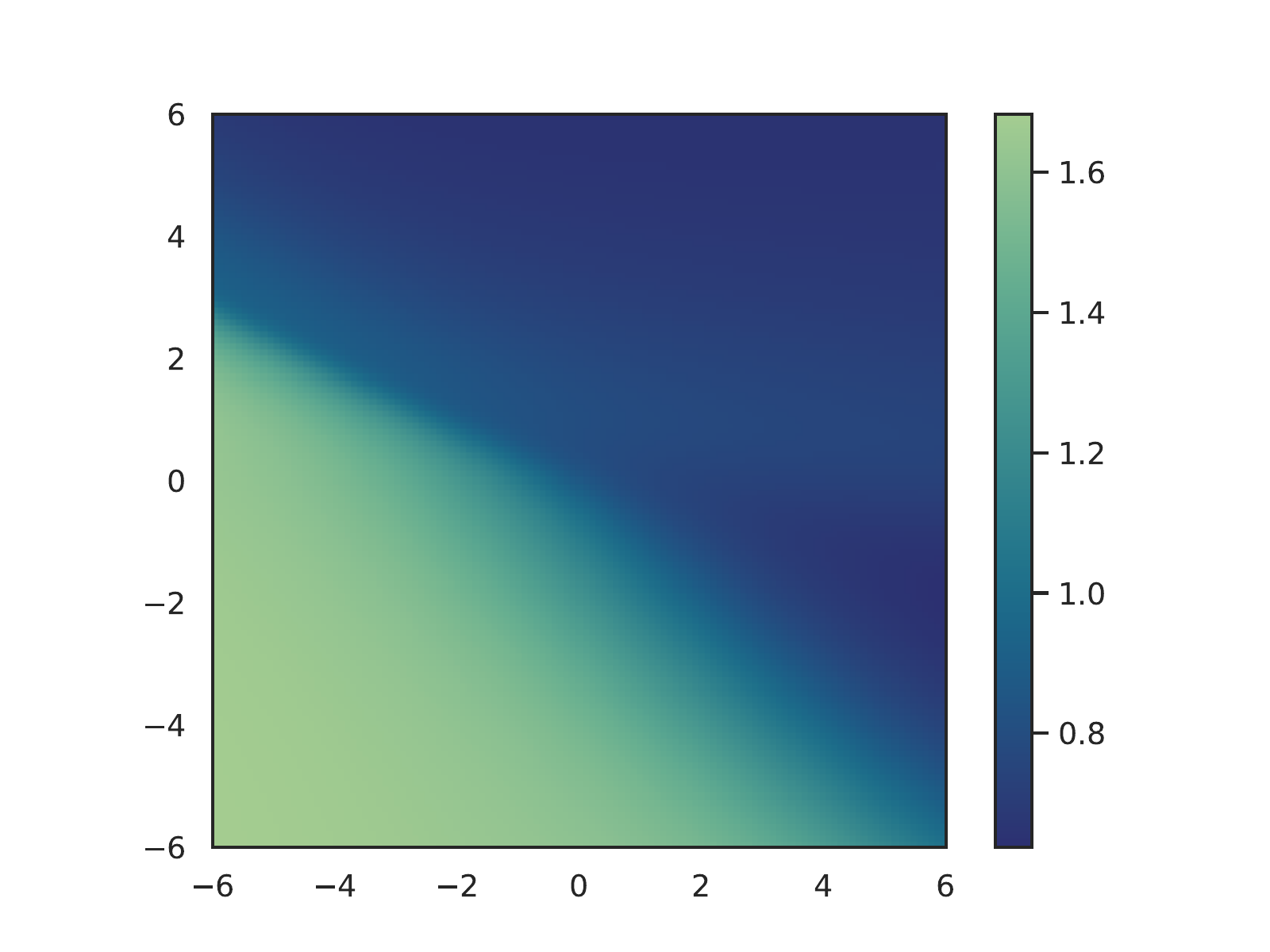}
    \includegraphics[width=0.25\textwidth, trim=30 20 20 38, clip]{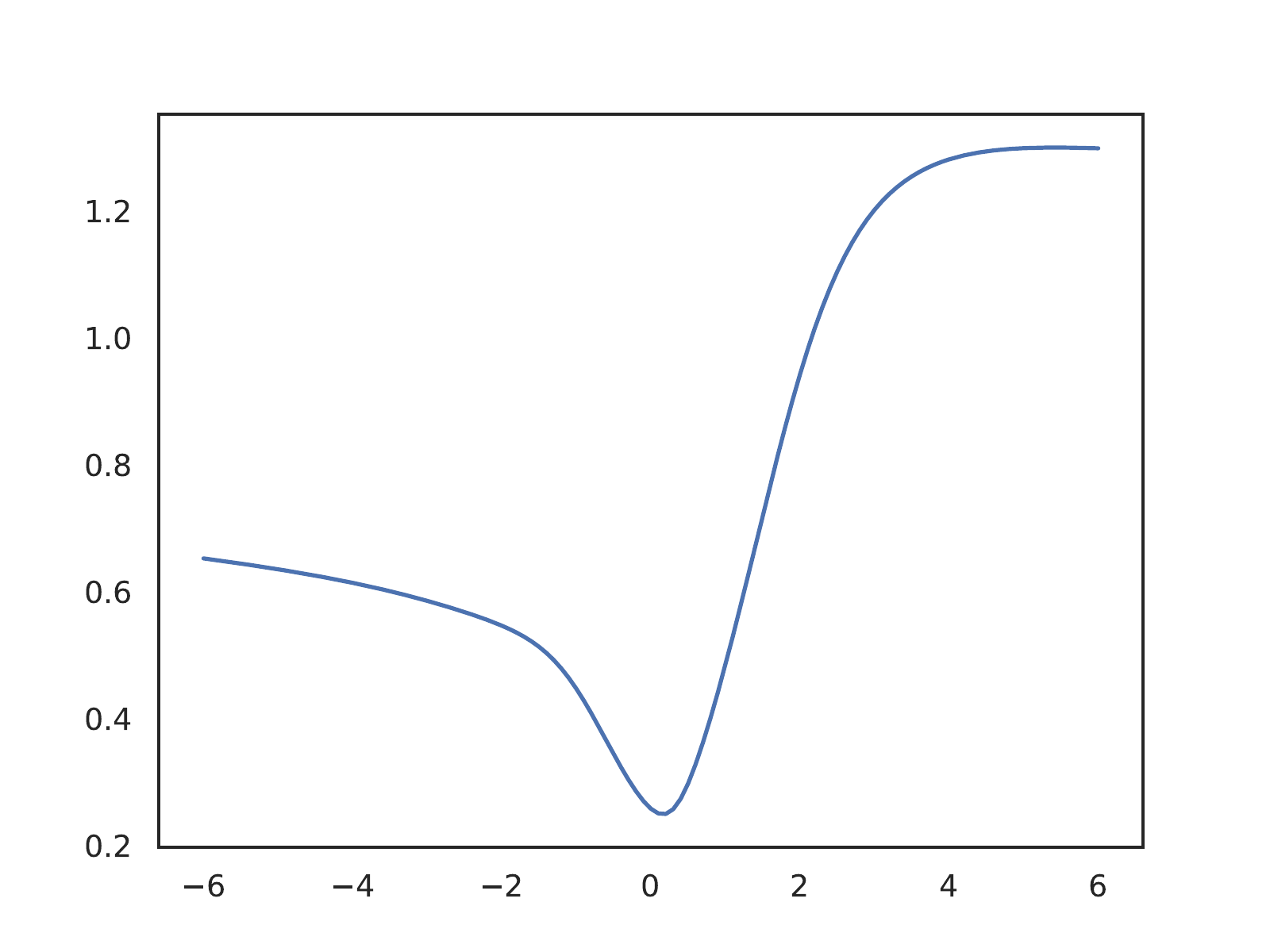}
    \includegraphics[width=0.32\textwidth, trim=30 20 20 38, clip]{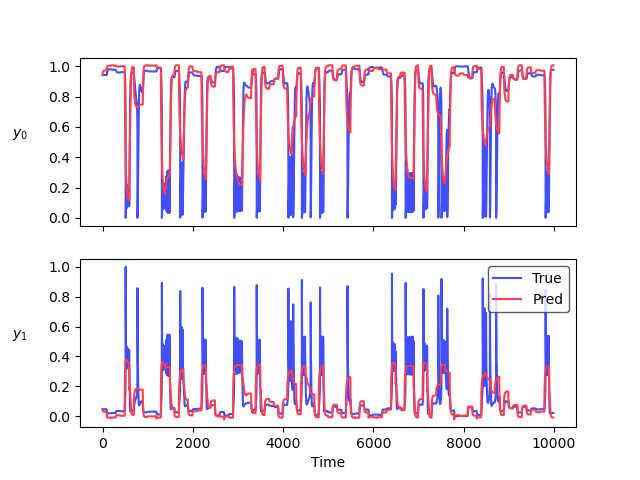}
    \caption{Dissipativity analysis of learned neural surrogate~\eqref{eq:ssm} of the CSTR system.
    Left: state space regions associated with spectral norms of two dimensional state dynamics map $\mathbf{f}(\mathbf{x})$. Middle: spectral norms of $\mathbf{g}(\mathbf{u})$ as a function of inputs $\mathbf{u}$. Right: simulation traces of the learned model (red) compared to ground truth (blue).}
    \label{fig:system_1}
\end{figure*}
\begin{figure*}[htb!]
    \centering
    \includegraphics[width=0.25\textwidth, trim=50 20 20 38, clip]{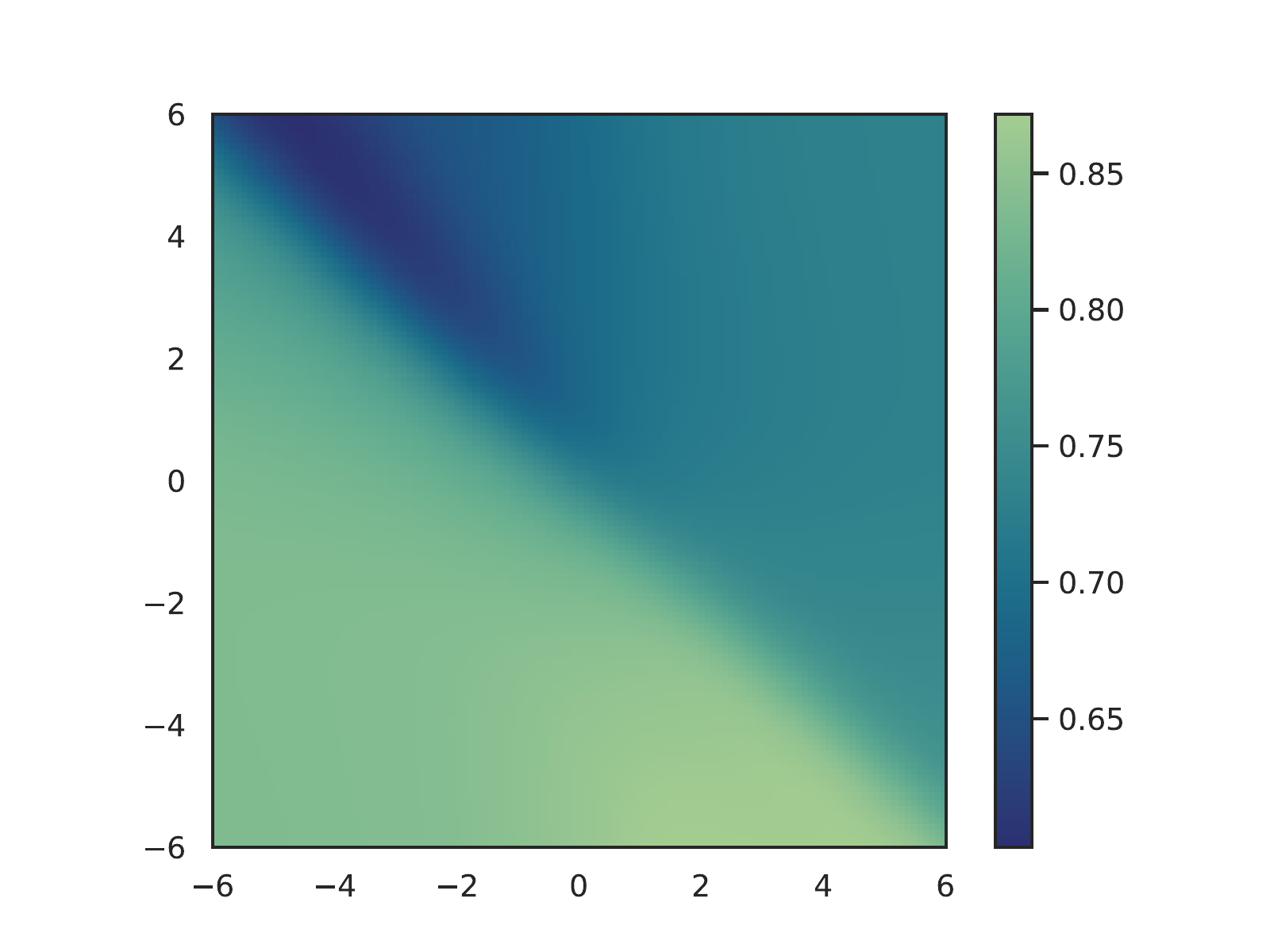}
    \includegraphics[width=0.25\textwidth, trim=50 20 20 38, clip]{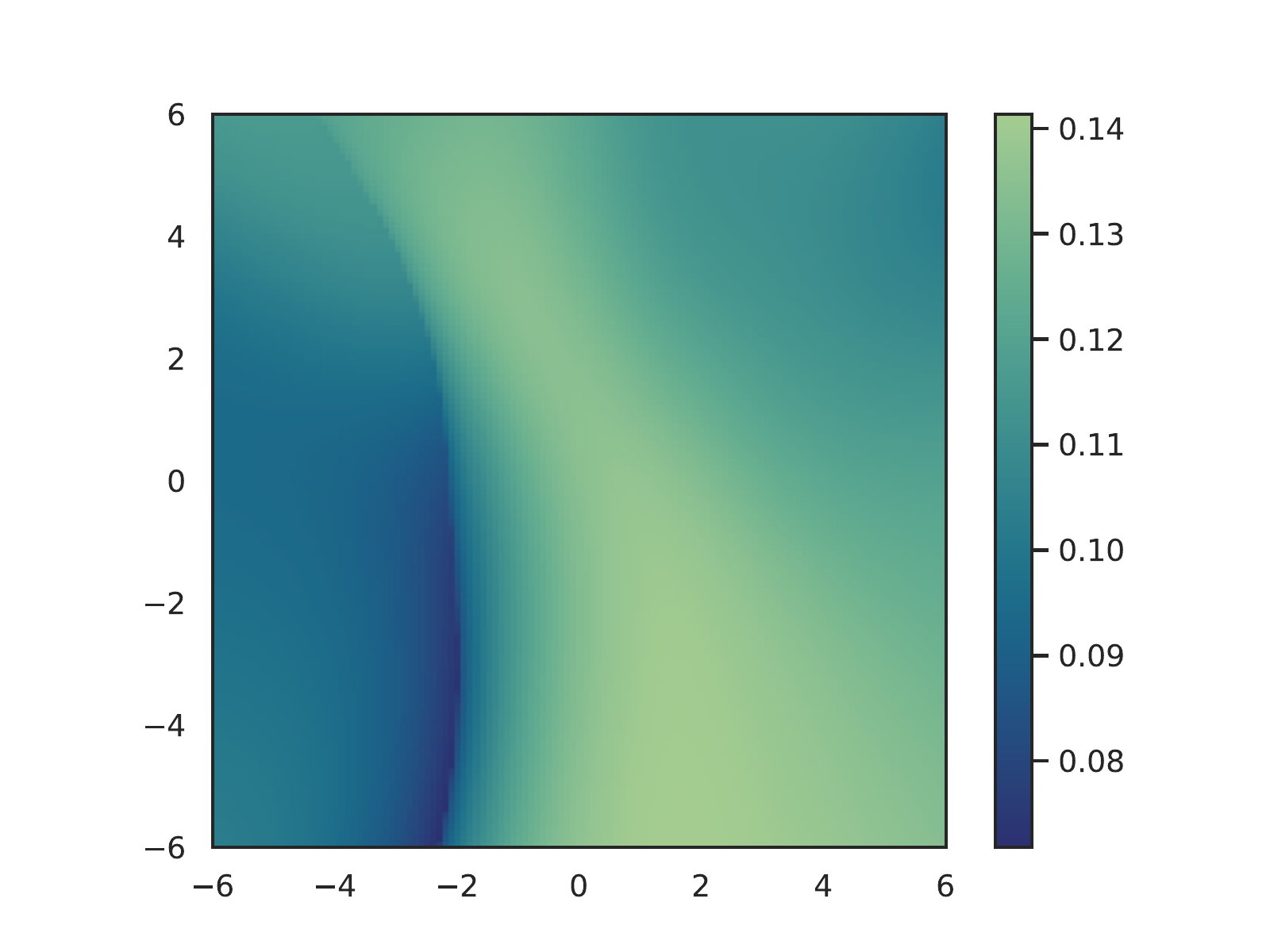}
    \includegraphics[width=0.32\textwidth, trim=30 20 20 38, clip]{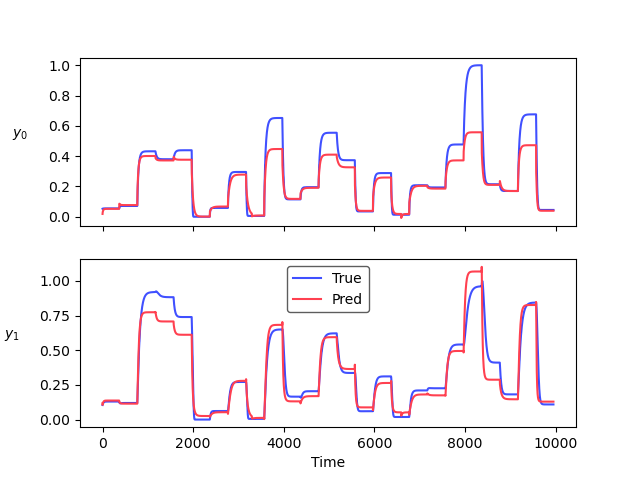}
    \caption{Dissipativity analysis of learned neural surrogate~\eqref{eq:ssm} of the Two Tank system. Left: state space regions associated with spectral norms of two dimensional state dynamics map  $\mathbf{f}(\mathbf{x})$. Middle: spectral norms of $\mathbf{g}(\mathbf{u})$ as a function of inputs $\mathbf{u}$. Right: simulation traces of the learned model (red) compared to ground truth (blue).}
    \label{fig:system_2}
\end{figure*}

 \subsection{Attractors of Deep Neural Dynamics}

Understanding dynamical effects of individual components of deep neural networks allow us to analyze neural dynamics with different attractors.
 In neuroscience, different types of attractor networks have been associated with different brain functions~\cite{Eliasmith2007}.
For instance, it is known that cyclic attractors 
can describe repetitive behaviors such as walking, line attractors have been linked with oculomotor control and integrators in control theory, while point attractors have been linked with associative memory, pattern completion, noise reduction, and classification tasks~\cite{Eliasmith2007}.
Here we empirically demonstrate the expressive capacity of deep neural network for dynamical systems by generating six different types of attractors and analyze their dissipativity, eigenvalue spectra, and state space partitioning. 
Fig.~\ref{fig:attractors} plots state space trajectories of different neural networks generating:
single equilibrium, multiple equilibria, line attractor, limit cycle, quasi-periodic attractor, and unstable attractor, respectively.
Fig.~\ref{fig:AstarRegions}  plots the scalar field of the local operator norms  $||\mathbf{A}^{\star}||$, thus visualizes the network sensitivity to perturbations of the state space, where warmer regions correspond to higher and colder regions to lower sensitivity, respectively. The norm values below $1$ represent dissipative regions of the state space.
Fig.~\ref{fig:attractors_spectra} displays associated eigenvalue spectra. We can observe that the number of eigenvalues roughly corresponds to the number of unique state space regions in shown Fig.~\ref{fig:AstarRegions}. Moreover, eigenvalues with larger dispersion in the complex plane of Fig.~\ref{fig:attractors_spectra}
can be linked with more complex state space trajectories shown in Fig.~\ref{fig:attractors}.
   \begin{figure*}[htbp!]
        \centering
        \hspace{-1.5cm}
        \includegraphics[width=0.20\textwidth, trim=70 20 80 40, clip]{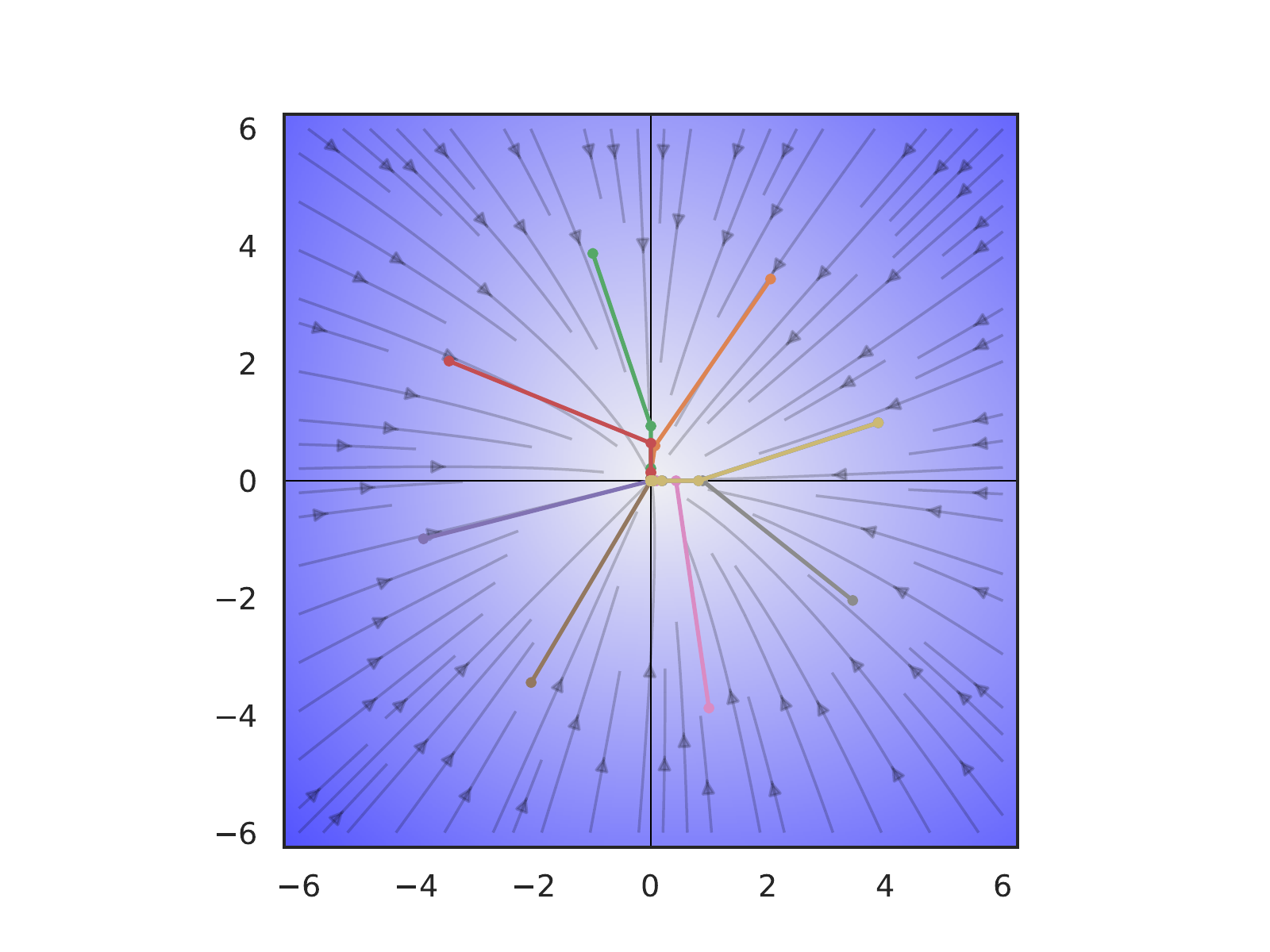}
        \includegraphics[width=0.20\textwidth, trim=70 20 80 40, clip]{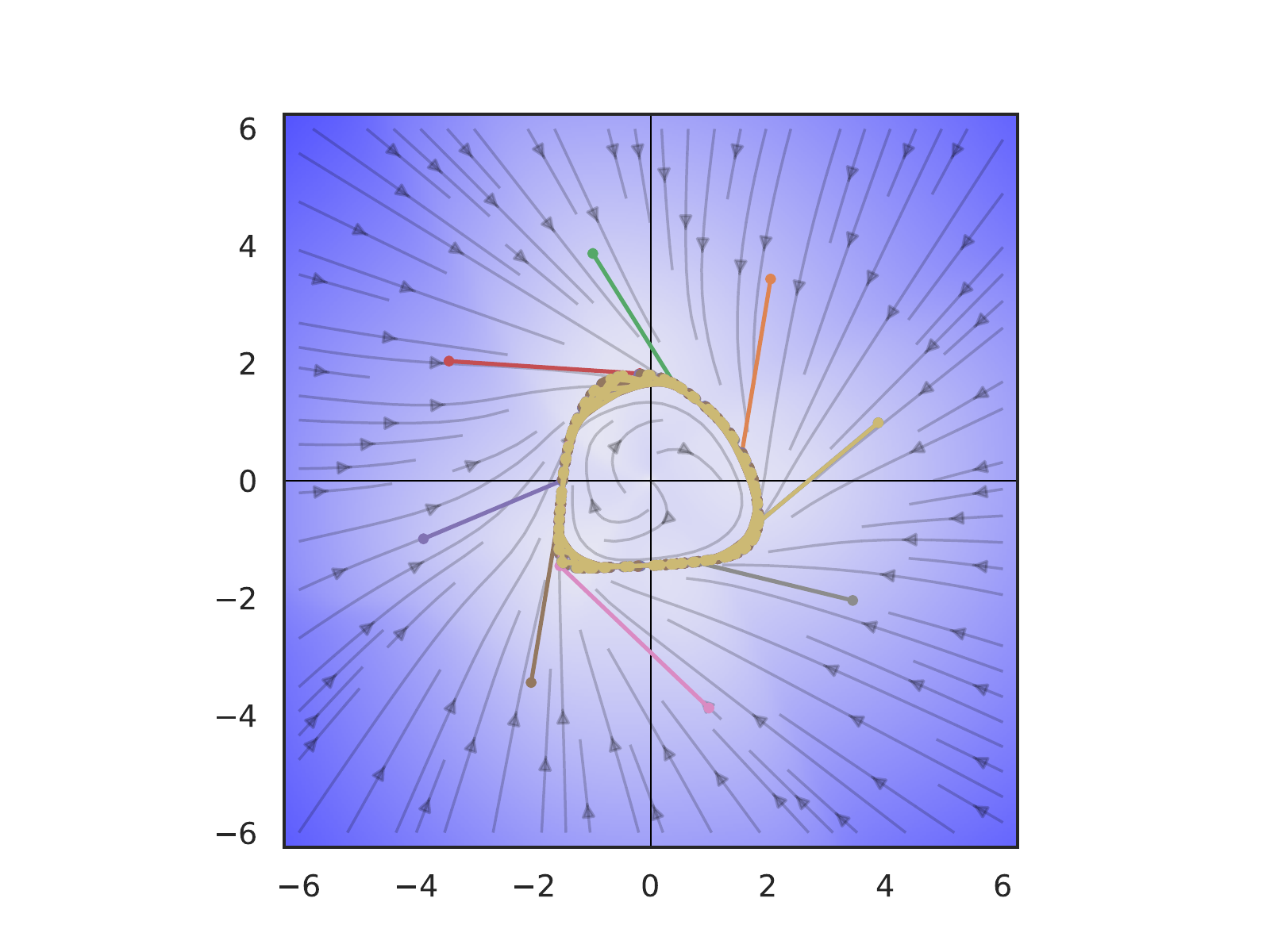} 
        \includegraphics[width=0.20\textwidth, trim=70 20 80 40, clip]{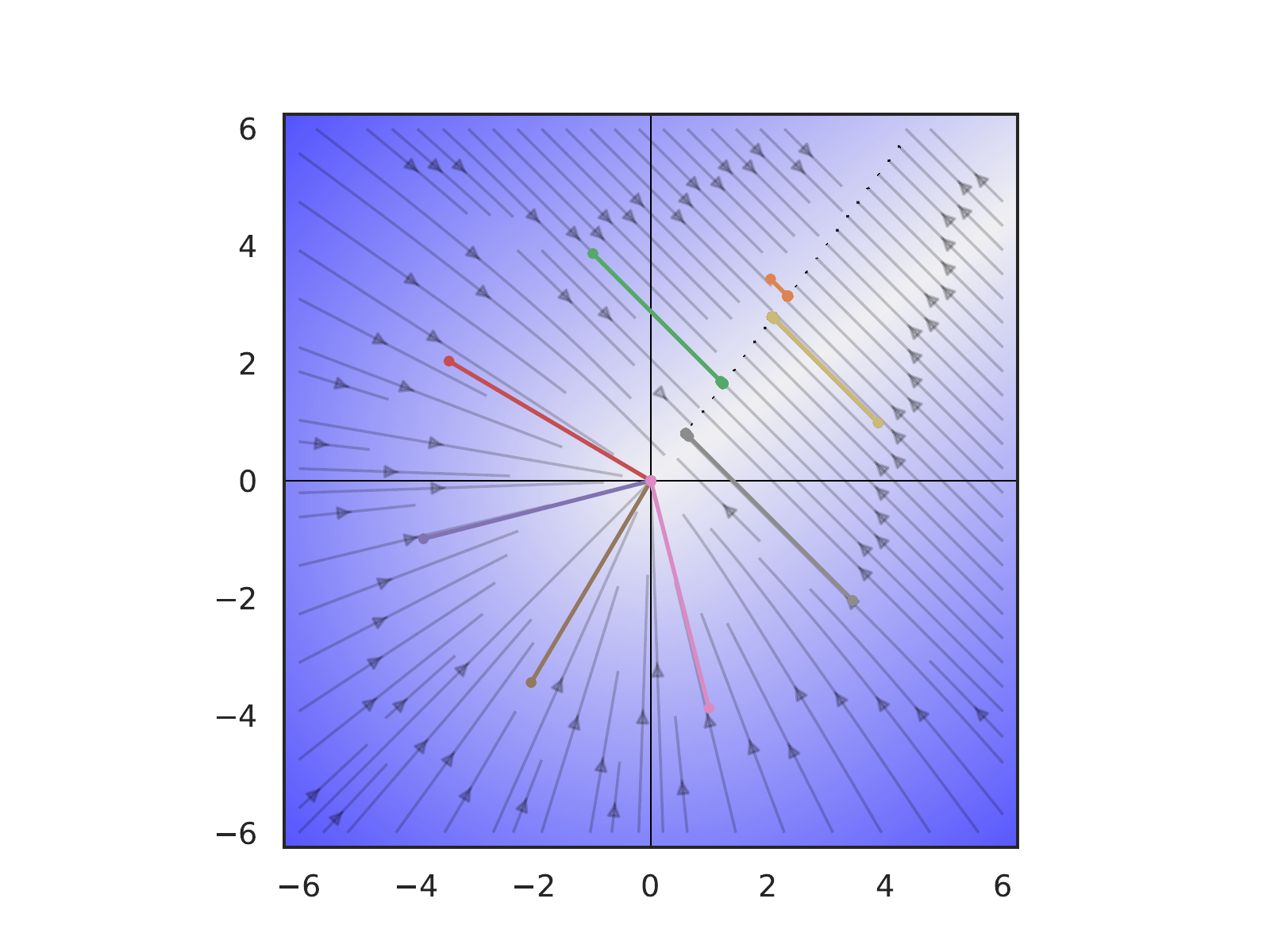} \\ \vspace{16pt}
         \hspace{-1.5cm}
            \includegraphics[width=0.20\textwidth, trim=70 20 80 40, clip]{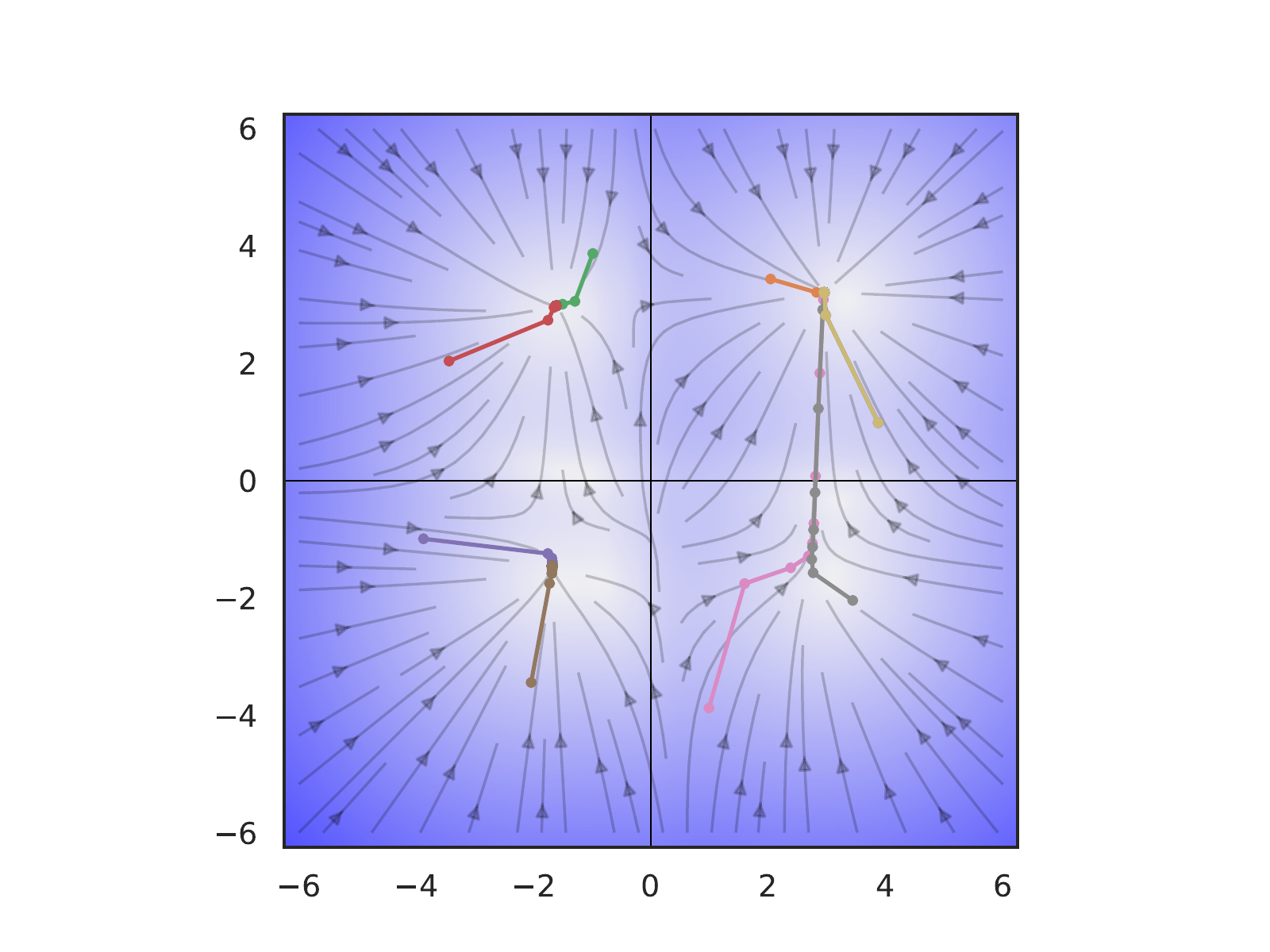} 
        \includegraphics[width=0.20\textwidth, trim=70 20 80 40, clip]{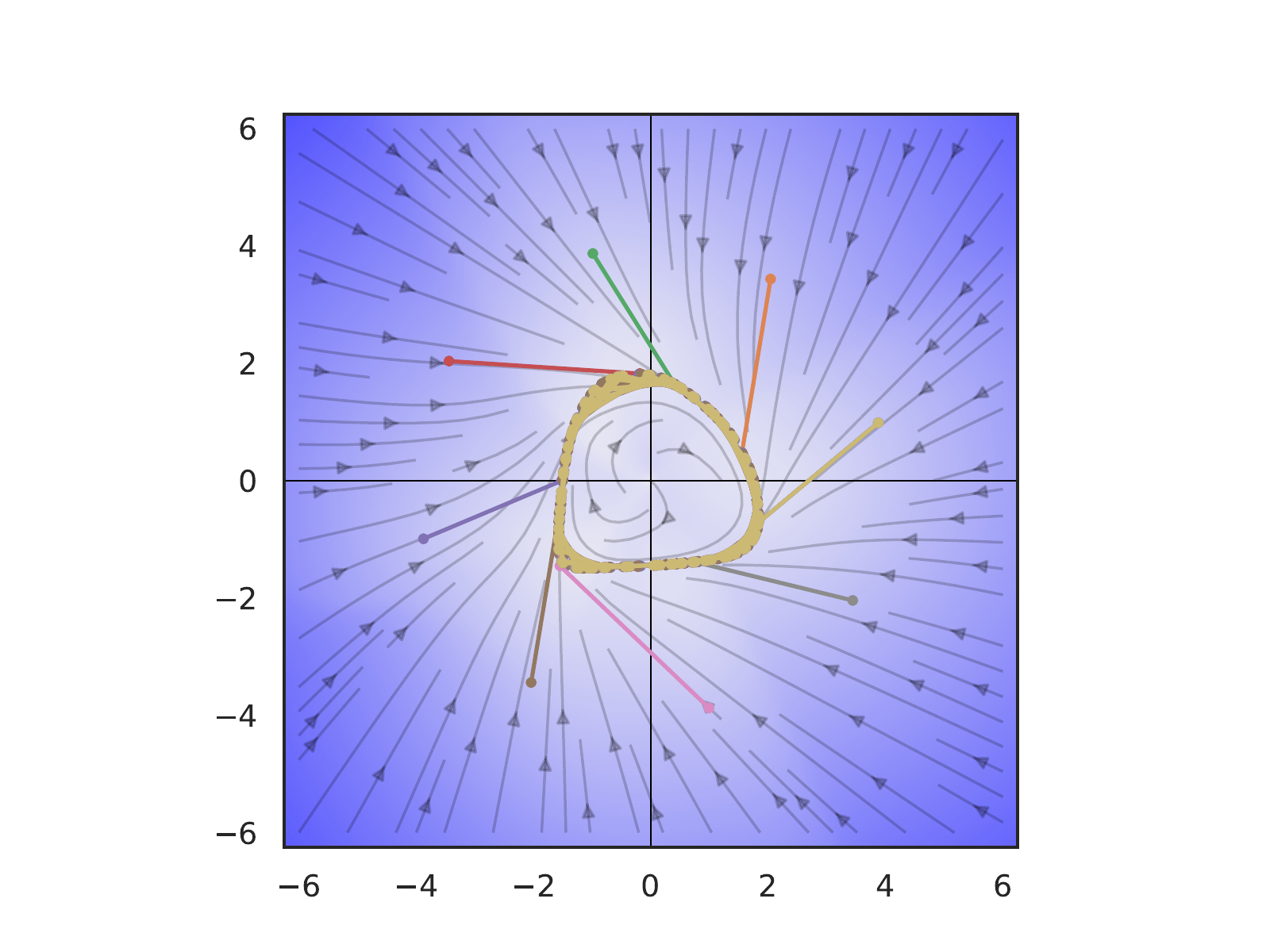} 
        \includegraphics[width=0.20\textwidth, trim=70 20 80 40, clip]{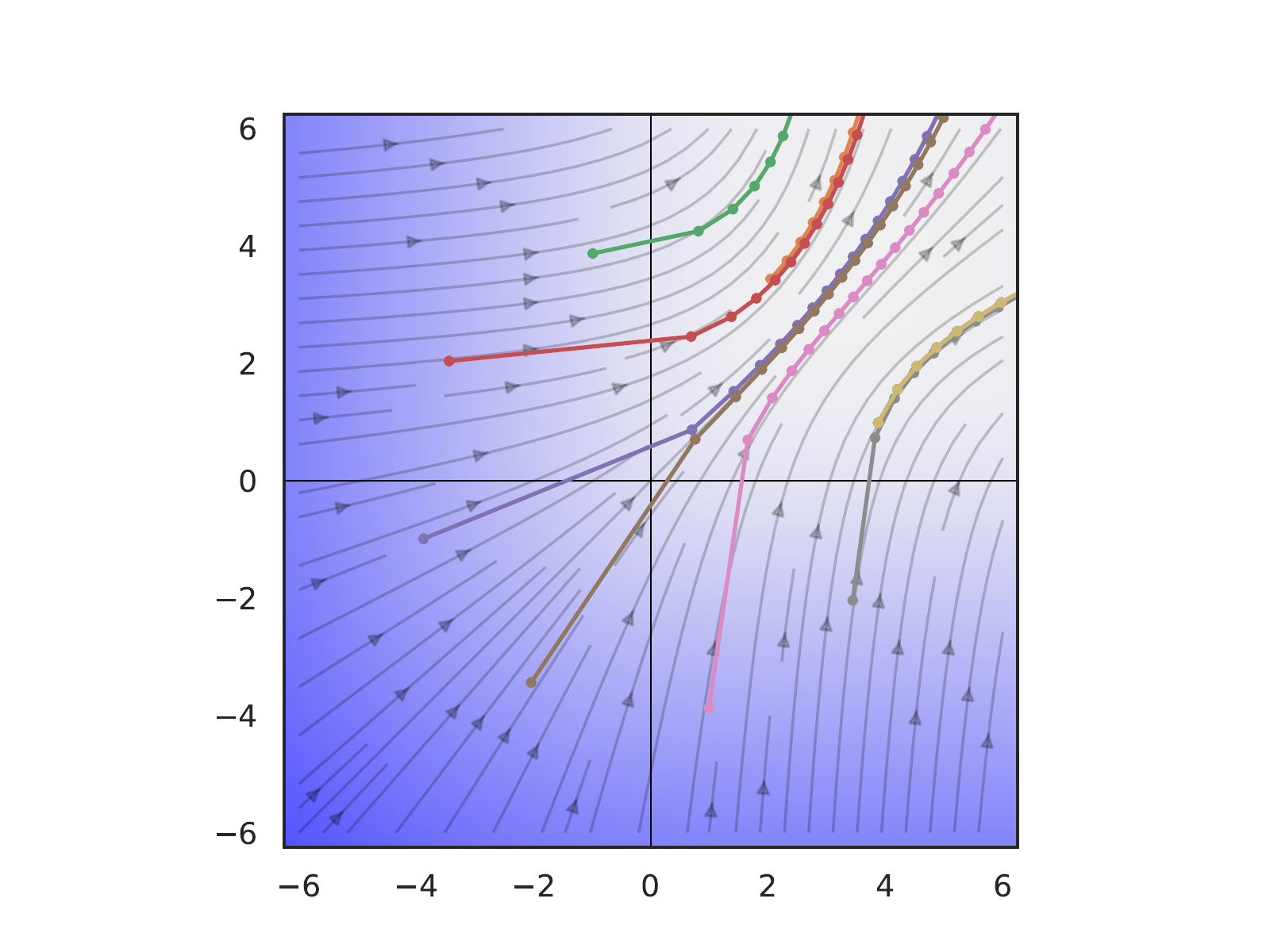} 
        \caption{Different attractor types generated by deep neural networks. From left to right: single point,  limit cycle,
        line attractor, multiple points, quasi-periodic attractor, unstable dynamics.}
        \label{fig:attractors}
        % \vspace{-10pt}
    \end{figure*}
    \begin{figure*}[htbp!]
        \centering
        \includegraphics[width=0.20\textwidth, trim=60 20 20 38, clip]{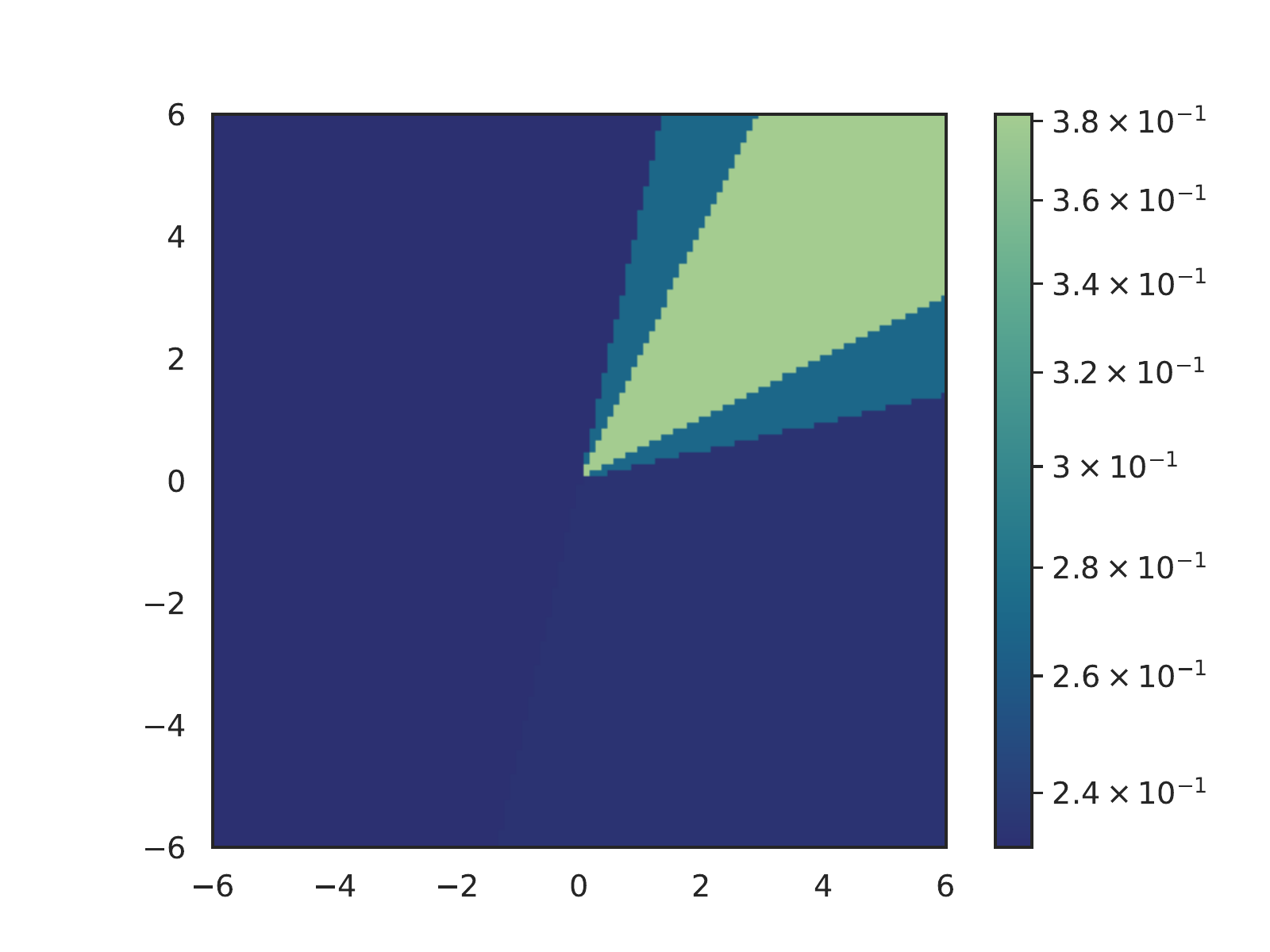} 
        \includegraphics[width=0.20\textwidth, trim=60 20 20 38, clip]{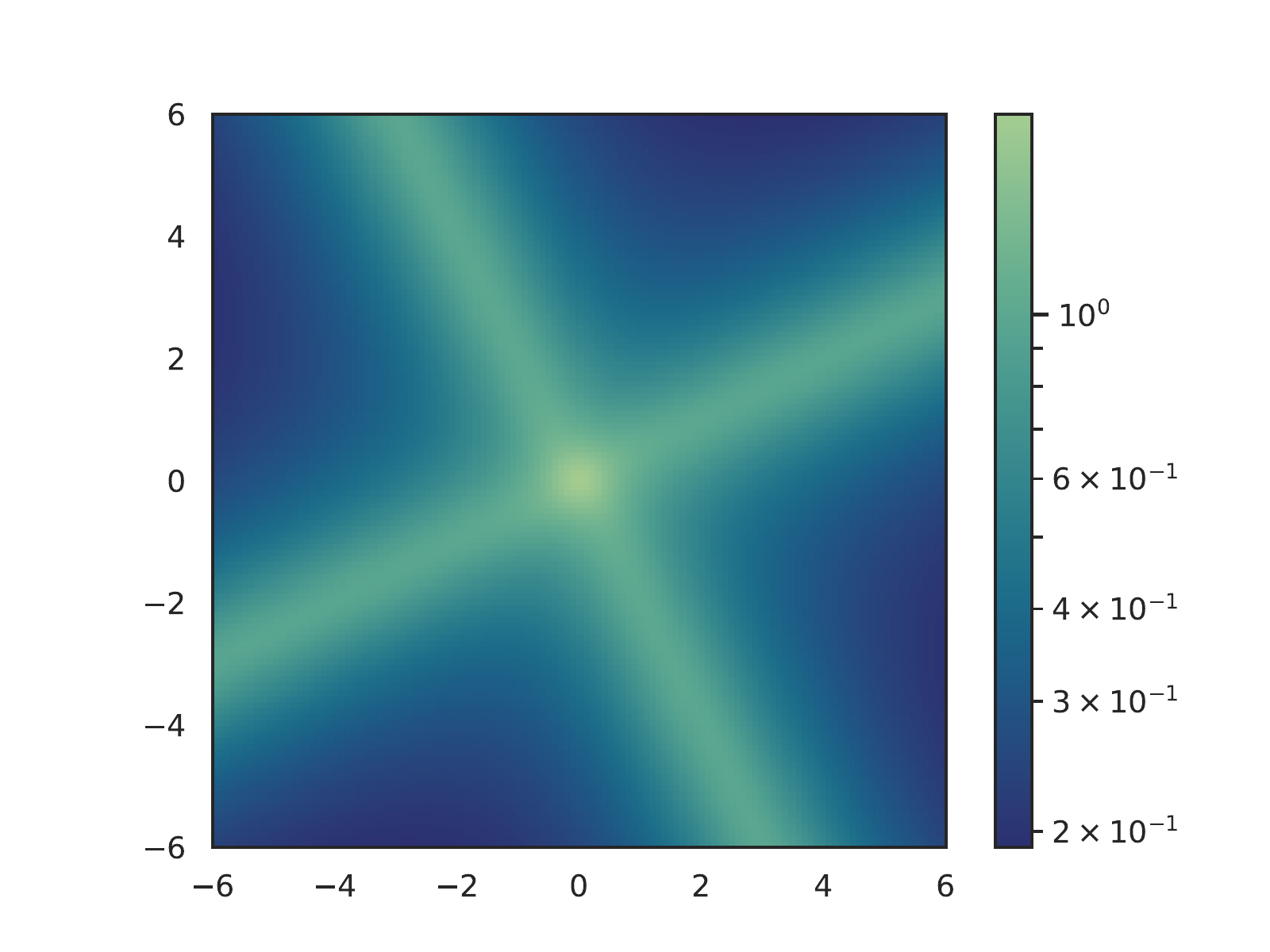} 
        \includegraphics[width=0.20\textwidth, trim=60 20 20 38, clip]{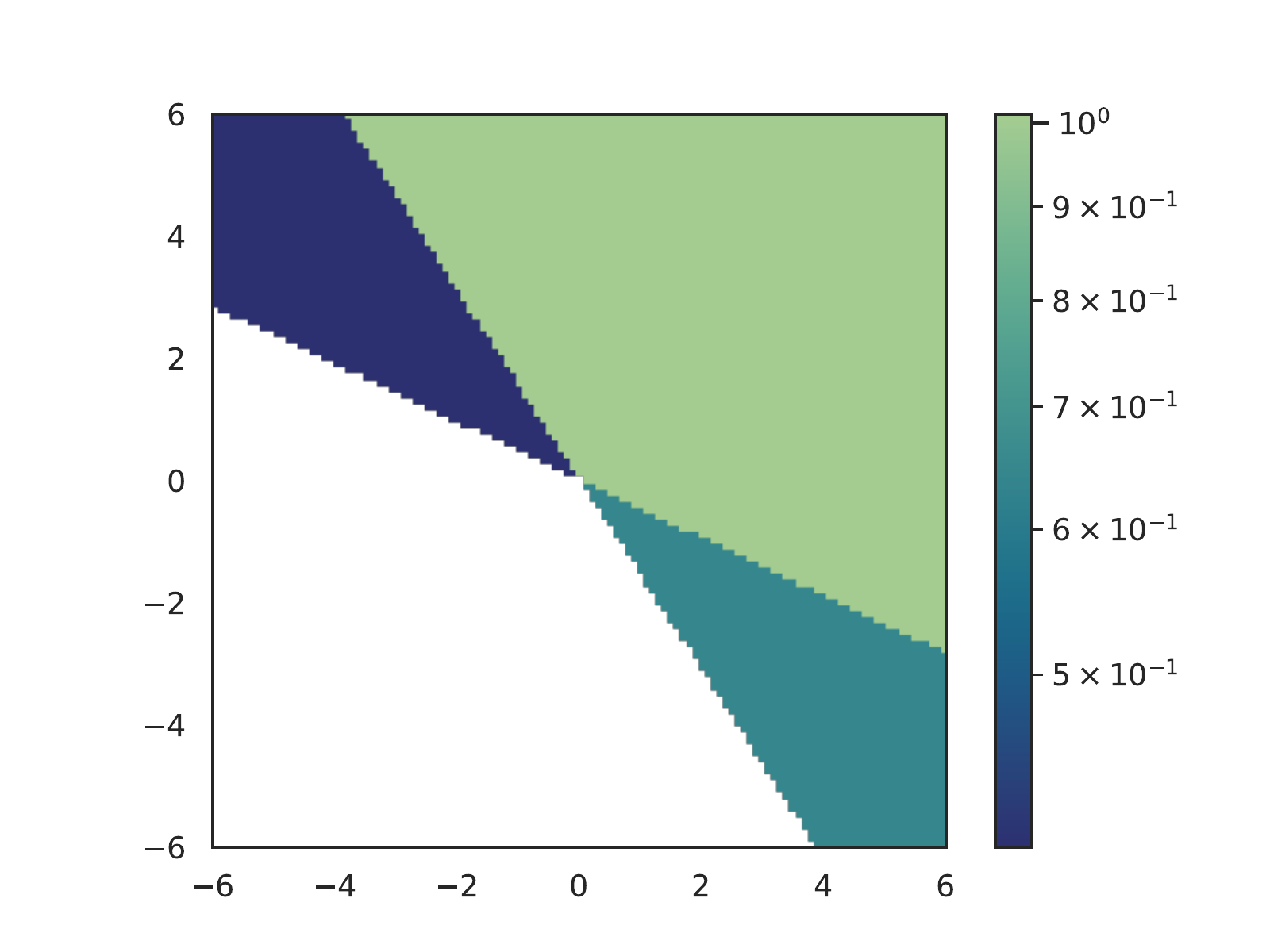} \\ \vspace{16pt}
        \includegraphics[width=0.20\textwidth, trim=60 20 20 38, clip]{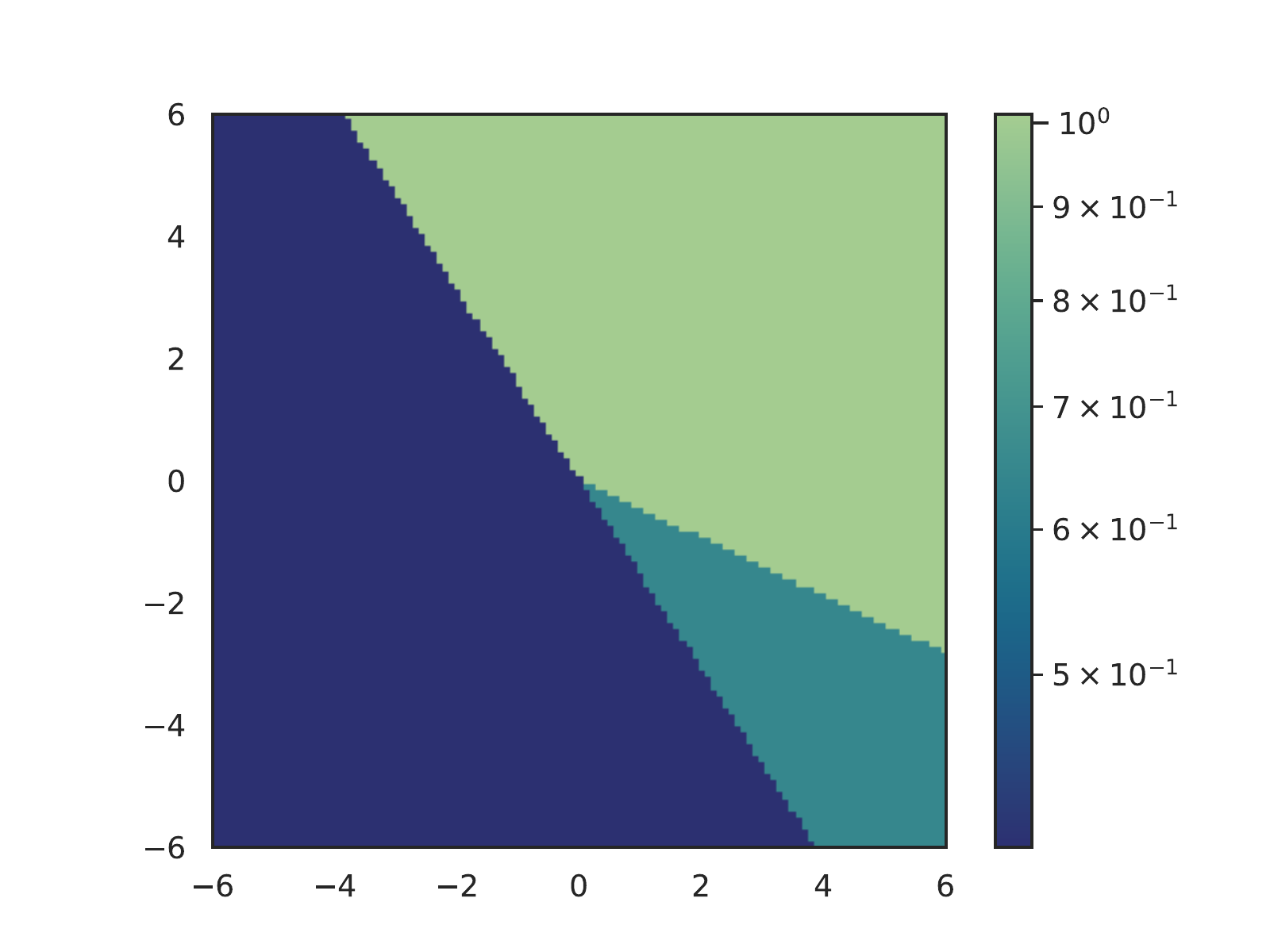} 
        \includegraphics[width=0.20\textwidth, trim=60 20 20 38, clip]{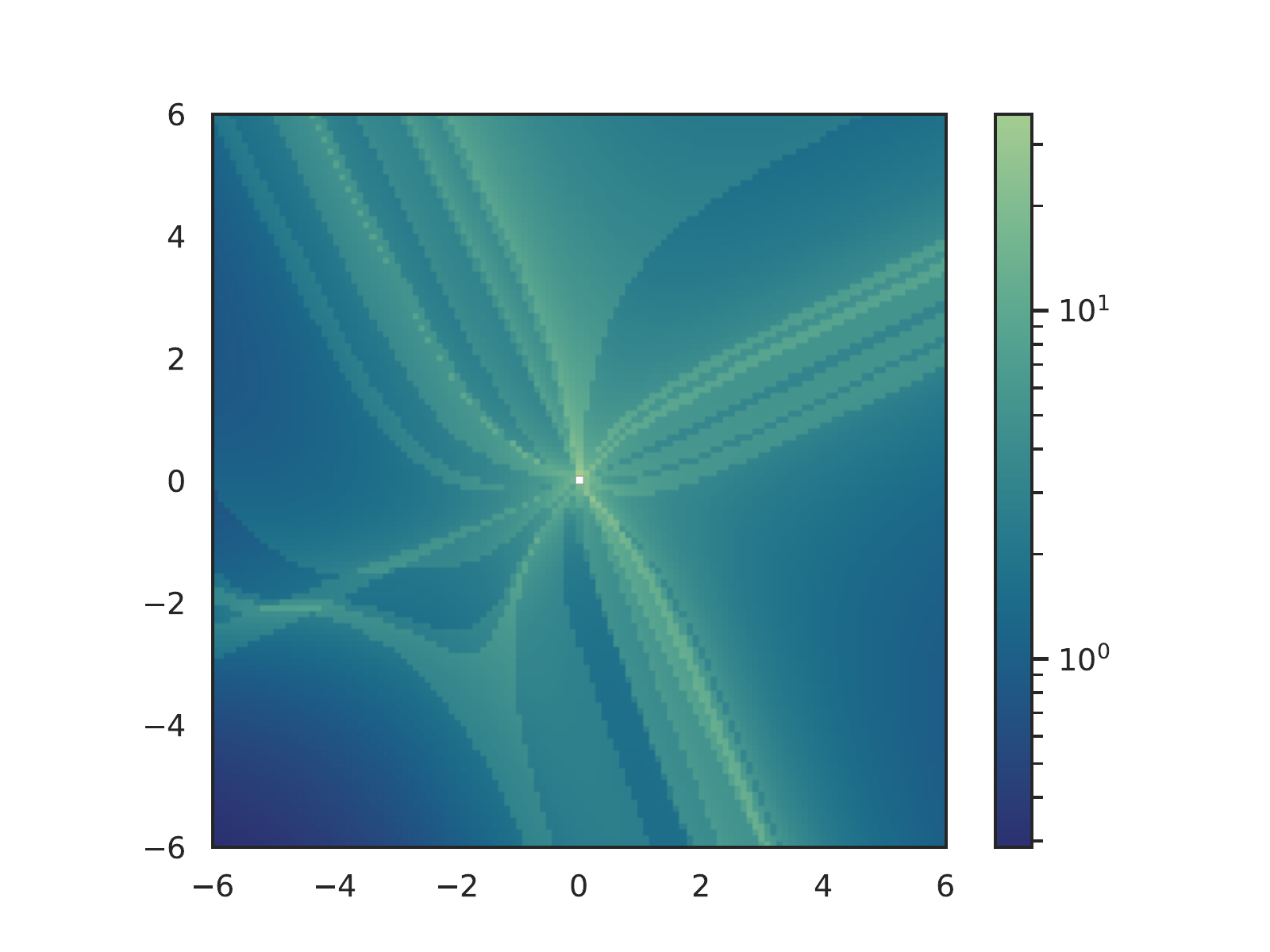} 
        \includegraphics[width=0.20\textwidth, trim=60 20 20 38, clip]{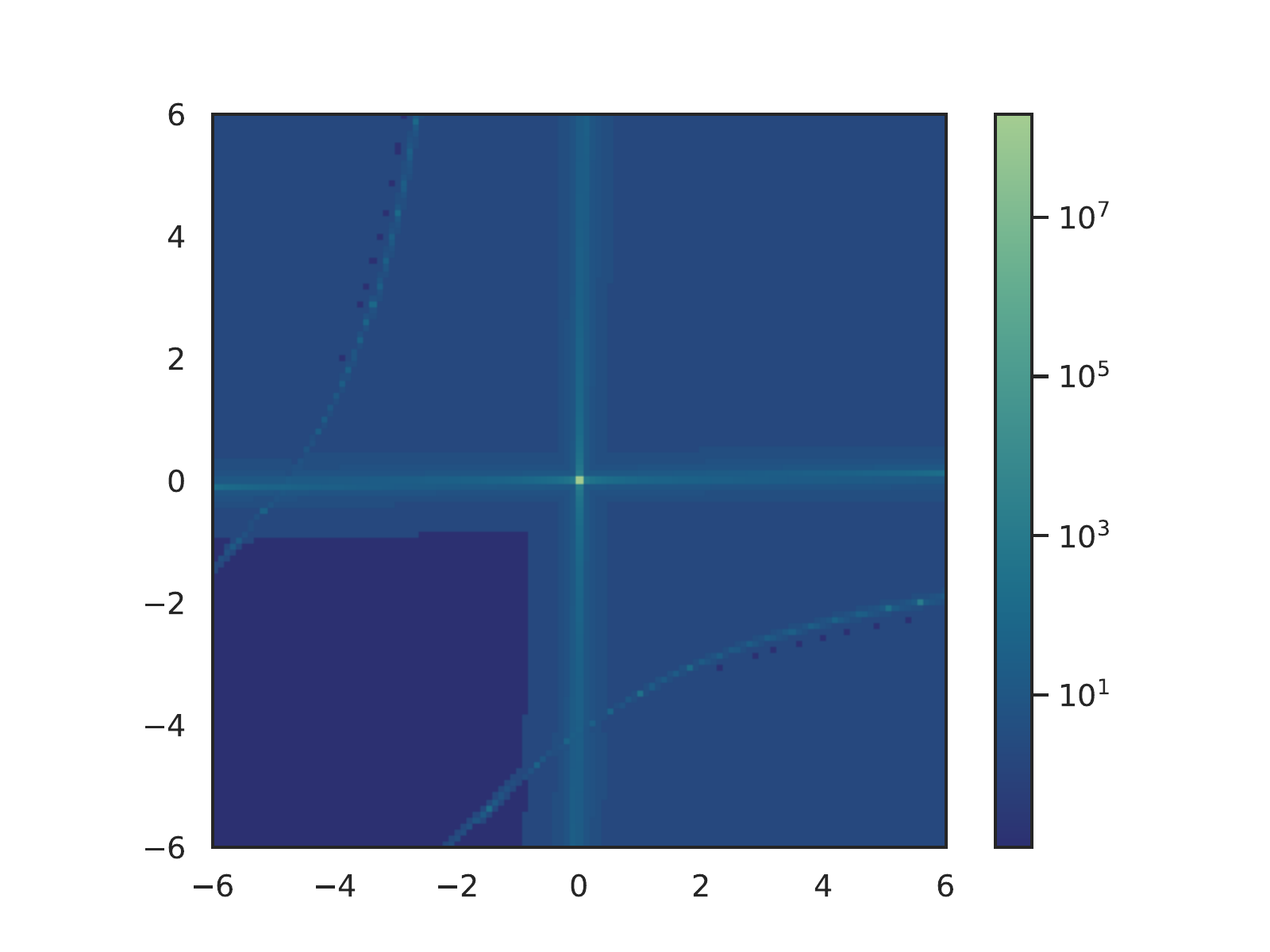} 
        \caption{State space regions represented by different values of local operator norms  corresponding to Fig. \ref{fig:attractors}.}
        \label{fig:AstarRegions}
        % \vspace{-10pt}
    \end{figure*}
    \begin{figure*}[htbp!]
        \centering
         \hspace{-1.0cm}
        \includegraphics[width=0.20\textwidth, trim=50 0 60 0, clip]{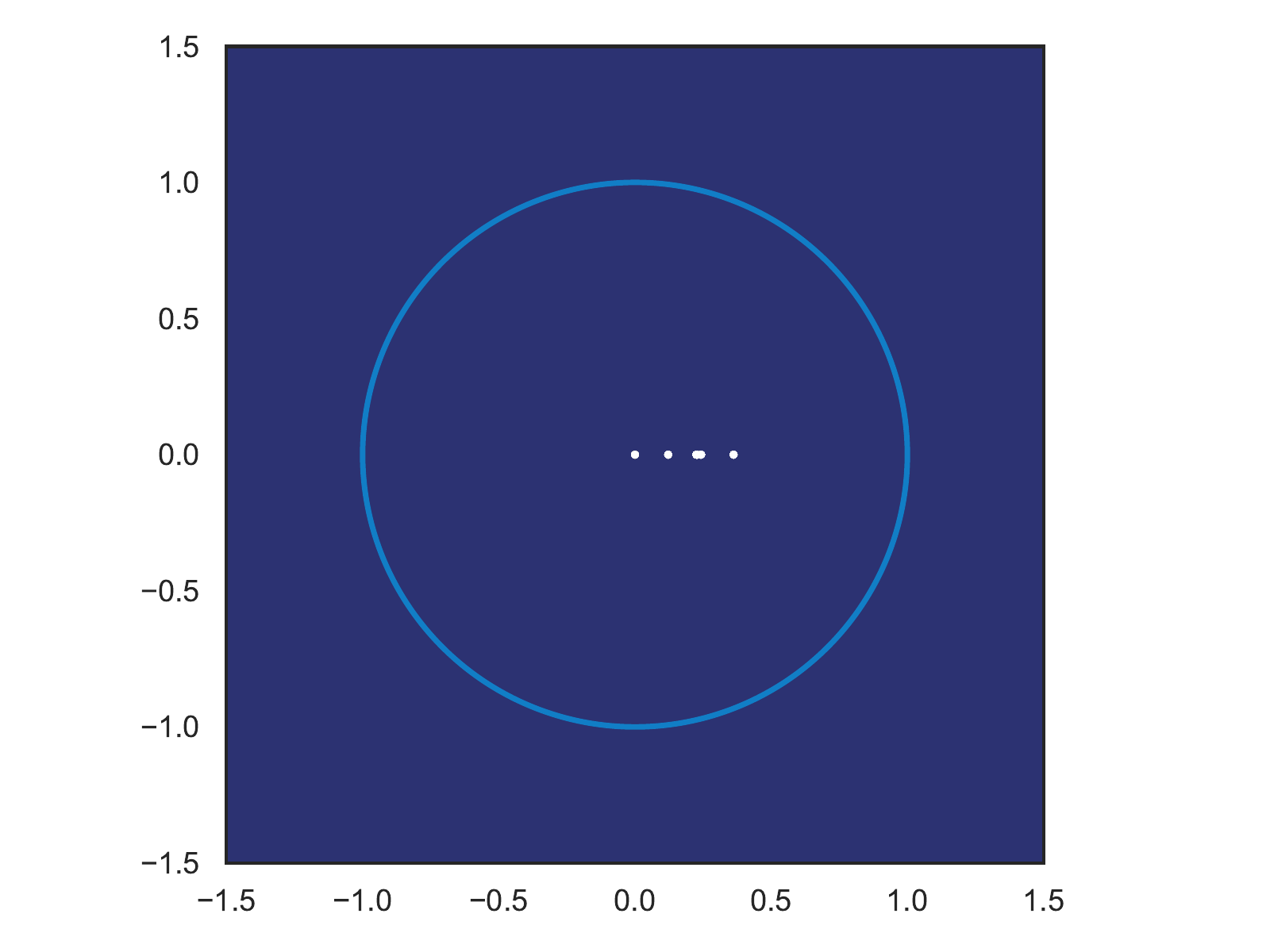} 
        \includegraphics[width=0.20\textwidth, trim=50 0 60 0, clip]{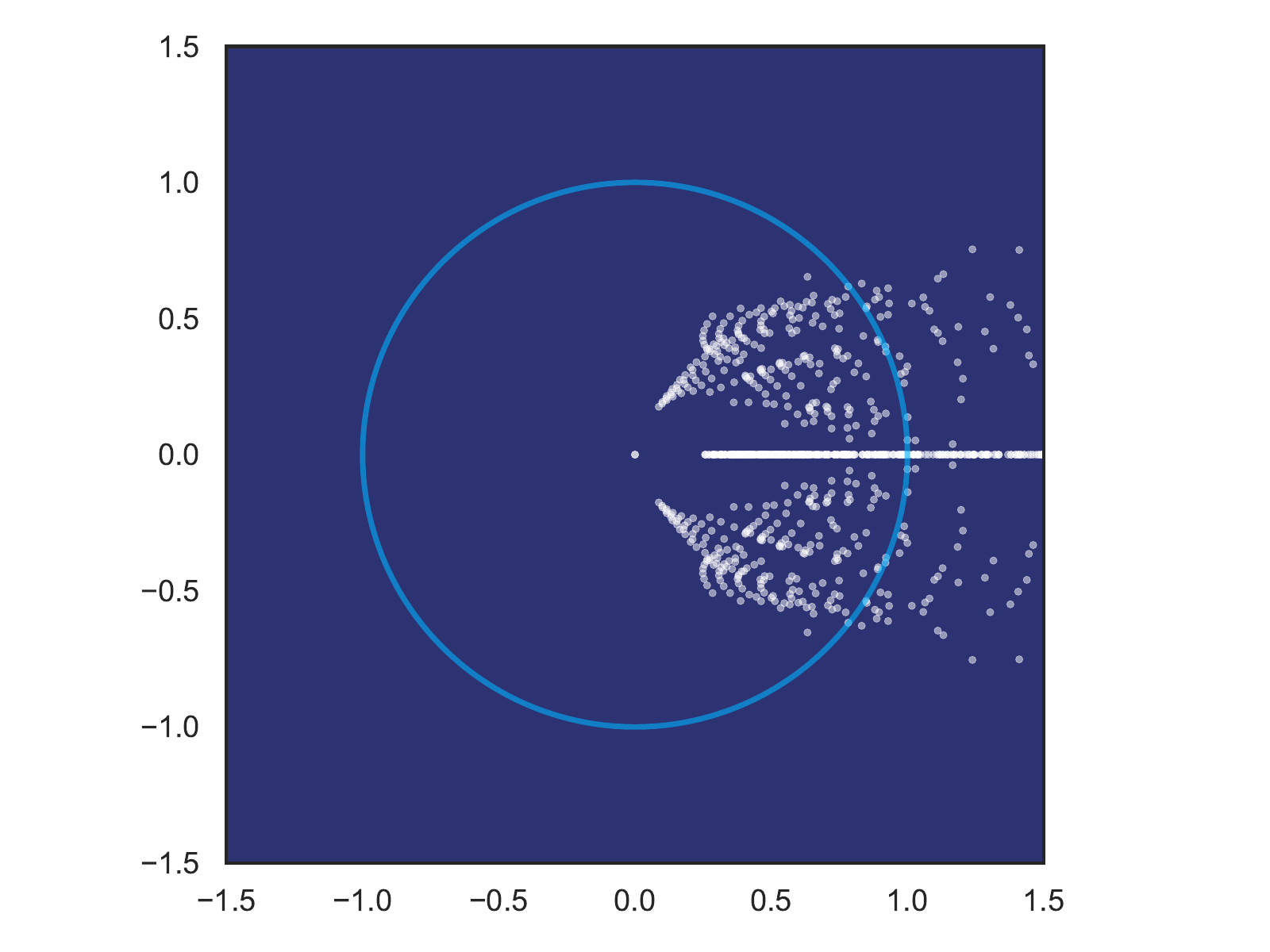} 
        \includegraphics[width=0.20\textwidth, trim=50 0 60 0, clip]{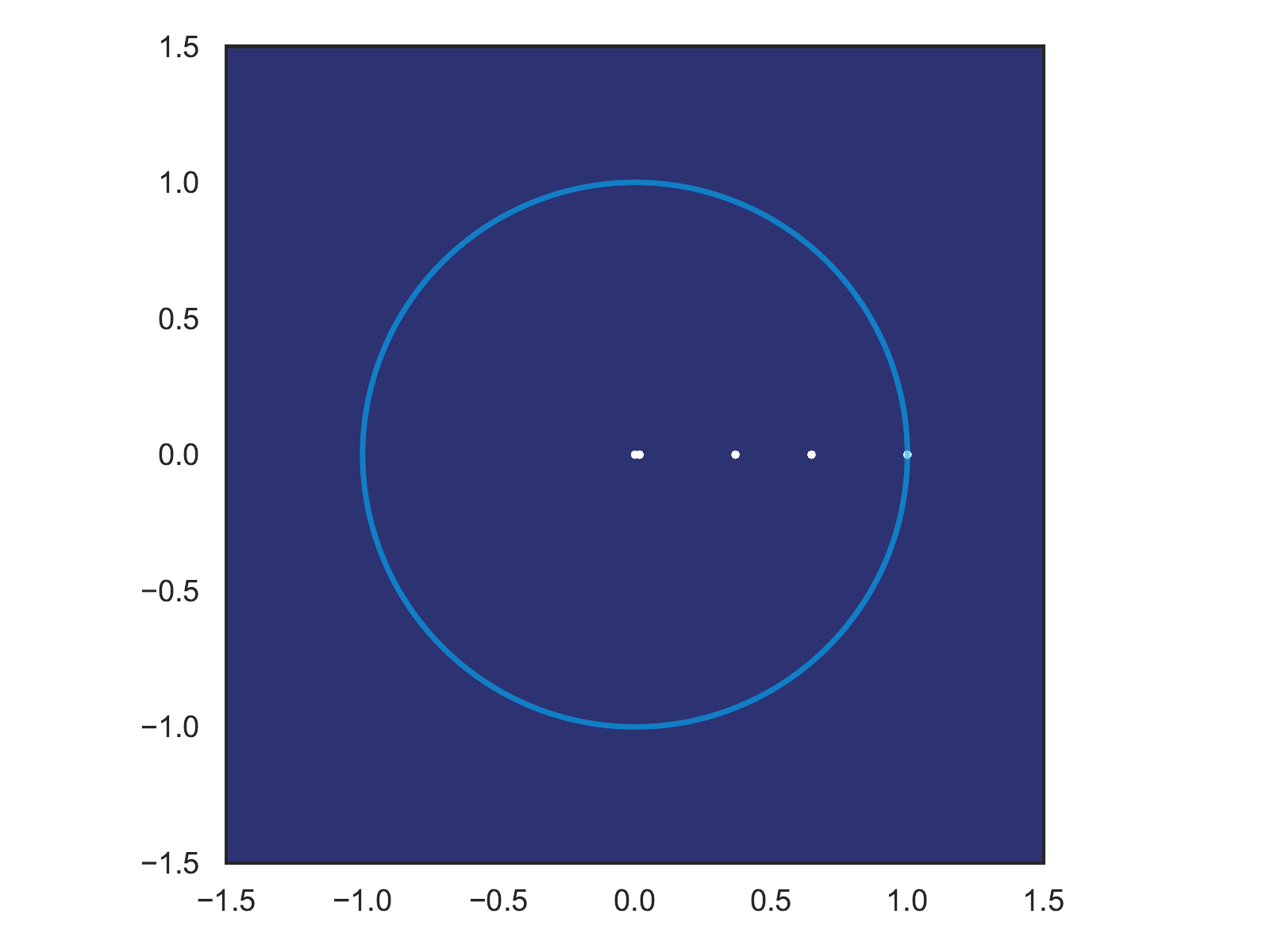} \\  \hspace{-1.0cm}
         \includegraphics[width=0.20\textwidth, trim=50 0 60 0, clip]{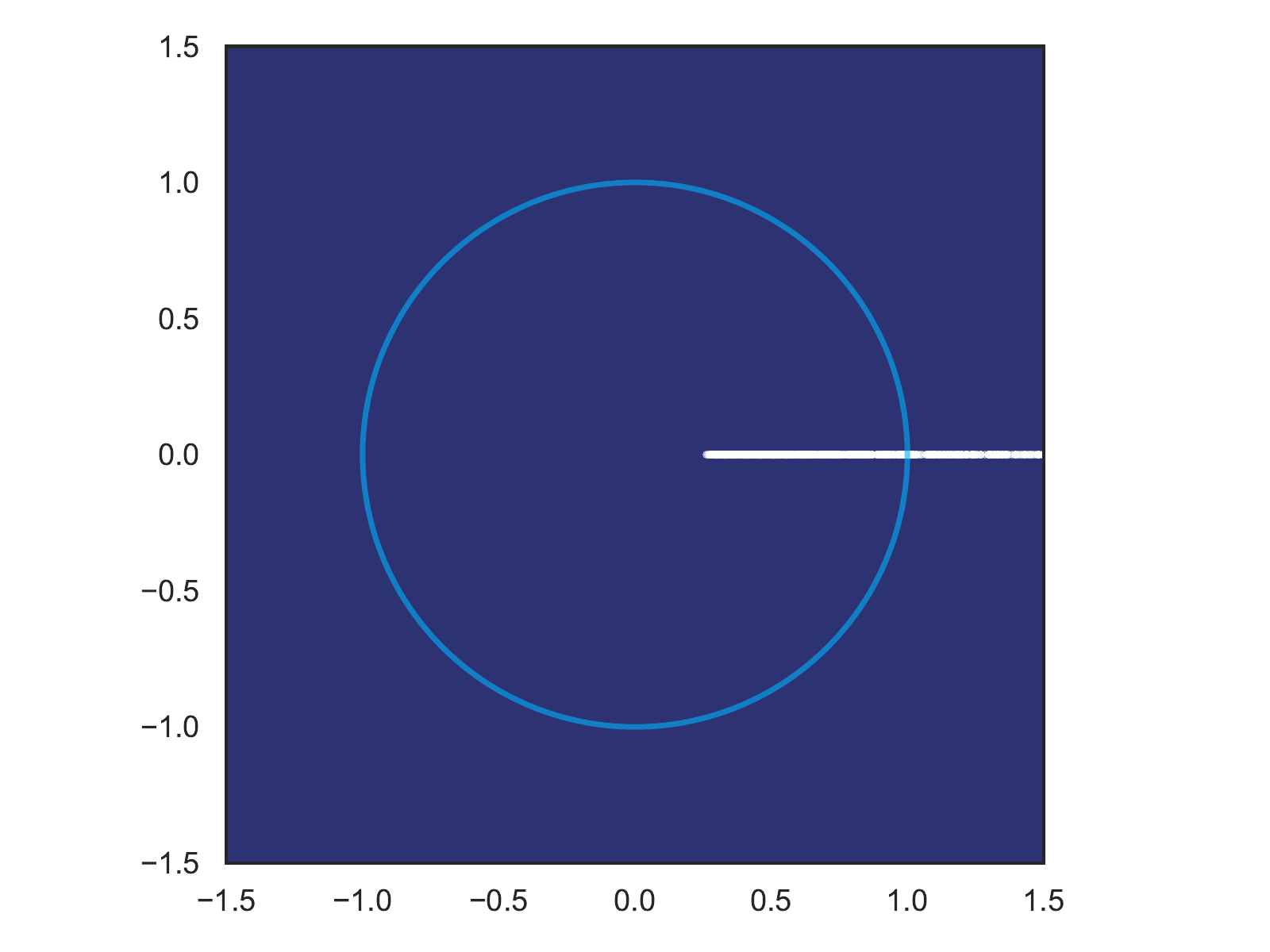} 
        \includegraphics[width=0.20\textwidth, trim=50 0 60 0, clip]{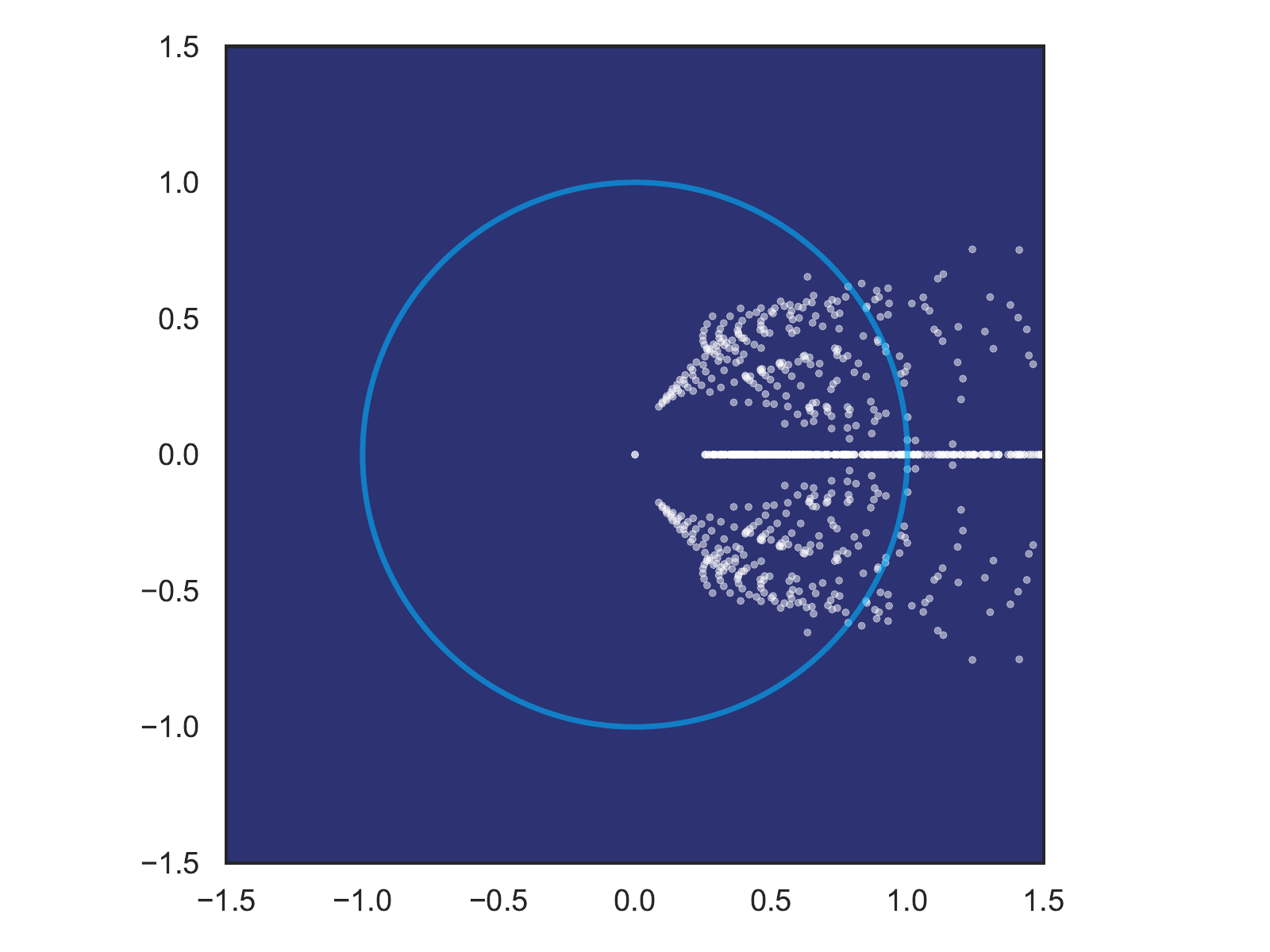} 
        \includegraphics[width=0.20\textwidth, trim=50 0 60 0, clip]{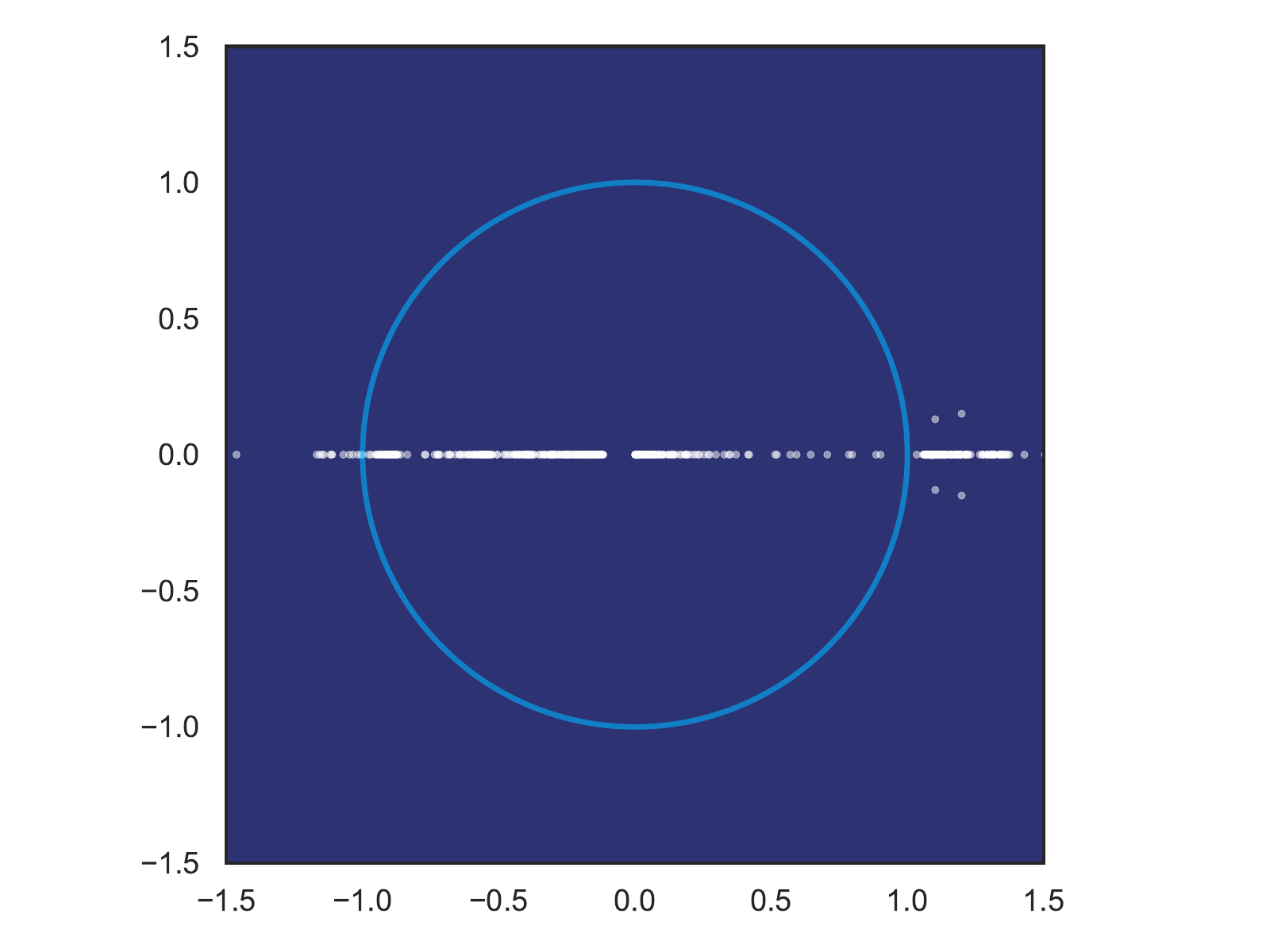} 
        \caption{Eigenvalue spectra in the complex plane associated with different attractor types in Fig.~\ref{fig:attractors}. }
        \label{fig:attractors_spectra}
        % \vspace{-10pt}
    \end{figure*}
    
From our empirical analysis, we derive the following observations.
Single point attractors and limit cycles can be generated by dissipative deep neural networks with zero bias terms. 
Multi-point attractors can not be obtained with zero bias terms.
For generating more complex trajectories such as limit cycle attractors, the network must have both stable and unstable regions as given by Bendixson–Dulac theorem~\cite{mccluskey_bendixson_dulac_1998}. Thus more complex attractors must have both dissipative and non-dissipative regions in the state space.

\section{Conclusions}

In this work, we provide a dissipativity theory-based perspective on discrete-time neural dynamical systems.
 As the paper's main result, we pose sufficient conditions for the dissipativity of deep neural dynamical systems.
We do so by leveraging an exact local linearization of deep neural networks (DNNs), yielding pointwise affine maps (PWA).  
  The obtained PWA form yields a seamless way to analyze neural networks' dissipativity via energy stored in their local linear operator norms and energy supplied via their aggregate bias temrs. 
 Guided by the derived dissipativity conditions, we review a set of design practices for enforcing the stability of deep neural dynamics.
 We use the presented theory in numerical case studies to analyze the effects of weights, activations, bias terms, and depth on the dissipativity of overall dynamics of deep neural networks. 
  Additionally, we demonstrate the utility of the proposed method on the stability analysis of neural surrogate models of two nonlinear systems.
 We believe that the presented method can be a useful tool for designing and analyzing deep neural networks used for modeling and control of dynamical systems. 
 The focus of the future work will be on the closed-loop stability of systems with neural feedback policies.

\section*{Appendix}

\subsection{Mathematical Background} \label{sec:lin_alg}

% \begin{definition}
% Spectral radius of a matrix $\mathbf{A} \in \mathbb{R}^{n\times n}$  is the largest absolute value of its eigenvalues:
% \begin{equation}
% \label{eq:spectral_radius}
%     \rho(\mathbf{A}) = \max_{i=1,\ldots,n} |\lambda_i|
% \end{equation}
% \end{definition}
\begin{definition}
Induced operator norm of a matrix $\mathbf{A} \in \mathbb{R}^{n\times m}$ is defined as:
 \begin{equation}
 ||\mathbf{A}||_p =  \max_{\mathbf{x} \neq 0} \frac{ ||\mathbf{A}\mathbf{x}||_p}{||\mathbf{x}||_p}  =   \max_{\|\mathbf{x}\|_p = 1} ||\mathbf{A}\mathbf{x}||_p, \ \ \forall \mathbf{x} \in \mathcal{X},
   \label{eq:operator_norm}
 \end{equation}
%  \begin{equation}
%  ||\mathbf{A} \mathbf{x}|| \leq c ||\mathbf{x}||, \ \ \forall \mathbf{x} \in \mathcal{X}
%   \label{eq:operator_norm}
%  \end{equation}
 where $\mathcal{X}$ is a compact normed vector space, and  $|| \cdot ||_p: \mathbb{R}^{n} \to \mathbb{R}$ represents vector $p$-norm inducing the matrix norm $ || \mathbf{A} ||_p: \mathbb{R}^{n \times m} \to \mathbb{R}$. 
 The matrix norm is sub-additive:
  \begin{equation}
  \label{eq:operator_norm_subadd}
|| \mathbf{A} + \mathbf{B} ||_p \le || \mathbf{A}||_p + ||\mathbf{B} ||_p.
\end{equation} 
%  The matrix operator norm gives an upper bound of the spectral radius:
%  \begin{equation}
%   \label{eq:operator_norm_bound}
% \rho(\mathbf{A}) \le || \mathbf{A} ||_p.
% \end{equation} 
\end{definition}
\begin{definition}
Induced $p$-norm $||\cdot||_p: \mathbb{R}^{n \times m} \to \mathbb{R}$ is called submultiplicative if it satisfies~\cite{matrix_norms1983}:
 \begin{equation}
  \label{eq:operator_norm_submultiplicative}
|| \mathbf{A} \mathbf{B} ||_p \le || \mathbf{A}||_p  ||\mathbf{B} ||_p.
\end{equation} 
\end{definition}
\begin{theorem}
Lets have a vector norm $||\cdot||_p: \mathbb{R}^{n} \to \mathbb{R}$ defined for all $n$ with corresponding induced matrix norm defined as~\eqref{eq:operator_norm},
 then the submultiplicatity of the matrix norm~  \eqref{eq:operator_norm_submultiplicative} is satisfied for any $\mathbf{A} \in \mathbb{R}^{m \times k}$ and
  $\mathbf{B} \in \mathbb{R}^{k \times n}$~\cite{LinAlg2020}.
  \label{thm:submult}
\end{theorem}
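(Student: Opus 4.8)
The plan is to reduce the statement to a single elementary consequence of the definition of the induced norm~\eqref{eq:operator_norm}, namely the pointwise bound $||\mathbf{A}\mathbf{v}||_p \le ||\mathbf{A}||_p\,||\mathbf{v}||_p$, and then to apply that bound twice. First I would establish this bound directly from~\eqref{eq:operator_norm}: for any $\mathbf{v} \neq \mathbf{0}$ the defining maximum gives $||\mathbf{A}\mathbf{v}||_p/||\mathbf{v}||_p \le ||\mathbf{A}||_p$, since $||\mathbf{A}||_p$ is precisely the supremum of this ratio over all nonzero vectors; multiplying through by $||\mathbf{v}||_p$ yields $||\mathbf{A}\mathbf{v}||_p \le ||\mathbf{A}||_p\,||\mathbf{v}||_p$, and the case $\mathbf{v} = \mathbf{0}$ is trivial since both sides vanish.

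With this bound in hand, I would take an arbitrary unit vector $\mathbf{x} \in \mathbb{R}^n$ with $||\mathbf{x}||_p = 1$ and chain the bound through the grouping $\mathbf{A}\mathbf{B}\mathbf{x} = \mathbf{A}(\mathbf{B}\mathbf{x})$:
\begin{equation}
||\mathbf{A}\mathbf{B}\mathbf{x}||_p \le ||\mathbf{A}||_p\,||\mathbf{B}\mathbf{x}||_p \le ||\mathbf{A}||_p\,||\mathbf{B}||_p\,||\mathbf{x}||_p = ||\mathbf{A}||_p\,||\mathbf{B}||_p.
\end{equation}
Because this holds for every unit $\mathbf{x}$, taking the maximum over $||\mathbf{x}||_p = 1$ using the second characterization in~\eqref{eq:operator_norm} delivers $||\mathbf{A}\mathbf{B}||_p \le ||\mathbf{A}||_p\,||\mathbf{B}||_p$, which is exactly the submultiplicativity~\eqref{eq:operator_norm_submultiplicative}.

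The only point requiring care is the bookkeeping of dimensions: the three vectors $\mathbf{x} \in \mathbb{R}^n$, $\mathbf{B}\mathbf{x} \in \mathbb{R}^k$, and $\mathbf{A}\mathbf{B}\mathbf{x} \in \mathbb{R}^m$ live in spaces of three different sizes, so the pointwise bound is being invoked for the induced norms on $\mathbb{R}^{m\times k}$ and on $\mathbb{R}^{k\times n}$ separately. This is precisely why the hypothesis posits a vector norm $||\cdot||_p$ defined for all $n$: it guarantees a compatible family of induced matrix norms across all three dimensions, so each maximum in the chain is taken over the appropriate unit sphere. I do not anticipate any genuine obstacle, since the result is a two-line application of the definition; making the dimension compatibility explicit is the one thing that renders the chained inequality rigorous.
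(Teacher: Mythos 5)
Your proof is correct. Note that the paper itself offers no proof of Theorem~\ref{thm:submult}: it is stated as a known fact with a citation, so there is no in-paper argument to compare against. What you give is the standard textbook argument that the citation stands in for: derive the pointwise bound $||\mathbf{A}\mathbf{v}||_p \le ||\mathbf{A}||_p\,||\mathbf{v}||_p$ from the definition~\eqref{eq:operator_norm}, apply it twice through the grouping $\mathbf{A}(\mathbf{B}\mathbf{x})$, and take the maximum over the unit sphere. Your closing remark is also the right one to make explicit: the hypothesis that $||\cdot||_p$ is defined for every dimension is exactly what licenses invoking the induced norm on $\mathbb{R}^{m\times k}$ and on $\mathbb{R}^{k\times n}$ within one chain of inequalities, and this compatibility is what the paper implicitly relies on when it applies submultiplicativity to products of non-square layer weights in~\eqref{eq:Gelfand_norm}.
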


\begin{definition}
A function ${f}: \mathbf{R} \rightarrow \mathbf{R}$ is  Lipschitz continuous if there exists a positive real constant $K$ such that, for all real ${x}_1$ and ${x}_2$, following holds:
      \begin{equation}
  \label{eq:Lipschitz}
     |{f}({x}_1) - {f}({x}_2) | \le K | {x}_1 - {x}_2 |
 \end{equation}
\end{definition}

\begin{definition}
Given a metric space $(\mathcal{X},d)$, a mapping $T: \mathcal{X}\to \mathcal{X}$ is a contraction if there exists a constant $c \in [0, 1)$ and a metric $d$ such that following holds:
 \begin{equation}
  \label{eq:contraction}
      d(T( \mathbf{x_1}),T( \mathbf{x_2})) \le  c d( \mathbf{x_1}, \mathbf{x_2}), \  \forall  \mathbf{x_1},  \mathbf{x_2} \in   \mathcal{X}
\end{equation} 
% Contraction generalizes to
% maps $T: \mathcal{X}\to \mathcal{Y}$ between two metric spaces  $(\mathcal{X},d)$, and  $(\mathcal{Y},d')$ as:
%  \begin{equation}
%   \label{eq:contraction_2}
%       d'(T( \mathbf{x_1}),T( \mathbf{x_2})) \le  c d( \mathbf{x_1}, \mathbf{x_2}), \  \forall  \mathbf{x_1},  \mathbf{x_2} \in   \mathcal{X}
% \end{equation} 
\end{definition}
% https://en.wikipedia.org/wiki/Banach_fixed-point_theorem

% \begin{definition}
% \label{def:affine_contract}
% An affine map $T(\mathbf{x}) =  \mathbf{A} \mathbf{x} + \mathbf{b}$ with $\mathbf{x} \in \mathbb{R}^n$ and  metric $d = ||\cdot||_p$ is a contraction mapping 
% if the spectral norm of its linear component $\mathbf{A}$  is less than one, i.e. $ ||\mathbf{A}||_2 < 1$.
% \end{definition}
% % https://sun4.vaniercollege.qc.ca/~iti/proj/David.pdf

\begin{theorem}
\label{thm:banch}
Banach fixed-point theorem. Consider a  non-empty complete metric space $(\mathcal{X},d)$, then every contraction~\eqref{eq:contraction} converges towards a unique fixed point  $T( \mathbf{x}_{\text{ss}}) = \mathbf{x}_{\text{ss}}$.
\end{theorem}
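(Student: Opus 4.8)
The plan is to establish existence and then uniqueness of the fixed point via the classical Picard iteration argument. First I would fix an arbitrary starting point $\mathbf{x}_0 \in \mathcal{X}$ and generate the iterates $\mathbf{x}_{n+1} = T(\mathbf{x}_n)$. Applying the contraction property~\eqref{eq:contraction} repeatedly gives, by induction on $n$, the geometric decay of consecutive distances
\begin{equation}
d(\mathbf{x}_{n+1}, \mathbf{x}_n) \le c^n \, d(\mathbf{x}_1, \mathbf{x}_0).
\end{equation}
This single estimate is the engine of the whole proof, and it relies only on $T$ mapping $\mathcal{X}$ into itself so that the iteration is well defined.

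The central step is to show that $\{\mathbf{x}_n\}$ is a Cauchy sequence. For $m > n$ I would chain the triangle inequality across the intermediate iterates and sum the resulting geometric series,
\begin{equation}
d(\mathbf{x}_m, \mathbf{x}_n) \le \sum_{k=n}^{m-1} d(\mathbf{x}_{k+1}, \mathbf{x}_k) \le \frac{c^n}{1-c} \, d(\mathbf{x}_1, \mathbf{x}_0),
\end{equation}
where the hypothesis $c \in [0,1)$ is used twice: once to guarantee convergence of the geometric series, and once to force the bound to zero as $n \to \infty$ uniformly in $m$. Because $(\mathcal{X}, d)$ is complete, this Cauchy sequence converges to a limit $\mathbf{x}_{\text{ss}} \in \mathcal{X}$; completeness is precisely what is needed at this point and cannot be dispensed with.

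To confirm that $\mathbf{x}_{\text{ss}}$ is a fixed point, I would use that every contraction is Lipschitz continuous with constant $c$ and hence continuous, so passing to the limit in $\mathbf{x}_{n+1} = T(\mathbf{x}_n)$ yields $T(\mathbf{x}_{\text{ss}}) = \mathbf{x}_{\text{ss}}$. Uniqueness then follows immediately: if $\mathbf{x}$ and $\mathbf{y}$ are both fixed points, then $d(\mathbf{x}, \mathbf{y}) = d(T(\mathbf{x}), T(\mathbf{y})) \le c \, d(\mathbf{x}, \mathbf{y})$, and since $c < 1$ this inequality can only hold if $d(\mathbf{x}, \mathbf{y}) = 0$, i.e.\ $\mathbf{x} = \mathbf{y}$.

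I expect the main obstacle to be the Cauchy estimate rather than any conceptual difficulty, specifically the geometric-series bookkeeping together with the need to make the tail bound uniform in $m$ so that it genuinely vanishes as $n$ grows. The continuity argument and the uniqueness step are routine once the decay rate $c < 1$ and completeness of $\mathcal{X}$ are in hand.
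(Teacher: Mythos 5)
Your proof is the standard Picard iteration argument and is correct in every step: the geometric decay estimate, the triangle-inequality chaining to a Cauchy bound uniform in $m$, the use of completeness for existence of the limit, Lipschitz continuity to pass to the limit in $\mathbf{x}_{n+1} = T(\mathbf{x}_n)$, and the contraction inequality for uniqueness. The paper states this theorem as classical background (Theorem~\ref{thm:banch}) without proof, so there is no alternative argument to compare against; yours is the canonical one.
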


\begin{definition}
{Asymptotic stability of a dynamical system}: 
for any bounded initial condition $\mathbf{x}_0$, the state of the dynamical system converges to its equilibrium point $\bar{\mathbf{x}}$:
  \begin{equation}
  \label{eq:asymptotic}
     || \mathbf{x}_0 - \bar{\mathbf{x}} || < \epsilon, \epsilon>0 \implies \lim_{t \rightarrow \infty} ||\mathbf{x}_t|| =  \bar{\mathbf{x}}
 \end{equation}
  \label{def:asymptotic}
\end{definition}

% \begin{definition} \label{def:stability}
% A dynamical system is said to be {\bf locally stable} around an equilibrium point $\bar{\mathbf{x}}$ 
% when there exists $\epsilon > 0$, such that if the initial state of the system $\mathbf{x}_0$ is within $\epsilon$ of $\bar{\mathbf{x}}$, then all future states $\mathbf{x}_t$ remain within some $\delta>0$ of $\bar{\mathbf{x}}$:
%   \begin{equation}
%   \label{eq:stability}
%      || \mathbf{x}_0 - \bar{\mathbf{x}} || < \epsilon \implies ||\mathbf{x}_t - \bar{\mathbf{x}}|| < \delta, \ \forall t \ge 0
%  \end{equation}
%  and {\bf globally stable} if this is true for every $\epsilon>0$.
% %  \end{equation}
% \end{definition}

\subsection{Details on Experimental Setup} \label{sec:appendix_f}

As a holistic analysis of the dynamics and stability of neural networks, we employ a grid search to compare across the axes of structured linear maps, eigenvalue constraints, activation functions, network depths, and biases. For each combination of factorization, spectral constraints, and network depths, we swap out activation functions and toggle bias usage while keeping model parameters fixed to control for the effects of initializations when comparing how these latter two hyperparameters affect network dynamics and stability.
\textbf{Linear Map Factorizations and Constraints $(\lambda_\text{min}, \lambda_\text{max})$}
\begin{itemize}
    \item Gershgorin Disc, real eigenvalues: $(-1.50, -1.10)$, $(0.00, 1.00)$, $(0.99, 1.00)$, $(0.99, 1.01)$, $(0.99, 1.10)$, $(1.00, 1.01)$, $(1.10, 1.50)$
    \item Gershgorin Disc, complex eigenvalues: $(-1.50, -1.10)$, $(0.00, 1.00)$, $(0.99, 1.00)$, $(0.99, 1.01)$, $(0.99, 1.10)$, $(1.00, 1.01)$, $(1.10, 1.50)$
    \item Spectral: $(-1.50, -1.10)$, $(0.00, 1.00)$, $(0.99, 1.00)$, $(0.99, 1.01)$, $(0.99, 1.10)$, $(1.00, 1.01)$, $(1.10, 1.50)$
    \item Perron-Frobenius: $(1.00, 1.00)$
    \item Unstructured: no spectral constraints
\end{itemize}
\textbf{Network Depth} 1, 4, 8 layers \\
\textbf{Activation Functions} ReLU, SELU, GELU, Tanh, logistic sigmoid, Softplus \\
\textbf{Bias} Enabled, disabled
\\ 
\noindent Overall, we generated 828 different models for examination. From this sampling, we then selected the models which were most relevant to our analysis based on their dynamical behavior. Table \ref{tab:fig_hparams} outlines the hyperparameters of each of the curated examples shown in the figures.
% TODO(lltt): incorporate multirow without breaking the entire TeX
\begin{table}[h]
    \centering
    \begin{tabular}{llllll} \toprule
    Figure                     & Weights   & Act. & Layer & Eigenvalue  & Bias \\ 
          &    &  &  &  Range &  \\ \midrule
    Fig. \ref{fig:weights}     & Gersh & Tanh       & 8         & (0.99, 1.00)     & N    \\
                               & Spectral        & Tanh       & 8         & (0.99, 1.10)     & N    \\
                               & Gersh & Softplus   & 8         & (1.00, 1.01)     & Y    \\ \midrule
    Fig. \ref{fig:spectral_radii} & Gersh & ReLU       & 4         & (0.00, 1.00)     & N    \\
                               & Gersh & Tanh       & 4         & (0.00, 1.00)     & N    \\
                               & Spectral        & SELU       & 4         & (0.00, 1.00)           & N    \\ 
                               & Gersh & Sigmoid    & 4         & (0.00, 1.00)           & N    \\ \midrule
    Fig. \ref{fig:bias}        & Gersh & ReLU       & 1         & (0.00, 1.00)           & Y    \\
                               & Gersh & ReLU       & 1         & (0.00, 1.00)           & N    \\ \midrule
    Fig. \ref{fig:depth}       & Gersh & GELU       & 1         & (0.00, 1.00)           & N    \\
                               & Gersh & GELU       & 4         & (0.00, 1.00)           & N    \\
                               & Gersh & GELU       & 8         & (0.00, 1.00)           & N    \\
                               & Gersh & GELU       & 1         & (0.99, 1.00)        & N    \\
                               & Gersh & GELU       & 4         & (0.99, 1.00)        & N    \\
                               & Gersh & GELU       & 8         & (0.99, 1.00)        & N    \\
                               & Gersh & GELU       & 1         & (1.20, 1.40)      & N    \\
                               & Gersh & GELU       & 4         & (1.20, 1.40)      & N    \\
                               & Gersh & GELU       & 8         & (1.20, 1.40)      & N    \\ \midrule
    Figs.  & Gersh & ReLU & 1 & (0.00, 1.00) & N \\
             \ref{fig:attractors},\ref{fig:AstarRegions},\ref{fig:attractors_spectra}      & Spectral        & Tanh       & 8         & (0.99, 1.10)      & N    \\
                & PF & ReLU       & 1         & (1.00, 1.00)           & N    \\
               & Gersh & SELU       & 1         & (-1.50, -1.10)     & Y    \\
                & Spectral        & SELU       & 8         & (0.99, 1.10)      & N    \\
                & Gersh & Softplus   & 1         & (0.99, 1.10)      & N    \\
    \bottomrule
    \end{tabular}
    \caption{Layer factorizations, activation functions, network depths, spectral constraints, and bias usage of the models depicted in each figure.}
    \label{tab:fig_hparams}
\end{table}

\section*{Acknowledgements}

We acknowledge the engineering work of our former colleague Mia Skomski who helped us with the empirical studies presented in this paper.
We want to thank Wenceslao Shaw Cortez and James Koch for reviewing the manuscript and helping to improve the technical quality of the presented ideas.
Also, we want to thank our anonymous reviewers for their constructive feedback and suggestions.

This research was supported by 
the Mathematics for Artificial Reasoning in Science (MARS) initiative
via the Laboratory Directed Research and Development (LDRD) investments at Pacific Northwest National Laboratory (PNNL). PNNL is a multi-program national laboratory
operated for the U.S. Department of Energy (DOE) by Battelle Memorial Institute under Contract
No. DE-AC05-76RL0-1830.

\bibliographystyle{IEEEtran}
\bibliography{references}

\end{document}